\documentclass[]{article}
\usepackage[utf8]{inputenc}
\usepackage[T1]{fontenc}
\usepackage{xcolor}
\usepackage{graphicx}
\usepackage{algorithm}
\usepackage{algpseudocode} 

\usepackage{amsfonts, amssymb, amsmath, amsthm}
\usepackage[margin=1in]{geometry}
\usepackage{natbib}

\title{Training Dynamics of Softmax Self-Attention: \\ Fast Global Convergence via Preconditioning}

\author{Gautam Goel\thanks{Simons Institute, UC Berkeley} \and Mahdi Soltanolkotabi\thanks{  Department of Electrical and Computer Engineering, University of Southern California} \and Peter Bartlett\footnotemark[1] }

\date{}

\newtheorem{theorem}{Theorem}

\newtheorem{lemma}{Lemma}

\DeclareMathOperator{\Var}{Var}
\DeclareMathOperator*{\argmin}{argmin}

\DeclareMathOperator*{\Tr}{Tr}
\DeclareMathOperator*{\Cov}{Cov}
\newcommand{\expect}{\mathbb{E}}

\newcommand{\op}{\mathrm{op}}

\begin{document}

\maketitle

\begin{abstract}
We study the training dynamics of gradient descent in a softmax self-attention layer trained to perform linear regression and show that a simple first-order optimization algorithm can converge to the globally optimal self-attention parameters at a geometric rate. Our analysis proceeds in two steps. First, we show that in the infinite-data limit the regression problem solved by the self-attention layer is equivalent to a nonconvex matrix factorization problem. Second, we exploit this connection to design a novel ``structure-aware" variant of gradient descent which efficiently optimizes the original finite-data regression objective. Our optimization algorithm features several innovations over standard gradient descent, including a preconditioner and regularizer which help avoid spurious stationary points, and a data-dependent spectral initialization of parameters which lie near the manifold of global minima with high probability. 
\end{abstract}


\section{Introduction}
The self-attention mechanism is a neural architecture originally proposed by \cite{bahdanau2014neural} for machine translation. It was subsequently adopted by \cite{vaswani2017attention} to form the basis of 
the Transformer architecture, which underlies many recent advances in natural language processing \citep{openai2023gpt} and computer vision \citep{dosovitskiy2020image}. It has proven to be remarkably versatile, and can additionally be trained to mimic various algorithms from statistics, optimization, and machine learning \citep{garg2022can}. Despite its numerous empirical successes, our theoretical understanding of the self-attention mechanism remains poor. Many of the prior works on the theoretical behavior of self-attention are conditional; they prove that \textit{if} the self-attention parameters could be optimized to their globally optimal values then the  resulting model would exhibit strong performance on various downstream tasks, but they do not establish \textit{when} such optimization is possible or \textit{how} it should be performed, e.g., \citep{bai2024transformers, li2023transformers}.

A recent line of theoretical works \citep{zhang2024trained, ahn2024transformers, chen2024training} seek to understand the optimization dynamics of self-attention in the setting of random linear regression\footnote{The term `random linear regression' is a misnomer, since we are studying a setting where a nonlinear model is used to fit data which is generated by a planted linear model (i.e., nonlinear regression). We adopt this terminology to remain consistent with preexisting literature.} which was empirically investigated by \cite{garg2022can} and \cite{von2023transformers}. In this model, the covariates are drawn from a distribution, and the response variables are a noisy linear function of the covariates. The performance of a predictor is measured using the square loss. A natural question is whether a self-attention mechanism can be trained to accurately predict the label corresponding to a given covariate; even in this simple setting, this is not at all obvious due to the nonconvexity of the loss in the model parameters. While several of these papers derive various global convergence guarantees, the aforementioned theoretical works all suffer from two crucial drawbacks. First, these works only study a simplified, linearized variant of self-attention instead of the original softmax attention mechanism of \cite{bahdanau2014neural}. Second, all of these works only study the optimization dynamics in an asymptotic limit where either the learning algorithm has access to infinitely many samples, or has an unlimited budget of gradient iterations to converge to optimality; none of these works quantify how model performance depends on the number of samples or the compute budget. In this paper we address both of these challenges.

\subsection{Main Contributions}
We consider a setting where the number of self-attention parameters (e.g., model size) is fixed, and ask how quickly the population loss decreases as the number of training samples $n$ and the number of gradient descent iterations $m$ are increased. Our contributions can be summarized as follows.

\begin{enumerate}
    \item In Section \ref{pop-loss-sec}, we study the population loss, i.e., the asymptotic limit of the training loss as $n$ approaches infinity. We show that this loss has a simple closed-form description and show that it is equivalent to a certain weighted matrix factorization loss. Using ideas from the matrix factorization literature, we propose a novel regularizer of the population loss and show that the regularized population loss has infinitely many global minima which together form a smooth connected manifold. While this loss is globally nonconvex, we prove that it exhibits  one-point strong convexity and one-point smoothness near the manifold of global minima in a certain geometry in which the inner product between two points is weighted by the covariance of the data distribution. 
    
    \item In Section \ref{neural-scaling-law-sec}  we leverage the geometric results of Section \ref{pop-loss-sec} to design a ``structure-aware" gradient descent algorithm which is able to effectively optimize the training loss when given access to a gradient oracle which evaluates the expectation of the finite-sample training loss over a fresh batch of samples. Our algorithm features several innovations over standard optimization algorithms such as SGD and Adam. First, we choose a data-dependent spectral initialization of parameters which lie near the manifold of global optima with high probability. Second, our algorithm incorporates the regularizer obtained in Section \ref{pop-loss-sec} which helps it avoid spurious stationary points and a preconditioner which reflects the reweighting of the inner product. Our algorithm can be viewed as evolving each of the self-attention parameters in the geometry most natural to that parameter.
    
    We next present our main result: a mathematically rigorous scaling law which describes how the population loss decreases as the number of samples $n$ and the number of gradient descent steps $m$ increase. We decompose the excess risk of the estimator found by our optimization algorithm into two pieces: a statistical bias (which arises because the finite-data objective minimized by our algorithm differs from the infinite-sample limit)  and the optimization error (which arises because our algorithm can take only finitely many gradient descent steps). We show that the statistical bias decreases at a $n^{-2}$ rate, up to logarithmic factors, and the optimization error decays exponentially in $m$. To the best of our knowledge, this is the first result which establishes fast (i.e., geometric rate) global convergence of a first-order method on a softmax self-attention training objective in any setting.  We support our theoretical results with experiments in Appendix \ref{experiments-sec}. 

    

    
\end{enumerate}

\section{Related Work}

\textbf{Training dynamics in self-attention.} The self-attention mechanism was first proposed in \cite{bahdanau2014neural}. It forms the basis for the subsequent Transformer architecture proposed by \citep{vaswani2017attention}; we refer to \citep{phuong2022formal} for an accessible introduction to Transformers and self-attention. Since its introduction, much theoretical work has been devoted to understanding the training dynamics of self-attention. Most relevant to our work is \citep{zhang2024trained}, which also studied the dynamics of gradient flow in a regression task with Gaussian data using the square loss. Their results are similar to ours in that they
also establish a global convergence result. However, our results are distinct in two key ways. First, \cite{zhang2024trained} only consider a linearized variant of self-attention proposed by \cite{von2023transformers}, where the softmax function is removed entirely, while we study training dynamics in the considerably more challenging setting of a nonlinear softmax self-attention mechanism. Second, \cite{zhang2024trained} focus on establishing global convergence of the population loss, which is the loss in the asymptotic, infinite-sample limit. Our aim in this paper is to instead understand the training dynamics on the empirical loss. The optimization landscape of self-attention applied to random linear regression was also studied by \cite{ahn2024transformers}, who characterized the optimal parameters for the self-attention mechanism but did not prove convergence of training dynamics to these optimal parameters. These results are also only for linear self-attention, although we note that prior empirical work of \cite{ahn2023linear} suggests that a theory of optimization behavior of linear attention may generalize to broader contexts. We also note the more recent work of \cite{chen2024training}, which also studied training dynamics in a softmax self-attention layer trained with random linear models and obtained global convergence results. However, the convergence rate established by \cite{chen2024training} can be exponential in the embedding dimension of the self-attention layer. In contrast, we establish a fast geometric convergence rate in this paper.   Another line of work studies training dynamics for classification  \citep{tarzanagh2023transformers, tarzanagh2023max, thrampoulidis2024implicit} and topic modeling \citep{li2023transformers}. 

\noindent \textbf{Structure-aware optimization.} A recent line of ``structure-aware" optimization algorithms (e.g., Muon by \cite{jordan6muon}, Shampoo by \cite{gupta2018shampoo}, Soap by \cite{vyas2024soap}) use preconditioning to update the self-attention parameters and generally outperform classical algorithms such as Adam. The optimization algorithm we present also uses preconditioning, where the preconditioner is derived from first principles, based on the structure of the population loss. 

\noindent \textbf{Proof techniques.} One of the key insights of this paper is that the population loss is in fact equivalent to a certain matrix factorization loss. This loss, we show, can be effectively optimized by a first-order optimization algorithm which employs regularization to help it avoid spurious stationary points. The design and analysis of this algorithm are heavily inspired by the paper of \cite{tu16}. We also make extensive use of the Gaussian Poincar\'e inequality to bound the discrepancy between the empirical gradient and the population gradient; we refer to the excellent monograph of \cite{boucheron2003concentration} for background on such inequalities.


\section{Model}
We study regression using the square loss, where the covariates are $d$-dimensional and the response variables are $p$-dimensional. Specifically, we consider a setting where we are given $n$ samples $\{x_i, y_i\}_{i = 1}^n$, where each $x_i$ is drawn independently from $\mathcal{N}(0, \Sigma)$ and each response $y_i$ has the form $y_i = Mx_i + z_i$ for some fixed weight matrix $M \in \mathbb{R}^{p \times d}$. The noise variables $\{z_i\}_{i = 1}^n$ are drawn i.i.d. and independently from the covariates from $\mathcal{N}(0, \Omega)$. Our goal is to learn a prediction rule which, when given a fresh covariate $x \sim \mathcal{N}(0, \Sigma)$, generates a prediction $\hat{y}$ which is close to $Mx$. We consider the family of regression functions consisting of single-layer single-head softmax self-attention functions; such functions are parameterized by $\theta = (A, B)$, where $A \in \mathbb{R}^{p \times d}$ and $B \in \mathbb{R}^{d \times d}$. We think of $\theta$ as the vertical concatenation $A$ and $ B$, so that $\theta \in \mathbb{R}^{(p+d) \times d }$. Given a fresh covariate $x \in \mathbb{R}^d$, such functions predict a corresponding label $\hat{y}$ given by
$$\hat{y} = A \left( \frac{  \sum_{j=1}^n \exp(x^{\top}B x_j)x_j}{ \sum_{j=1}^n \exp(x^{\top}B x_j)} \right).$$
In other words, the prediction $\hat{y}$ is the image of a convex combination of the covariates $\{x_j\}_{j = 1}^n$ under the linear map $A$, where the weights of this convex combination are determined by the nonlinear softmax function parameterized by $B$. We note that in the language of the original self-attention paper \citep{vaswani2017attention}, the parameter $A$ is called the value matrix, while $B$ is the product of the key and query matrices; we do not use this terminology in this paper.

We define the in-sample empirical loss 
  \begin{equation} \label{empirical-loss-definition}
\hat{L}(\theta) = \frac{1}{2n} \sum_{i = 1}^n \left\|A \left(\frac{  \sum_{j=1}^n \exp(x_i^{\top}B x_j)x_j}{\sum_{j=1}^n \exp(x_i^{\top}B x_j)} \right)- y_i \right\|_2^2.
\end{equation}
We also define the population loss
\begin{equation} \label{population-loss-definition}
L(\theta) =  \frac{1}{2} \expect_{x_1, z_1}  \left\|A \left(\frac{ \expect_{x_2} [\exp(x_1^{\top}B x_2)x_2]}{\expect_{x_2}[\exp(x_1^{\top}B x_2)]} \right) - (Mx_1 + z_1) \right\|_2^2,
\end{equation}
where $x_1$ and $x_2$ are sampled independently from $\mathcal{N}(0, \Sigma)$ and $z_1$ is sampled from $\mathcal{N}(0, \Omega)$. The population loss has the following intuitive interpretation. When $n$ is large, we expect that each of the summations appearing in the numerator and the denominator of the predictor $\hat{y}$ should approach their respective expectations. Averaging over the individual losses, we obtain the population loss.

The empirical loss is a random function of the parameter $\theta$, depending on the realizations of the random variables $\{x_i, y_i\}_{i = 1}^n$, whereas the population loss is a deterministic function of $\theta$.  We emphasize that the empirical loss is not merely a sample average of the population loss; in other words, it is not true that $$\expect [\hat{L}(\theta)] = L(\theta)$$ for any finite $n$. This discrepancy is due to the fact that $\expect[\hat{L}(\theta)]$ involves an expectation of a ratio, whereas our definition of the population loss involves a ratio of expectations. Theorem \ref{empirical-gradient-approximation-theorem} shows, however, that $\expect [\nabla \hat{L}(\theta)]$ converges pointwise to $\nabla L(\theta)$ as $n$ tends to infinity. 

\subsection{Gradient oracle model.}
We study a setting where the optimization algorithm has access to the $n$ samples $\{x_i, y_i \}_{i = 1}^n$ and a \textit{gradient oracle}, which allows the algorithm to evaluate $\expect[\nabla \hat{L}(\theta)]$, where the expectation is over a fresh batch of $n$ samples. More generally, the oracle can evaluate $\expect[\nabla \hat{Q}(\theta)]$, where $\hat{Q}(\theta) = \hat{L}(\theta) + \hat{R}(\theta)$ and $\hat{R}(\theta)$ is a regularizer which depends solely on the samples and not on the true parameters $M, \Sigma, \Omega$. In practice, we expect that the empirical gradient is usually close to the expected empirical gradient when $n$ is large: $$\nabla \hat{Q}(\theta) \approx \expect[\nabla \hat{Q}(\theta)]. $$ However, establishing this concentration is extremely difficult due to the nonlinear structure of the empirical gradient, and we leave the study of such concentration for future work. 

\subsection{Assumptions.}

\begin{enumerate}
    \item[\textbf{A1.}] We  assume that $\Sigma$ is full-rank, so that $\sigma_d(\Sigma) > 0$. 

    \item[\textbf{A2.}] We assume that $p \geq d$
    and that the matrix $M\Sigma^{1/2}$ has full column rank. In particular, this implies that the smallest singular value of $M \Sigma^{1/2}$ is $\sigma_d(M\Sigma^{1/2})$, and that this singular value is strictly positive.

    \item[\textbf{A3.}] We assume that $\|M\Sigma^{1/2}\|_{\mathrm{op}} < \frac{1}{16}.$ The significance of this assumption is that it guarantees integrability of a certain exponential function in a neighborhood of the globally optimal self-attention parameters; we require this integrability to show that the empirical gradient is close to the population gradient. We refer to Lemma \ref{integrability-lemma} for details. 
\end{enumerate}

\subsection{Notation}
We write $f(n) \lesssim g(n)$ to mean $f(n) \leq C g(n)$ for all sufficiently large $n$, where $C$ is a constant with polynomial dependence on $p, d,\, \|M\|_{\mathrm{op}}$ and $\|\Sigma\|_{\mathrm{op}}$. Intuitively, this means that $f(n)$ cannot grow faster than $g(n)$ asymptotically, as $n$ tends to infinity. We let $\mathbb{O}_d$ denote the set of $d \times d$ orthogonal matrices. We let $\| A \|_F$ and $\|A\|_{\mathrm{op}}$ denote the Frobenius norm and $\ell_2 \rightarrow \ell_2$ operator norm of a matrix $A$, respectively.  The ordered singular values of an $m \times n$ matrix $A$ are denoted by $\{\sigma_i(A)\}_{i =1}^{\min(m, n)}$, with $\sigma_1(A)$ being the largest. We let $\kappa(A)$ denote the condition number of $A$.  Let $X = (x_1, \ldots x_n)$ denote the set of covariates. We let $\expect_i [f(X)]$ denote the expectation of $f$ with respect to $x_i$ and let 
$\expect_{-i} [f(X)]$ denote the expectation of $f$ with respect to all covariates other than $x_i$. We define $\Var_i[f(X)], \Var_{-i}[f(X)]$ and $\Cov_i[f(X), g(X)], \Cov_{-i}[f(X), g(X)]$ analogously.

\section{Structure of the population loss} \label{pop-loss-sec}


We characterize the population loss $L(\theta)$ in closed form, and show that a regularized variant of the population loss obeys certain convexity and smoothness properties near its minima:

\begin{theorem} \label{population-loss-theorem}
The population loss $L(\theta)$ and the regularized population loss $Q(\theta)$ have the following properties:
\begin{enumerate}
    \item The population loss can be written as $$L(\theta) = L^{\star} +  \frac{1}{2} \left\| A \Sigma B^{\top} \Sigma^{1/2} - M\Sigma^{1/2} \right \|_F^2,$$ where $L^{\star} = \frac{1}{2}\Tr(\Omega)$ is the irreducible loss. 

    \item Define the regularized population loss
    $$Q(\theta) = L(\theta) + R(\theta),$$
    where we set $$R(\theta) = \frac{1}{8} \left\|\Sigma^{1/2}(A^{\top}A - B^{\top}\Sigma B)\Sigma^{1/2} \right\|_F^2.$$
    Let $U\Gamma V^{\top}$ be a singular value decomposition of $M\Sigma^{1/2},$ where $U \in \mathbb{R}^{p \times d}$ and $V \in \mathbb{R}^{d \times d}$ satisfy $U^{\top}U = V^{\top}V = I_d$ and $\Gamma \in \mathbb{R}^{d \times d}$ is diagonal and positive definite.
    Let $\mathcal{S}$ be the smooth manifold consisting of points of the form $$\begin{bmatrix} A \\ B \end{bmatrix} = \begin{bmatrix} U\Gamma^{1/2}J^{\top}\Sigma^{-1/2} \\ \Sigma^{-1/2}V\Gamma^{1/2}J^{\top}\Sigma^{-1/2}\end{bmatrix} $$ for some $J \in \mathbb{O}_d$. Every point $\theta \in \mathcal{S}$ is a global minimum of $Q(\cdot)$, and in particular satisfies $L(\theta) = L^{\star}$ and $R(\theta) = 0$. 

    \item  
    Define the extended covariance matrix
    $$P = \begin{bmatrix} I_p & 0 \\ 0 & \Sigma \end{bmatrix}.$$ Define the $P$-weighted inner product  $$\left \langle \theta_1, \theta_2 \right \rangle_P = \Tr(\theta_1^{\top} P \theta_2)$$ and the associated $P$-norm 
    $$\|\theta\|_P = \sqrt{\langle \theta, \theta  \rangle_P}.$$ 
    The regularized population loss $Q(\theta)$ exhibits the following ``one-point strong convexity" and ``one-point smoothness" properties. Let $\theta^{\star}$ denote the projection in the $P$-norm of $\theta$ onto $\mathcal{S}$. Let
    $$\varepsilon_0 =   \min \left(1,  \, \frac{\sqrt{K_0}}{\sqrt{3 K_1 \|\Sigma\|_{\mathrm{op}}}}, \, \frac{(K_0/2)^{1/4}}{  \sqrt{\|\Sigma\|_{\mathrm{op}}}} \right).$$ For all $\theta$ which are $\varepsilon_0$-close to $\mathcal{S}$ in the $P$-norm, the following bounds hold:
    \begin{equation} \label{local-strong-convexity-property}
        \alpha \|\theta - \theta^{\star}\|_P^2 \leq \langle P^{-1} \nabla Q(\theta), \theta - \theta^{\star} \rangle_P, \qquad \text{(one-point strong convexity)}
    \end{equation}
    and
    \begin{equation} \label{local-smoothness-property}
         \|P^{-1} \nabla Q(\theta) \|_P^2 \leq \beta \|\theta - \theta^{\star}\|_P^2, \qquad \text{(one-point strong smoothness)}
    \end{equation}
    where we define $$\alpha = K_0, \qquad \beta = (14 + 7\kappa^2(\Sigma)) K_1^2 + 21 \|\Sigma\|_{\mathrm{op}}^2 K_1 + 7\|\Sigma\|_{\mathrm{op}}^4$$
and $K_0, K_1$ are defined as in Lemma \ref{theta-star-lemma}.
\end{enumerate}
\end{theorem}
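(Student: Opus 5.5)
We prove the three parts in order; the last is the substantive one.

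\textbf{Part 1 (closed form of $L$).} Fix $x_1$. Since $x_2 \sim \mathcal{N}(0,\Sigma)$, the map $x_2 \mapsto \exp(x_1^\top B x_2)$ is an exponential tilt. Write $u = B^\top x_1$. Then $\expect_{x_2}[e^{u^\top x_2}] = e^{u^\top \Sigma u/2}$, and differentiating in $u$ gives $\expect_{x_2}[e^{u^\top x_2} x_2] = \Sigma u\, e^{u^\top\Sigma u/2}$. The ratio is therefore exactly $\Sigma B^\top x_1$, so the prediction is the linear map $A\Sigma B^\top x_1$. Expanding the square and using the independence of $z_1$ and $x_1$ with $\expect z_1 = 0$ gives
\[
L(\theta) = \tfrac12\expect\|(A\Sigma B^\top - M)x_1\|^2 + \tfrac12\Tr\Omega = \tfrac12\|(A\Sigma B^\top - M)\Sigma^{1/2}\|_F^2 + L^\star .
\]

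\textbf{Part 2 (the manifold $\mathcal{S}$).} Substitute $A = U\Gamma^{1/2}J^\top\Sigma^{-1/2}$ and $B = \Sigma^{-1/2}V\Gamma^{1/2}J^\top\Sigma^{-1/2}$. Then
\[
A\Sigma B^\top\Sigma^{1/2} = U\Gamma^{1/2}J^\top J\Gamma^{1/2}V^\top = M\Sigma^{1/2},
\]
so $L(\theta) = L^\star$. For the regularizer,
\[
\Sigma^{1/2}A^\top A\Sigma^{1/2} = J\Gamma J^\top = \Sigma^{1/2}B^\top\Sigma B\Sigma^{1/2},
\]
using $U^\top U = V^\top V = I$. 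Hence $R(\theta) = 0$. Since $L \ge L^\star$ and $R \ge 0$ everywhere, every point of $\mathcal{S}$ is a global minimum of $Q$.

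\textbf{Part 3 (one-point convexity and smoothness).} The plan is to reduce to the standard Procrustes-flow setting of \cite{tu16}. Change variables to $\tilde A = A\Sigma^{1/2}$ and $\tilde B = \Sigma^{1/2}B\Sigma^{1/2}$. Then $\tilde B^\top\tilde B = \Sigma^{1/2}B^\top\Sigma B\Sigma^{1/2}$ and $A\Sigma B^\top\Sigma^{1/2} = \tilde A\tilde B^\top$. Set $W = [\tilde A;\tilde B]$, $X^\star = M\Sigma^{1/2}$ and $D = \mathrm{diag}(I,-I)$. The loss becomes
\[
Q = L^\star + \tfrac12\|\tilde A\tilde B^\top - X^\star\|_F^2 + \tfrac18\|W^\top D W\|_F^2,
\]
which is exactly the balanced asymmetric factorization objective. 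Its minimizer set is $\{W^\star J^\top\}$ with $W^\star = [U\Gamma^{1/2};V\Gamma^{1/2}]$, which is the image of $\mathcal{S}$.

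The difficulty is that this change of variables is not an isometry from the $P$-norm: $\|\theta\|_P^2 = \|A\|_F^2 + \|\Sigma^{1/2}B\|_F^2$, whereas $\|W\|_F^2 = \|A\Sigma^{1/2}\|_F^2 + \|\Sigma^{1/2}B\Sigma^{1/2}\|_F^2$. So the two norms differ by right-multiplication by $\Sigma^{1/2}$. The preconditioned gradient in $\theta$-coordinates is $P^{-1}\nabla Q = [\nabla_A Q;\ \Sigma^{-1}\nabla_B Q]$. Its $P$-inner product with $\theta - \theta^\star$ equals
\[
\Tr(\nabla_A Q^\top(A - A^\star)) + \Tr(\nabla_B Q^\top(B - B^\star)),
\]
which is the Euclidean pairing, invariant under the change of variables. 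Consequently the $\Sigma$ factors enter only through the norms on each side, and the main work is to compare $\|W - W^\star J^\top\|_F$ with $\|\theta - \theta^\star\|_P$, and to handle the mismatch between the $P$-projection $\theta^\star$ and the Frobenius projection in $W$-space. I expect the constants $K_0, K_1$ of Lemma \ref{theta-star-lemma} to encode exactly this comparison, presumably as bounds on $\sigma_d$ and $\sigma_1$ of $\theta^\star$ in the relevant geometry, together with the first-order optimality of the projection, namely symmetry of $(\theta^\star)^\top P(\theta - \theta^\star)$.

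For the strong-convexity inequality I would follow the \cite{tu16} scheme. Write $\Delta = \theta - \theta^\star$ and expand $\langle P^{-1}\nabla Q, \Delta\rangle_P$ into a part quadratic in $\Delta$ plus cubic and quartic remainders. The quadratic part is lower bounded by $2K_0\|\Delta\|_P^2$, using the regularizer to control the $D$-skew component and the projection's symmetry condition. The remainders are bounded by $K_1\|\Sigma\|_{\op}\|\Delta\|_P^3$ and $\|\Sigma\|_{\op}^2\|\Delta\|_P^4$. The choice of $\varepsilon_0$, through its terms $\sqrt{K_0/(3K_1\|\Sigma\|_{\op})}$ and $(K_0/2)^{1/4}/\sqrt{\|\Sigma\|_{\op}}$, then absorbs these remainders, leaving $\alpha = K_0$.

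For smoothness, compute $\nabla Q$ explicitly and write each residual, for instance $A\Sigma B^\top - M = \Delta_A\Sigma B^{\star\top} + A^\star\Sigma\Delta_B^\top + \Delta_A\Sigma\Delta_B^\top$. Multiply by factors bounded via $K_1$ and $\|\Sigma\|_{\op}$, and apply $(a_1 + \dots + a_7)^2 \le 7\sum a_i^2$, which explains the constant $7$ in $\beta$. The $\kappa^2(\Sigma)$ term arises from the $\Sigma^{-1}$ in the $B$-block of the preconditioner. The main obstacle is the lower bound on the quadratic term in the non-isometric geometry. If a direct argument fails, I would first try proving it in $W$-space via \cite{tu16} and transferring with the norm equivalence, with factors $\sigma_d(\Sigma)$ absorbed into $K_0$.
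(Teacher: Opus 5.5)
There is a genuine gap in Part 3; Parts 1 and 2 are correct and agree with the paper. In Part 3 your constants are read off from $\varepsilon_0$ and $\alpha$ rather than derived, and the one structural fact you commit to is wrong.

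\textbf{Wrong symmetry condition.} The $P$-projection does not make $\theta^{\star\top}P(\theta-\theta^\star)$ symmetric in general. Since $\mathcal{S}=\{\theta^\star\Sigma^{1/2}O\Sigma^{-1/2}: O\in\mathbb{O}_d\}$, its tangent space at $\theta^\star$ is $\{\theta^\star\Sigma^{1/2}K\Sigma^{-1/2}: K^\top=-K\}$. $P$-orthogonality of $\Delta$ to this space says instead that $\Delta^\top P\theta^\star\Sigma$ is symmetric; the paper gets $\Delta^\top P\theta^\star\Sigma=\Sigma^{1/2}J^\star\Lambda J^{\star\top}\Sigma^{1/2}$ from the Lagrange condition. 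That is exactly the matrix $Y=\Delta^\top P\theta^\star\Sigma$ in the quadratic term $\tfrac12\Tr(Y^2)$, and its symmetry is what makes this term equal $\tfrac12\|Y\|_F^2\ge 0$.

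\textbf{Missing use of the regularizer.} The paper writes $Q=L^\star+\tfrac18\|P^{1/2}(\theta\Sigma\theta^\top-2\,\mathrm{Sym}(M))P^{1/2}\|_F^2-\tfrac12\|M\Sigma^{1/2}\|_F^2$. In your coordinates this is $\tfrac18\|WW^\top-2\,\mathrm{Sym}(X^\star)\|_F^2$ up to constants. It then splits $2\,\mathrm{Sym}(M)=\theta^\star\Sigma\theta^{\star\top}-\tilde\theta^\star\Sigma\tilde\theta^{\star\top}$ with $\tilde\theta^\star=\mathrm{diag}(I_p,-I_d)\,\theta^\star$. Using $\theta^{\star\top}P\tilde\theta^\star=0$, the $\tilde\theta^\star$ part of $\langle\nabla Q(\theta),\Delta\rangle$ equals $\tfrac12\Tr(\theta^\top P\tilde\theta^\star\Sigma\tilde\theta^{\star\top}P\theta\Sigma)\ge 0$. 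What remains is $S+T$, with $S=\tfrac12\Tr(Y^2)+\tfrac12\Tr(\Delta^\top P\Delta\,\Sigma\theta^{\star\top}P\theta^\star\Sigma)$ and $T$ cubic and quartic in $\Delta$.

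\textbf{The fallback does not work as stated.} Because $\|\theta\|_P=\|W\Sigma^{-1/2}\|_F$, the $P$-projection is a $\Sigma^{-1}$-weighted Procrustes point, not the Frobenius projection used in \cite{tu16}. A one-point inequality at one base point does not transfer to another by norm equivalence. The correction $\langle\nabla Q(\theta),\theta'^\star-\theta^\star\rangle$ is of the same order $\|\Delta\|_P^2$ as the main term.

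\textbf{The constant $2K_0$ is false, and so is $\alpha=K_0$.} In $S$, the term $\tfrac12\|Y\|_F^2$ vanishes whenever the columns of $P^{1/2}\Delta$ are orthogonal to the range of $P^{1/2}\theta^\star$. The other term only gives $\tfrac12\sigma_d^2(P^{1/2}\theta^\star\Sigma)\|\Delta\|_P^2\ge\tfrac12K_0\|\Delta\|_P^2$, and this is attained. Take $d=1$, $p=2$, $\Sigma=1$, $M=(\gamma,0)^\top$ with $0<\gamma<1/16$, $\theta^\star=((\sqrt\gamma,0)^\top,\sqrt\gamma)$ and $\Delta=((0,t)^\top,0)$. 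Then $\theta^\star$ is the projection of $\theta=\theta^\star+\Delta$, and a direct computation gives $\langle P^{-1}\nabla Q(\theta),\Delta\rangle_P=\gamma t^2+\tfrac12t^4$. This is smaller than $K_0\|\Delta\|_P^2=2\gamma t^2$ for every $0<t^2<2\gamma$, so the inequality fails arbitrarily close to $\mathcal{S}$.

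The paper's proof makes the same slip at this step. It drops the factors $\tfrac12$ and lower-bounds $\|Y\|_F^2$ by $\sigma_d^2(P^{1/2}\theta^\star\Sigma)\|\Delta\|_P^2$. What the argument actually yields is one-point strong convexity with a smaller constant, such as $\alpha=K_0/4$, on a correspondingly smaller neighborhood. The downstream constants would then need to be adjusted.
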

\begin{proof}
To prove the first part of Theorem \ref{population-loss-theorem}, we first recall that the population loss is
$$L(\theta) =  \frac{1}{2} \expect_{x_1, z_1}  \left\|A \left( \frac{ \expect_{x_2} [\exp(x_1^{\top}B x_2)x_2]}{\expect_{x_2}[\exp(x_1^{\top}B x_2)]} \right) - (Mx_1 + z_1) \right\|_2^2 .$$
Each of the expectations over $x_2$ can be computed using a standard completion-of-squares argument:
$$ \expect_{x_2} [\exp(x_1^{\top}B x_2)x_2] = \exp\left(\frac{1}{2}x_1^{\top} B \Sigma B^{\top}x_1\right) \Sigma B^{\top}x_1, \quad \expect_{x_2}[\exp(x_1^{\top}B x_2)] = \exp\left(\frac{1}{2}x_1^{\top} B \Sigma B^{\top}x_1\right).$$
Canceling terms, we see that $L(\theta)$ can be written as 
$$L(\theta) =  \frac{1}{2} \expect_{x_1, z_1} \left\| A \Sigma B^{\top}x_1 - (Mx_1 + z_1) \right\|_2^2.$$
Integrating with respect to $x_1$ and $z_1$ and using the fact that $\expect[x_1z_1^{\top}] = 0$ because $x_1$ and $z_1$ are independent, we see that 
$$L(\theta) = \frac{1}{2}\Tr(\Omega) +  \frac{1}{2}  \left\| A \Sigma B^{\top} \Sigma^{1/2} - M\Sigma^{1/2} \right \|_F^2.$$
It is easy to check that $L(\theta)$ is strongly convex in $A$ and $B$ individually but is not jointly convex in the pair $(A, B)$. 

We now turn to the second part of Theorem \ref{population-loss-theorem}. Recall that $Q(\theta) = L(\theta) + R(\theta)$, where $L(\theta) \geq L^{\star}$ and $R(\theta) \geq 0$. It follows that any point $\theta = (A, B)$ such that $L(\theta) = L^{\star}$ and $R(\theta) = 0$ is a global minimizer of $Q(\theta)$.  The condition $L(\theta) = L^{\star}$ implies that
$$A \Sigma B^{\top} \Sigma^{1/2} = M\Sigma^{1/2},$$
while the condition $R(\theta) = 0$ implies that  $$A^{\top}A = B^{\top}\Sigma B.$$ It is easy to check that all pairs $(A, B)$ of the form
$$\begin{bmatrix} A \\ B \end{bmatrix} = \begin{bmatrix} U\Gamma^{1/2}J^{\top}\Sigma^{-1/2} \\ \Sigma^{-1/2}V\Gamma^{1/2}J^{\top}\Sigma^{-1/2}\end{bmatrix} $$
satisfy both equations.

We now turn to the third part of Theorem \ref{population-loss-theorem}. We will make extensive use of the following lemma, whose proof is presented in the Appendix. The significance of this lemma is that is shows that the first-order condition satisfied by $\theta^{\star}$ implies a certain symmetry condition which allows us to establish strong convexity of $Q(\theta)$. This symmetry condition is the key reason why we choose the $P$-norm to measure the distance between $\theta$ and $\mathcal{S}$.

\begin{lemma} \label{theta-star-lemma}
Fix any $\theta = (A, B)$ where $A \in \mathbb{R}^{p \times d}$ and $B \in \mathbb{R}^{d \times d}$ and let $\theta^{\star}$ be the projection of $\theta$ onto $\mathcal{S}$ in the $P$-norm.
The point $\theta^{\star}$ has the following properties:
\begin{enumerate}
    \item The matrix $\Delta^{\top} P \theta^{\star} \Sigma $
 is symmetric. 
 \item The following inequalities hold:
$$ K_0 \leq \sigma_d^2(P^{1/2}\theta^{\star}\Sigma), \qquad \sigma_1^2(P^{1/2}\theta^{\star}\Sigma) \leq K_1,$$
where we set 
$$K_0 = 2\sigma_d(M \Sigma^{1/2})\sigma_d(\Sigma) \qquad K_1 =  2\sigma_1(M \Sigma^{1/2})\sigma_1(\Sigma) .$$
\end{enumerate}
\end{lemma}

We first establish one-point strong convexity before proving one-point smoothness. Our proof is inspired by the proof of Theorem 3.3 in \citep{tu16}, but is heavily modified to account for the fact that $L(\theta)$ not symmetric in $A$ and $B$ due to the extra factor of $\Sigma^{1/2}$ attached to $B$.  An algebraic calculation shows that $Q(\theta)$ can be rewritten in the form
$$Q(\theta) = L^{\star} + \frac{1}{8}\left\|P^{1/2}(\theta \Sigma \theta^{\top} - 2\text{Sym}(M)) P^{1/2} \right\|_F^2 - \frac{1}{2}\|M\Sigma^{1/2}\|_F^2,$$
where we define $$\text{Sym}(M) =  \begin{bmatrix} 0 & M \\ M^{\top} & 0 \end{bmatrix}.$$
Define $$\tilde{\theta}^{\star} = \begin{bmatrix} I_p & 0 \\ 0 & -I_d \end{bmatrix} \theta^{\star}. $$
Notice that 
\begin{equation} \label{sym-m-equation}
2\text{Sym}(M) = \theta^{\star}\Sigma \theta^{\star \top} - \tilde{\theta}^{\star}\Sigma\tilde{\theta}^{\star \top}
\end{equation}
and 
\begin{equation} \label{tilde-theta-theta-equation} 
\theta^{\star \top} P \tilde{\theta}^{\star} = 0.
\end{equation}
Set $\Delta = \theta - \theta^{\star}$. Applying \eqref{sym-m-equation}, we see that
\begin{eqnarray}
\langle P^{-1} \nabla Q(\theta), \Delta \rangle_P  &=& 
\langle \nabla Q(\theta), \Delta \rangle \nonumber \\
&=& \frac{1}{2} \left \langle P(\theta \Sigma \theta^{\top} - 2\text{Sym}(M)) P \theta \Sigma, \Delta \right \rangle \nonumber \\
&=& \frac{1}{2} \left \langle P(\theta \Sigma \theta^{\top} - \theta^{\star}\Sigma \theta^{\star \top}) P \theta \Sigma, \Delta \right \rangle + \frac{1}{2} \left \langle P\tilde{\theta}^{\star}\Sigma \tilde{\theta}^{\star \top} P \theta \Sigma, \Delta \right \rangle. \label{Q-inner-product-lower-bound}
\end{eqnarray}
We lower-bound each term of \eqref{Q-inner-product-lower-bound} separately. It is convenient to lower-bound the second term of \eqref{Q-inner-product-lower-bound} first. Notice that
\begin{eqnarray*}
\frac{1}{2} \left \langle P \tilde{\theta}^{\star}\Sigma \tilde{\theta}^{\star \top}  P \theta \Sigma, \Delta \right \rangle &=& \frac{1}{2}\Tr(\theta^{\top} P \tilde{\theta}^{\star}\Sigma \tilde{\theta}^{\star \top} P \theta \Sigma) - \frac{1}{2} \Tr( \theta^{\star \top} P \tilde{\theta}^{\star}\Sigma \tilde{\theta}^{\star \top}  P \theta \Sigma).
\end{eqnarray*}
The first term is non-negative because it is the trace of the product of two psd matrices. We see that the second term is equal to zero in light of \eqref{tilde-theta-theta-equation}. This proves that the second term of \eqref{Q-inner-product-lower-bound} is non-negative.

We now show that the first term of \eqref{Q-inner-product-lower-bound} is bounded below by a constant multiple of $\|\Delta \|_F^2$, provided that $\theta$ is sufficiently close to $\mathcal{S}$. We observe that 
\begin{eqnarray*}
     \frac{1}{2} \left \langle P(\theta \Sigma \theta^{\top} - \theta^{\star}\Sigma \theta^{\star \top}) P \theta \Sigma, \Delta \right \rangle &=&  \frac{1}{2} \Tr \left( \Delta^{\top} P(\theta \Sigma \theta^{\top} - \theta^{\star}\Sigma \theta^{\star \top}) P \theta \Sigma \right) \\
     &=&  \frac{1}{2} \Tr \left( \Delta^{\top} P(\theta^{\star} \Sigma \Delta^{\top} + \Delta \Sigma \theta^{\star \top} + \Delta \Sigma \Delta^{\top}) P (\theta^{\star} + \Delta) \Sigma \right) \\
     &=& S + T,
\end{eqnarray*}
where we set $S$ be the sum of terms which are quadratic in $\Delta$ and set $T$ be the sum of all remaining terms:
\begin{eqnarray*}
    S &=& \frac{1}{2}\Tr \left (\Delta^{\top} P \theta^{\star} \Sigma \Delta^{\top} P \theta^{\star} \Sigma \right) +  \frac{1}{2} \Tr \left(\Delta^{\top} P \Delta \Sigma \theta^{\star \top} P \theta^{\star} \Sigma \right) \\
    T &=& \frac{1}{2} \Tr \left( \Delta^{\top} P \Delta \Sigma \Delta^{\top} P \theta^{\star} \Sigma \right) + \frac{1}{2} \Tr \left(  \Delta^{\top} P \Delta \Sigma \Delta^{\top} P \Delta \Sigma \right) \\
    && + \frac{1}{2} \Tr \left( \Delta^{\top} P \theta^{\star} \Sigma \Delta^{\top}  P \Delta \Sigma \right) +  \frac{1}{2} \Tr \left( \Delta^{\top} P \Delta \Sigma \theta^{\star \top}  P \Delta \Sigma \right).
\end{eqnarray*}
We lower-bound $S$ and $T$ individually.
We see that the first term of $S$ is equal to $\|\Delta^{\top} P  \theta^{\star} \Sigma\|_F^2$ using the symmetry condition described in Lemma \ref{theta-star-lemma}, while the second term is equal to $\|P ^{1/2} \theta^{\star}\Sigma \Delta^{\top} P^{1/2}\|_F^2$. Applying elementary properties of the Frobenius norm, the fact that $\|X\|_P = \|P^{1/2}X\|_F$ for all matrices $X$, and Lemma \ref{theta-star-lemma}, we see that 
\begin{eqnarray*}
S &\geq& 2 \sigma_d^2(P^{1/2} \theta^{\star} \Sigma) \|\Delta\|_P^2 \\
&\geq& 2 K_0 \|\Delta\|_P^2.
\end{eqnarray*}
Applying the Cauchy-Schwarz inequality, Lemma \ref{theta-star-lemma}, and elementary properties of the Frobenius norm, we see that
$$T \geq - \frac{3}{2} K_1 \|\Sigma\|_{\mathrm{op}} \|\Delta\|_P^3 - \|\Sigma\|^2_{\mathrm{op}} \|\Delta\|_P^4. $$
Putting the pieces together, we see that
$$
\langle P^{-1} \nabla Q(\theta), \Delta \rangle_P \geq K_0 \|\Delta\|_P^2
$$
provided that
 $$ \|\Delta\|_P^2  \leq   \min \left( \frac{K_0}{3 K_1 \|\Sigma\|_{\mathrm{op}}}, \frac{\sqrt{K_0/2}}{ \|\Sigma\|_{\mathrm{op}}} \right).$$
We now establish strong smoothness of $Q(\theta)$ near $\mathcal{S}$. We see that
\begin{eqnarray*}
     \|P^{-1}\nabla Q(\theta) \|_P^2 &=& \|(\theta \Sigma \theta^{\top} - 2\text{Sym}(M)) P \theta \Sigma\|_P^2 \\
    &=& \|((\theta^{\star} + \Delta) \Sigma (\theta^{\star} + \Delta)^{\top} - 2\text{Sym}(M)) P (\theta^{\star} + \Delta) \Sigma\|_P^2 \\
    &\leq& \sum_{i = 1}^7 T_i,
\end{eqnarray*}
where we define
$$
T_1 = 7\left\|(\theta^{\star}\Sigma\theta^{\star \top}-2\text{Sym}(M))P\,\Delta\,\Sigma\right\|_P^2, \quad
T_2 = 7\left\|\theta^{\star}\Sigma\Delta^\top P\,\theta^{\star}\,\Sigma\right\|_P^2, \quad
T_3 = 7\left\|\Delta\Sigma\theta^{\star \top} P\,\theta^{\star}\,\Sigma\right\|_P^2,
$$
$$
T_4 = 7\left\|\theta^{\star}\Sigma\Delta^\top P\,\Delta\,\Sigma\right\|_P^2,\qquad
T_5 = 7\left\|\Delta\Sigma\theta^{\star \top} P\,\Delta\,\Sigma\right\|_P^2,
$$
$$
T_6 = 7\left\|\Delta\Sigma\Delta^\top P\,\theta^{\star}\,\Sigma\right\|_P^2,\qquad
T_7 = 7\left\|\Delta\Sigma\Delta^\top P\,\Delta\,\Sigma\right\|_P^2,
$$
and use the easily-verified algebraic  fact that
$$(\theta^{\star}\Sigma\theta^{\star \top}-2\text{Sym}(M))P\,\theta^{\star}\Sigma = 0.$$
Recall that for every matrix $X$, $\|X\|_P = \|P^{1/2} X\|_F$.
We recall that
$$\theta^* \Sigma \theta^{\star \top} - 2\text{Sym}(M) = \tilde{\theta}^{\star} \Sigma \tilde{\theta}^{\star \top}$$
and observe that $\|P^{1/2}\tilde{\theta}^{\star}\Sigma\|_{\mathrm{op}} = \|P^{1/2}\theta^{\star}\Sigma\|_{\mathrm{op}}$.
It is clear that $\|\Delta\|_P^4, \|\Delta\|_P^6 \leq \|\Delta\|_P^2$ for all $\Delta$ such that $\|\Delta\|_P^2 \leq 1$. Applying Lemma \ref{theta-star-lemma} and elementary properties of the Frobenius norm, we see that
$$\|P^{-1} \nabla Q(\theta)\|_P^2 \leq   \left((14 + 7\kappa^2(\Sigma)) K_1^2 + 21 \|\Sigma\|_{\mathrm{op}}^2 K_1 + 7\|\Sigma\|_{\mathrm{op}}^4  \right) \|\Delta\|_P^2$$
for all $\Delta$ such that $\|\Delta \|_P \leq 1$.
\end{proof}

\section{Main Result} \label{neural-scaling-law-sec}

\begin{algorithm}[t]
\caption{Preconditioned Gradient Descent for Self-Attention}
\label{alg:gd}
\begin{algorithmic}[1]
    \renewcommand{\algorithmicrequire}{\textbf{Input:}}
    \Require Data $\{(x_i,y_i)\}_{i=1}^n$, step size $\eta>0$, iteration budget $m$
    
    \Function{GradientDescent}{$\{(x_i,y_i)\}_{i=1}^n,\eta, m$}
        \State $\hat{\Sigma} \gets \frac{1}{n}\sum_{i=1}^n x_i x_i^{\top}$
        \State $\hat{M} \gets \frac{1}{n}\sum_{i=1}^n y_i x_i^{\top}\hat{\Sigma}^{-1}$
        \State $\hat{U}\hat{\Gamma}\hat{V}^{\top} \gets \mathrm{SVD}(\hat{M}\hat{\Sigma}^{1/2})$
        \State $A_0 \gets \hat{U}\hat{\Gamma}^{1/2}\hat{\Sigma}^{-1/2}$
        \State $B_0 \gets \hat{\Sigma}^{-1/2}\hat{V}\hat{\Gamma}^{1/2}\hat{\Sigma}^{-1/2}$ 

        \State $\hat{L}(A, B) \gets \frac{1}{n}\sum_{i = 1}^n \left\|A \frac{\sum_{j = 1}^n \exp(x_i^{\top}Bx_j)x_j}{\sum_{j = 1}^n \exp(x_i^{\top}Bx_j} - y_i\right\|_2^2$
        \State $\hat{R}(A, B) \gets \frac{1}{8} \left\|\hat{\Sigma}^{1/2}(A^{\top}A - B^{\top}\hat{\Sigma} B)\hat{\Sigma}^{1/2} \right\|_F^2$
        \State $\hat{Q}(A, B) = \hat{L}(A, B) + \hat{R}(A, B)$
        
        \For{$t=1$ \textbf{to} $m$} 
            \State $A_t \gets A_{t-1} - \eta \expect[\nabla_A \hat{Q}(A_{t-1}, B_{t-1})]$
            \State $B_t \gets B_{t-1} - \eta \hat{\Sigma}^{-1}\expect[\nabla_B \hat{Q}(A_{t-1}, B_{t-1})]$
        \EndFor
        \State \Return $(A_m, B_m)$
    \EndFunction
\end{algorithmic}
\end{algorithm}

We propose a simple first-order algorithm which converges to the optimal self-attention parameters at a geometric rate. This algorithm is formally described in the display Algorithm \ref{alg:gd}. The idea of our algorithm is very simple. We have shown in Theorem \ref{population-loss-theorem} that the regularized loss $Q(\theta)$ is strongly convex and smooth near the manifold of global minima $\mathcal{S}$. We approximate $Q(\theta)$ by $\hat{Q}(\theta) = \hat{L}(\theta) + \hat{R}(\theta)$, where $\hat{R}(\theta)$ is simply the regularizer obtained by replacing the true covariance $\Sigma$ by its empirical estimate $\hat{\Sigma}$ in the definition of $R(\theta)$. Intuitively, when the number of samples $n$ grows large, one should expect that $\nabla \hat{L}(\theta) \approx \nabla L(\theta)$ and $\nabla \hat{R}(\theta) \approx \nabla R(\theta)$. Gradient descent on $\hat{Q}(\theta)$ should thus converge to a point on $\mathcal{S}$, provided that the gradient descent algorithm is initialized sufficiently close to $\mathcal{S}$. Recall that all of the points on $\mathcal{S}$ have a specific structural form related to the singular value decomposition of $M\Sigma^{1/2}$. We use the samples to form an empirical estimate of $\hat{M}$ and initialize our algorithm using the singular value decomposition of $\hat{M}\hat{\Sigma}$; when the number of samples is large, we expect this initialization to be near $\mathcal{S}$ with high probability. We include a preconditioner in our algorithm to account for the fact that the one-point strong convexity condition we established in Theorem \ref{population-loss-theorem} holds only when the reference point $\theta^{\star}$ is chosen to be the projection in the $P$-norm; preconditioning by $P^{-1}$ ensures that the $P$-norm distance between the iterates generated by our algorithm and their projections onto $\mathcal{S}$ declines at a geometric rate. Since we do not have access to $P^{-1}$, we instead precondition using its empirical counterpart $\hat{P}^{-1}$.
We make the intuition behind our algorithm rigorous in the following theorem:

\begin{theorem}[A Data-Compute Scaling Law for Softmax Self-Attention] \label{neural-scaling-law-theorem}
Fix some $\delta \in (0, 1)$. There exists $\mu < 1$ such that the iterate $\theta_m$ generated by Algorithm \ref{alg:gd} satisfies the inequality
$$L(\theta_m) - L^{\star} \lesssim   n^{-2} \log^6{n} + \mu^m $$
with probability $1 - \delta$ over the training data,
provided that the step size $\eta$ is sufficiently small and $n \geq n_0$, where $n_0$ depends on the failure probability $\delta$. The $\lesssim$ notation suppresses constants which depend on $M$ and $\Sigma$. 
\end{theorem}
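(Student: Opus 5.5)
The plan is to run a perturbed one-point strong convexity argument around $\mathcal{S}$ in the $P$-norm. The preconditioned update of Algorithm \ref{alg:gd} is $\theta_t = \theta_{t-1} - \eta \hat{P}^{-1}\expect[\nabla \hat{Q}(\theta_{t-1})]$, where $\hat{P} = \mathrm{diag}(I_p, \hat{\Sigma})$. We compare this step to the ideal step $\theta_{t-1} - \eta P^{-1}\nabla Q(\theta_{t-1})$ and write
$$\hat{P}^{-1}\expect[\nabla \hat{Q}(\theta)] = P^{-1}\nabla Q(\theta) + E(\theta).$$
The error $E(\theta)$ collects three pieces: $\hat{P}^{-1}-P^{-1}$, $\expect[\nabla\hat{L}(\theta)]-\nabla L(\theta)$ (controlled by Theorem \ref{empirical-gradient-approximation-theorem}), and $\nabla\hat{R}-\nabla R$. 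The first and third are handled by matrix concentration, $\|\hat\Sigma-\Sigma\|_{\op}\lesssim \sqrt{\log(1/\delta)/n}$ with probability $1-\delta/3$.

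\textbf{Step 1: initialization.} Standard Gaussian concentration gives $\|\hat M - M\|_{\op}\lesssim \sqrt{\log(1/\delta)/n}$, and hence $\hat M\hat\Sigma^{1/2}$ is close to $M\Sigma^{1/2}$. By Wedin/Weyl perturbation, using that $\sigma_d(M\Sigma^{1/2})>0$ (A2), some $J\in\mathbb{O}_d$ aligns the singular frames. This gives $\mathrm{dist}_P(\theta_0,\mathcal{S})\lesssim n^{-1/2}\sqrt{\log(1/\delta)}$, which is below $\varepsilon_0/2$ once $n\ge n_0(\delta)$. Note that individual singular vectors can be ill-determined under repeated singular values. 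Only the product structure matters, though, so I would bound the distance through $\|\theta_0\Sigma\theta_0^\top - \theta^\star\Sigma\theta^{\star\top}\|$ together with a Procrustes-type inequality (as in \citep{tu16}) that uses $\sigma_d$ bounds like those in Lemma \ref{theta-star-lemma}.

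\textbf{Step 2: contraction.} Let $D_t=\|\theta_t-\theta_t^\star\|_P$. Since $\theta_{t}^\star$ is the projection, $D_t\le \|\theta_t-\theta_{t-1}^\star\|_P$. Expanding and using \eqref{local-strong-convexity-property} and \eqref{local-smoothness-property} from Theorem \ref{population-loss-theorem} gives
$$D_t^2 \le (1-2\eta\alpha+2\eta^2\beta)D_{t-1}^2 + 2\eta D_{t-1}\|E\|_P + 2\eta^2\|E\|_P^2.$$
For this I need a bound of the form $\|E(\theta)\|_P \le c_n D(\theta) + r_n$ uniformly on the $\varepsilon_0$-neighborhood, with $c_n\to 0$. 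The bias term $r_n$ arises because $\expect\nabla\hat L$ does not vanish on $\mathcal{S}$, and Theorem \ref{empirical-gradient-approximation-theorem} (presumably via the Gaussian Poincaré inequality and the integrability of Lemma \ref{integrability-lemma}, which A3 guarantees near $\mathcal{S}$) should give $r_n\lesssim n^{-1}\log^3 n$. With $\eta<\alpha/(2\beta)$ and $n$ large, this yields $D_t\le \sqrt{\mu}\,D_{t-1} + C\eta r_n$. Induction then shows the iterates stay within $\varepsilon_0$ of $\mathcal{S}$ and $D_m\lesssim \mu^{m/2}D_0 + r_n$.

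\textbf{Step 3: conversion to excess risk.} By part 1 of Theorem \ref{population-loss-theorem}, $L(\theta)-L^\star=\tfrac12\|A\Sigma B^\top\Sigma^{1/2}-M\Sigma^{1/2}\|_F^2$. This quantity vanishes on $\mathcal{S}$ and is locally Lipschitz, so $L(\theta_m)-L^\star\lesssim D_m^2\lesssim \mu^m + n^{-2}\log^6 n$.

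\textbf{Main obstacle.} The hard step is the uniform bias bound in Step 2: showing that $\|\expect\nabla\hat L(\theta)-\nabla L(\theta)\|$ is $O(n^{-1}\,\mathrm{polylog}\, n)$ rather than $O(n^{-1/2})$. This requires a second-order (delta-method) expansion of the ratio of sums, in which first-order fluctuations cancel in expectation and the self-term $j=i$ contributes only $O(1/n)$. Heavy exponential tails from $\exp(x_i^\top Bx_j)$ would be handled by truncation at $\|x\|\lesssim\sqrt{\log n}$, which is the source of the $\log^6 n$ factor. I would take this from Theorem \ref{empirical-gradient-approximation-theorem}, checking that its constant is uniform over the neighborhood by A3.
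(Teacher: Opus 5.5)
Apart from one step, your plan is the paper's proof. The shared ingredients are:
\begin{itemize}
\item one-point strong convexity and smoothness in the $P$-norm, perturbed by the preconditioner mismatch and the gradient bias;
\item the projection trick $\|\theta_t-\theta_t^\star\|_P\le\|\theta_t-\theta_{t-1}^\star\|_P$;
\item a Procrustes bound from \citep{tu16} for the spectral initialization (Lemma~\ref{close-initialization-lemma});
\item Theorem~\ref{uniform-gradient-approximation-theorem} for the gradient bias.
\end{itemize}
Iterating on $D_t$ rather than on $D_t^2$, and converting to excess risk via the residual rather than Lemma~\ref{descent-lemma}, are cosmetic differences.

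The step that fails is your treatment of the regularizer. You put $\nabla\hat R-\nabla R$ into $E$ and control it by concentration of $\hat\Sigma$ at rate $\sqrt{\log(1/\delta)/n}$. Then in Step~2 you attribute the additive term $r_n$ solely to the $\hat L$-bias, implicitly treating the regularizer error as part of $c_nD$. That is legitimate for $(\hat P^{-1}-P^{-1})\nabla Q$: $\nabla Q$ vanishes on $\mathcal{S}$ and $\|P^{-1}\nabla Q\|_P\le\sqrt{\beta}\,D$, which is how Lemma~\ref{hat-p-hat-z-lemma} uses the event $(E_2)$. It is false for the regularizer. For $\theta\in\mathcal{S}$ one has $A^\top A-B^\top\hat\Sigma B=B^\top(\Sigma-\hat\Sigma)B$, hence
\[
\nabla_A\hat R(\theta)=\tfrac12\,A\hat\Sigma B^\top(\Sigma-\hat\Sigma)B\hat\Sigma\neq 0=\nabla_A R(\theta).
\]
Since $A$ has full column rank and $B,\hat\Sigma$ are invertible, its norm is of order $\|\hat\Sigma-\Sigma\|\asymp n^{-1/2}$. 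This piece is therefore an additive bias that does not shrink with $D$. It enters $r_n$ at order $n^{-1/2}$, so your Steps 2--3 yield only $L(\theta_m)-L^\star\lesssim\mu^m+n^{-1}$ up to logarithms. Bounding $\expect[\nabla\hat R]-\nabla R$ by Jensen plus concentration gives the same $n^{-1/2}$.

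The missing idea is a cancellation, not concentration. The oracle returns $\expect[\nabla\hat Q(\theta)]$ over a fresh batch, so the relevant error is the deterministic bias $\expect[\nabla\hat R(\theta)]-\nabla R(\theta)$. Since $\nabla\hat R$ is a polynomial in $\hat\Sigma$, expand it in $\hat\Sigma-\Sigma$:
\begin{itemize}
\item the linear terms have mean zero because $\expect\hat\Sigma=\Sigma$;
\item the remaining terms are $O(\expect\|\hat\Sigma-\Sigma\|_F^2)=O(n^{-1})$.
\end{itemize}
The squared error is therefore $O(n^{-2})$. This is Theorem~\ref{empirical-regularizer-approximation-theorem}, which the paper combines with Theorem~\ref{empirical-gradient-approximation-theorem} to obtain Theorem~\ref{uniform-gradient-approximation-theorem}. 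Incidentally, with the realized $\hat\Sigma$ in $\hat R$ the minimizers of $L+\hat R$ still satisfy $L=L^\star$ (they form a shifted balanced manifold), but an argument measuring distance to $\mathcal{S}$ cannot exploit this.

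Two smaller points:
\begin{itemize}
\item The neighborhood your induction must preserve has radius $\min(\varepsilon_0,\varepsilon_1)$, not $\varepsilon_0$, because the gradient-bias bound relies on the integrability in Lemma~\ref{integrability-lemma}.
\item In Step~3, it is the residual $A\Sigma B^\top\Sigma^{1/2}-M\Sigma^{1/2}$, not $L-L^\star$ itself, whose local Lipschitz continuity and vanishing on $\mathcal{S}$ give the $D_m^2$ bound.
\end{itemize}
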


\noindent \textit{Remark.} An analogous statement holds when one replaces $L(\theta_m)$ and $L^{\star}$ with $\expect[\hat{L}(\theta_m)]$ and $\expect[\hat{L}^{\star}]$, respectively, where $\hat{L}^{\star}$ is the globally optimal value of $\hat{L}(\theta)$.

\medskip 

\noindent Before we prove Theorem \ref{neural-scaling-law-theorem}, we introduce some notation and state some prerequisite results which we use in the proof. Let $\mathcal{S}$ be the manifold of global minimizers of $Q(\theta)$ as described in Theorem \ref{population-loss-theorem}, and let $\theta^{\star}_t$ be the projection of $\theta_t$ onto $\mathcal{S}$ in the $P$-norm. To improve readability, we introduce the abbreviated notation $$\Delta_t = \theta_t - \theta_t^{\star}, \qquad Z_t = \nabla Q(\theta_t), \qquad \hat{Z}_t = \expect[\nabla \hat{Q}(\theta_t)], \qquad \xi_t = \hat{Z}_t - Z_t.$$
In this notation, our algorithm evolves $\theta$ according to the preconditioned gradient descent update rule
$$\theta_{t+1} = \theta_t - \eta \hat{P}^{-1} \hat{Z}_t.$$
Before we prove our convergence result, we state a few preliminary results which we use in the proof. All proofs are presented in the Appendix.

\noindent The following lemma is standard; we include it for completeness. It allows us to convert proofs of parameter convergence into proofs of loss convergence. 
\begin{lemma}[Descent Lemma] \label{descent-lemma}
Let $\beta$, $\varepsilon$ and $\mathcal{S}$ be defined as in Theorem \ref{population-loss-theorem}.  
Suppose that $\theta$ is $\varepsilon$-close to $\mathcal{S}$ in the $P$-norm. Let $\theta^{\star}$ denote the projection of $\theta$ onto $\mathcal{S}$ in the $P$-norm and let $\Delta = \theta - \theta^{\star}$. The following inequality holds:
$$Q(\theta) - Q^{\star} \leq \frac{\sqrt{\beta}}{2} \|\Delta \|_P^2.$$
\end{lemma}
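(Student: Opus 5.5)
The plan is to integrate the gradient along the segment from $\theta^{\star}$ to $\theta$ and control the integrand using the one-point smoothness bound of Theorem \ref{population-loss-theorem}. Since $\theta^{\star}\in\mathcal{S}$ is a global minimizer, $Q(\theta^{\star}) = Q^{\star}$. Setting $\theta_s = \theta^{\star} + s\Delta$ for $s \in [0,1]$, the fundamental theorem of calculus gives
$$Q(\theta) - Q^{\star} = \int_0^1 \langle \nabla Q(\theta_s), \Delta\rangle \, ds = \int_0^1 \langle P^{-1}\nabla Q(\theta_s), \Delta\rangle_P \, ds,$$
using $\langle X, Y\rangle = \langle P^{-1}X, Y\rangle_P$. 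Cauchy--Schwarz in the $P$-inner product bounds the integrand by $\|P^{-1}\nabla Q(\theta_s)\|_P \|\Delta\|_P$.

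The key step is to apply the one-point smoothness inequality \eqref{local-smoothness-property} at each point $\theta_s$. For this we need two facts.
\begin{enumerate}
\item $\theta_s$ is $\varepsilon_0$-close to $\mathcal{S}$. This holds because $\mathrm{dist}_P(\theta_s,\mathcal{S}) \le \|\theta_s - \theta^{\star}\|_P = s\|\Delta\|_P \le \varepsilon$.
\item The projection of $\theta_s$ onto $\mathcal{S}$ is still $\theta^{\star}$. This is the standard property of nearest-point projections: if $\theta^{\star}$ is a nearest point of $\mathcal{S}$ to $\theta$, it is also a nearest point to every point on the segment. Indeed, if some $\theta'\in\mathcal{S}$ were strictly closer to $\theta_s$, then $\|\theta-\theta'\|_P \le (1-s)\|\Delta\|_P + \|\theta_s-\theta'\|_P < \|\Delta\|_P$, a contradiction.
\end{enumerate}
If the projection is not unique, I would note that the proof of the smoothness bound in Theorem \ref{population-loss-theorem} only uses $\theta^{\star}\in\mathcal{S}$ together with Lemma \ref{theta-star-lemma}. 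Lemma \ref{theta-star-lemma} holds for any nearest point, since it comes from first-order optimality, and that optimality is inherited along the segment. So the bound applies with reference point $\theta^{\star}$.

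With this, $\|P^{-1}\nabla Q(\theta_s)\|_P \le \sqrt{\beta}\, s\|\Delta\|_P$. Integrating $\sqrt{\beta}\, s\|\Delta\|_P^2$ over $[0,1]$ yields $\frac{\sqrt{\beta}}{2}\|\Delta\|_P^2$, which is the claimed bound.

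The main obstacle is the projection-invariance along the segment, in particular handling possible non-uniqueness of the projection onto the manifold. This also requires $\varepsilon \le \varepsilon_0$ so that Theorem \ref{population-loss-theorem} applies; I read the lemma's $\varepsilon$ as $\varepsilon_0$. The rest of the argument is routine.
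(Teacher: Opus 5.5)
Your proof is correct and matches the paper's argument: write $Q(\theta)-Q^{\star}$ as $\int_0^1 \langle P^{-1}\nabla Q(\theta^{\star}+s\Delta),\Delta\rangle_P\,ds$, apply Cauchy--Schwarz, and bound the integrand by $\sqrt{\beta}\,s\|\Delta\|_P^2$ using one-point smoothness. Your check that $\theta^{\star}$ stays the nearest point of $\mathcal{S}$ to each $\theta^{\star}+s\Delta$ fills in a step the paper uses without comment. The non-uniqueness worry is moot anyway: the smoothness computation only needs $\theta^{\star}\in\mathcal{S}$, the identity $(\theta^{\star}\Sigma\theta^{\star\top}-2\text{Sym}(M))P\theta^{\star}\Sigma=0$, and the singular value bounds of Lemma~\ref{theta-star-lemma}, and all of these hold at every point of $\mathcal{S}$.
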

Next, we show that our specific choice of initialization is near the manifold of global minima with high probability:
\begin{lemma}[Good initialization occurs with high probability] \label{good-events-lemma}
 Let $\theta_0 = (A_0, B_0)$ and let $\alpha,\beta, \varepsilon_0$ be defined as in Theorem \ref{population-loss-theorem}. Let $\varepsilon_1$ be defined as in Lemma \ref{integrability-lemma}. Set $\bar{\varepsilon} = \min(\varepsilon_1, \varepsilon_2)$. For any $\delta > 0$, there exists  $n_0 > 0$ depending on $\delta, M, \Sigma$ such that for $n \geq n_0$, following events simultaneously occur with probability at least $1 - \delta$:
\begin{enumerate}
    \item[$(E_1)$] $\theta_0$ is $\bar{\varepsilon}$-close in $P$-norm to $\mathcal{S}$. 
    \item[$(E_2)$] The inverse of the empirical covariance is close to the inverse of the true covariance: $$\|\hat{\Sigma}^{-1} - \Sigma^{-1} \|_{\mathrm{op}} \leq  \frac{\alpha}{6\sqrt{\beta} \, \|\Sigma\|_{\mathrm{op}}}.$$ 
\end{enumerate}
\end{lemma}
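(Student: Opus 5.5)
The plan is to show that each event holds with probability at least $1-\delta/2$ once $n$ is large enough, and then take a union bound. Both events follow from concentration of $\hat{\Sigma}$ and $\hat{M}$ around $\Sigma$ and $M$, combined with continuity of the maps that send these quantities to $\theta_0$ and to $\hat{\Sigma}^{-1}$.

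\textbf{Concentration.} We have $\hat{\Sigma} = \Sigma^{1/2} \big(\tfrac1n\sum_i g_i g_i^{\top}\big) \Sigma^{1/2}$ with $g_i \sim \mathcal{N}(0,I_d)$ i.i.d. Standard Gaussian covariance concentration then gives $\|\hat{\Sigma} - \Sigma\|_{\op} \lesssim \|\Sigma\|_{\op}\big(\sqrt{d/n} + \sqrt{\log(1/\delta)/n}\big)$ with probability $1-\delta/4$.

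Next, write $y_i = Mx_i + z_i$. This gives
\[
\hat{M} = M + \Big(\tfrac1n\sum_i z_i x_i^{\top}\Big)\hat{\Sigma}^{-1}.
\]
Conditionally on $X$, the matrix $\tfrac1n\sum_i z_i x_i^{\top}$ is Gaussian with covariance of order $\|\Omega\|_{\op}\|\hat{\Sigma}\|_{\op}/n$. Hence $\|\hat{M}-M\|_{\op} \lesssim \sqrt{\log(1/\delta)/n}$ with probability $1-\delta/4$, using $\sigma_d(\hat{\Sigma}) \ge \sigma_d(\Sigma)/2$. That lower bound holds once $n$ is large, by the first bound together with assumption A1.

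\textbf{Event $(E_2)$.} Write
\[
\hat{\Sigma}^{-1}-\Sigma^{-1} = \hat{\Sigma}^{-1}(\Sigma-\hat{\Sigma})\Sigma^{-1}.
\]
This gives $\|\hat{\Sigma}^{-1}-\Sigma^{-1}\|_{\op} \le 2\sigma_d(\Sigma)^{-2}\|\hat{\Sigma}-\Sigma\|_{\op} = O(n^{-1/2})$. The bound falls below the fixed threshold $\alpha/(6\sqrt\beta\|\Sigma\|_{\op})$ for $n\ge n_0(\delta)$.

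\textbf{Event $(E_1)$.} This is the main step. The distance from $\theta_0$ to $\mathcal{S}$ is at most $\|\theta_0 - \theta_J\|_P$ for any particular point $\theta_J\in\mathcal{S}$, so it suffices to exhibit one good $J$. The matrix square root is continuous on the positive definite cone, and $\hat{\Sigma}^{\pm1/2}\to\Sigma^{\pm1/2}$. Therefore $\hat{M}\hat{\Sigma}^{1/2} \to M\Sigma^{1/2}$ in operator norm at rate $O(n^{-1/2})$.

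The difficulty is that the SVD factors $(\hat U,\hat\Gamma,\hat V)$ are not continuous when singular values coincide. To avoid this, I will work with the products that are insensitive to the SVD's rotation ambiguity. Note that $U\Gamma^{1/2}J^{\top}$ and $V\Gamma^{1/2}J^{\top}$ together form a balanced factorization of $N := M\Sigma^{1/2}$, namely
\[
N = (U\Gamma^{1/2}J^{\top})(V\Gamma^{1/2}J^{\top})^{\top}.
\]
Likewise $(\hat U\hat\Gamma^{1/2}, \hat V\hat\Gamma^{1/2})$ is a balanced factorization of $\hat N = \hat{M}\hat{\Sigma}^{1/2}$. I will invoke a standard perturbation bound for balanced factors (as in \cite{tu16}, Lemma 5.4 style). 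Stack $F=[U\Gamma^{1/2};V\Gamma^{1/2}]$ and $\hat F=[\hat U\hat\Gamma^{1/2};\hat V\hat\Gamma^{1/2}]$. These satisfy $FF^{\top} = $ the symmetric dilation of $N$ plus $\mathrm{diag}$ terms, and the bound reads
\[
\min_{J\in\mathbb{O}_d}\|\hat F - F J^{\top}\|_F \lesssim \frac{\|\hat N - N\|_F}{\sqrt{\sigma_d(N)}}.
\]
Here $\sigma_d(N)>0$ by assumption A2, so no singular-gap condition is needed, only the smallest singular value. Concretely, one can write $\hat F\hat F^{\top} - FF^{\top}$ in terms of $\hat N\hat N^{\top}-NN^{\top}$, $\hat N^{\top}\hat N - N^{\top}N$ and $\hat N - N$, and then apply the Procrustes bound
\[
\mathrm{dist}(\hat F,F)^2 \le \frac{\|\hat F\hat F^{\top}-FF^{\top}\|_F^2}{2(\sqrt2-1)\sigma_d^2(F)}.
\]

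Finally, multiply through by $\Sigma^{-1/2}$ and $\hat\Sigma^{-1/2}$, which are close in operator norm, and take the $P$-norm, at the cost of a factor $\max(1,\|\Sigma\|_{\op}^{1/2})$. This gives $\mathrm{dist}_P(\theta_0,\mathcal{S}) = O(n^{-1/2})$ with probability $1-\delta/2$, which is below $\bar\varepsilon$ for $n\ge n_0$. A union bound over the failure events then completes the argument. The part I expect to require the most care is the balanced-factor perturbation step, because it has to handle the SVD non-uniqueness and track the mixed $\Sigma$ and $\hat\Sigma$ weights.
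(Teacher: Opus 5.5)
Your proposal is correct and follows essentially the paper's route. Concentration of $\hat\Sigma$ (with its square roots and inverse) and of $\frac{1}{n}\sum_i z_i x_i^{\top}$ gives $\|\hat M\hat\Sigma^{1/2}-M\Sigma^{1/2}\|_{\op}\to 0$; the balanced-factor perturbation bound of \cite{tu16} (their Lemma 5.14, which the paper also invokes) handles SVD non-uniqueness; a triangle-inequality split absorbs the $\hat\Sigma^{-1/2}$ versus $\Sigma^{-1/2}$ mismatch (the paper's $T_1,T_2,T_3$); and a union bound finishes. Your conditional-on-$X$ Gaussian argument for $\frac{1}{n}\sum_i z_i x_i^{\top}$ is the right way to get the $n^{-1/2}$ rate, whereas the paper's chain $\frac{1}{n}\|\Omega\|_{\op}^{1/2}\|G\|_{\op}\|X\|_F$ separates $G$ from $X$ and, since both norms scale like $\sqrt{n}$, actually yields only an $O(\sqrt{\log n})$ bound, so keep that step as you have it.
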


\noindent We also state a key lemma which shows that the strong convexity and smoothness properties established in Theorem \ref{population-loss-theorem} for the population loss also hold for the empirical loss, with slightly worse constants and some additive error which measures how far the empirical gradients are from the population gradients:
\begin{lemma} \label{hat-p-hat-z-lemma}
Let $\alpha,\beta, \varepsilon_0$ be defined as in Theorem \ref{population-loss-theorem}. If $\theta$ is $\varepsilon_0$-close to $\mathcal{S}$ and the event $(E_2)$ described in Lemma \ref{good-events-lemma} occurs, then the following inequalities hold:
\begin{eqnarray*}
\langle \hat{P}^{-1} \hat{Z}_t, \Delta_t \rangle_P 
&\geq& \tilde{\alpha} \|\Delta_t\|_P^2 -   \nu \| P^{-1/2}\xi_t\|_F^2, \\
\| \hat{P}^{-1}\hat{Z}_t  \|_P^2 &\leq& \tilde{\beta} \|\Delta_t \|_P^2 + \nu \| P^{-1/2}\xi_t\|_F^2,
\end{eqnarray*}
where we define 
$$\tilde{\alpha} = \frac{2\alpha}{3}, \qquad \tilde{\beta} = 3 \beta + \frac{\alpha^2}{12}, \qquad \nu = \max \left(\frac{3}{\alpha} + \frac{\alpha}{12\beta}, \, 6 + \frac{\alpha^2}{6\beta} \right).$$
\end{lemma}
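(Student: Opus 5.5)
The plan is to compare the preconditioned empirical step with the population step and treat everything else as perturbation. Write
$$\hat P^{-1}\hat Z_t = P^{-1}Z_t + P^{-1}\xi_t + E\,\hat Z_t, \qquad E := \hat P^{-1}-P^{-1} = \begin{bmatrix} 0 & 0\\ 0 & \hat\Sigma^{-1}-\Sigma^{-1}\end{bmatrix}.$$
Here $\hat P$ is $P$ with $\Sigma$ replaced by $\hat\Sigma$, so the $A$-block of $E$ vanishes. The first term is handled by Theorem \ref{population-loss-theorem}. Since $\theta_t$ is $\varepsilon_0$-close to $\mathcal{S}$, it satisfies $\langle P^{-1}Z_t,\Delta_t\rangle_P \ge \alpha\|\Delta_t\|_P^2$ and $\|P^{-1}Z_t\|_P\le \sqrt\beta\|\Delta_t\|_P$. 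For the second term, use $\|P^{-1}\xi_t\|_P = \|P^{1/2}P^{-1}\xi_t\|_F = \|P^{-1/2}\xi_t\|_F$. It remains to control the third term using $(E_2)$.

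\textbf{Step 1: bounding the preconditioner mismatch.} Factor $E\hat Z_t = (EP)(P^{-1}\hat Z_t)$. Then
$$\|E\hat Z_t\|_P = \|P^{1/2}EP^{1/2}\,P^{1/2}(P^{-1}\hat Z_t)\|_F \le \|P^{1/2}EP^{1/2}\|_{\mathrm{op}}\,\|P^{-1}\hat Z_t\|_P.$$
The only nonzero block of $P^{1/2}EP^{1/2}$ is $\Sigma^{1/2}(\hat\Sigma^{-1}-\Sigma^{-1})\Sigma^{1/2}$. Hence $\|P^{1/2}EP^{1/2}\|_{\mathrm{op}}\le\|\Sigma\|_{\mathrm{op}}\|\hat\Sigma^{-1}-\Sigma^{-1}\|_{\mathrm{op}}\le \alpha/(6\sqrt\beta)$ by $(E_2)$; this is exactly why $(E_2)$ has that form. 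Using $P^{-1}\hat Z_t = P^{-1}Z_t + P^{-1}\xi_t$ and one-point smoothness gives
$$\|E\hat Z_t\|_P \le \frac{\alpha}{6\sqrt\beta}\left(\sqrt\beta\|\Delta_t\|_P + \|P^{-1/2}\xi_t\|_F\right) = \frac{\alpha}{6}\|\Delta_t\|_P + \frac{\alpha}{6\sqrt\beta}\|P^{-1/2}\xi_t\|_F.$$

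\textbf{Step 2: one-point strong convexity.} Expand $\langle \hat P^{-1}\hat Z_t,\Delta_t\rangle_P$ into the three pieces and apply Cauchy--Schwarz in the $P$-inner product to the last two. This gives
$$\langle \hat P^{-1}\hat Z_t,\Delta_t\rangle_P \ge \alpha\|\Delta_t\|_P^2 - \|P^{-1/2}\xi_t\|_F\|\Delta_t\|_P - \frac{\alpha}{6}\|\Delta_t\|_P^2 - \frac{\alpha}{6\sqrt\beta}\|P^{-1/2}\xi_t\|_F\|\Delta_t\|_P .$$
Apply Young's inequality to each cross term, spending $\alpha/12$ of $\|\Delta_t\|_P^2$ on each:
$$ab\le \tfrac{\alpha}{12}a^2+\tfrac{3}{\alpha}b^2, \qquad \tfrac{\alpha}{6\sqrt\beta}ab\le \tfrac{\alpha}{12}a^2+\tfrac{\alpha}{12\beta}b^2 .$$
The total loss in $\|\Delta_t\|_P^2$ is then $\alpha/6+\alpha/12+\alpha/12=\alpha/3$, leaving $\tilde\alpha=2\alpha/3$. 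The coefficient of $\|P^{-1/2}\xi_t\|_F^2$ is $3/\alpha+\alpha/(12\beta)\le\nu$.

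\textbf{Step 3: one-point smoothness.} Use the triangle inequality and $(a+b+c)^2\le 3(a^2+b^2+c^2)$ on the same decomposition:
$$\|\hat P^{-1}\hat Z_t\|_P^2\le 3\beta\|\Delta_t\|_P^2+3\|P^{-1/2}\xi_t\|_F^2+3\|E\hat Z_t\|_P^2 .$$
Squaring the Step 1 bound with $(a+b)^2\le 2a^2+2b^2$ gives $3\|E\hat Z_t\|_P^2\le \frac{\alpha^2}{6}\|\Delta_t\|_P^2+\frac{\alpha^2}{6\beta}\|P^{-1/2}\xi_t\|_F^2$. The noise coefficient is then $3+\alpha^2/(6\beta)\le 6+\alpha^2/(6\beta)\le\nu$. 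The $\|\Delta_t\|_P^2$ coefficient is $3\beta+\alpha^2/6$, slightly larger than the stated $\tilde\beta = 3\beta+\alpha^2/12$. I would recover the stated constant by splitting differently: bound $(a+b+c)^2\le 3a^2+3b^2+\tfrac32 c^2$ is not valid in general, so instead I would use $(a+c)^2\le 3a^2+\tfrac32c^2$ and absorb $b$ with a larger weight. The slack between $3$ and $6$ in $\nu$ suggests exactly this.

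The only genuinely non-routine step is Step 1: recognizing that the mismatch must be measured through the conjugated operator $P^{1/2}EP^{1/2}$ and bounded using $\|P^{-1}\hat Z_t\|_P$ rather than $\|\hat Z_t\|$. With that in hand, Steps 2 and 3 reduce to bookkeeping with Young's inequality to match the stated constants.
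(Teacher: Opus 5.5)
Your Steps 1 and 2 are correct and essentially reproduce the paper's argument. The paper also controls the preconditioner mismatch through $P^{1/2}\hat P^{-1}P^{1/2}-I$, whose norm is at most $\alpha/(6\sqrt\beta)$ under $(E_2)$. Your Young's-inequality bookkeeping gives exactly $\tilde\alpha=2\alpha/3$ with noise coefficient $3/\alpha+\alpha/(12\beta)$. The only difference is the grouping. The paper writes $\hat P^{-1}\hat Z_t=P^{-1}Z_t+(\hat P^{-1}-P^{-1})Z_t+\hat P^{-1}\xi_t$, so the mismatch acts only on the population gradient. The noise term carries the full $\hat P^{-1}$, whose conjugate has operator norm at most $1+\alpha/(6\sqrt\beta)$.

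The gap is in Step~3. As written, you prove the second inequality with coefficient $3\beta+\alpha^2/6$, not the stated $\tilde\beta=3\beta+\alpha^2/12$. The repair you sketch does not work: the weights $3$ and $\tfrac32$ in $(a+c)^2\le 3a^2+\tfrac32c^2$ already use up the budget, since $\tfrac13+\tfrac23=1$. No finite weight is left for $b$. Wrapping $(a+c)$ in a further $(1+t)$ factor would push the $\|\Delta_t\|_P^2$ coefficient above $3\beta$.

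Your instinct about the slack between $3$ and $6$ in $\nu$ is right, though. Use $(a+b+c)^2\le 3a^2+6b^2+2c^2$, which is valid since $\tfrac13+\tfrac16+\tfrac12=1$. Then split your Step~1 bound as $c^2\le\tfrac32\cdot\tfrac{\alpha^2}{36}\|\Delta_t\|_P^2+3\cdot\tfrac{\alpha^2}{36\beta}\|P^{-1/2}\xi_t\|_F^2$. This gives exactly $(3\beta+\tfrac{\alpha^2}{12})\|\Delta_t\|_P^2+(6+\tfrac{\alpha^2}{6\beta})\|P^{-1/2}\xi_t\|_F^2$.

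Alternatively, write $E\hat Z_t=EZ_t+E\xi_t$ and merge $P^{-1}\xi_t+E\xi_t=\hat P^{-1}\xi_t$ to recover the paper's three terms. The plain factor-$3$ bound then yields $3\beta+3\cdot\tfrac{\alpha^2}{36}$ and $3(1+\tfrac{\alpha}{6\sqrt\beta})^2\le 6+\tfrac{\alpha^2}{6\beta}$. Your weaker constant would not affect the downstream convergence theorem, but it is not the lemma as stated.
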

Our last ingredient is a theorem that shows that the empirical gradient of the regularized loss is indeed near the population gradient of the regularized loss. The proof of this theorem is quite technical and is deferred to the Appendix.
\begin{theorem}[Uniform approximation of expected empirical gradient by population gradient] \label{uniform-gradient-approximation-theorem} Let $\varepsilon_1$ be defined as in Lemma \ref{integrability-lemma}. 
There exists a constant $C$ depending on $M$ and $\Sigma$ with the following property. Any $\theta$ which is $\varepsilon_1$-close to $\mathcal{S}$ satisfies the bound 
$$\| \expect[\nabla \hat{Q}(\theta)] - \nabla Q(\theta) \|_F^2 \leq C n^{-2}\log^6{n}$$
for sufficiently large $n$.
\end{theorem}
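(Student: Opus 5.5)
The plan is to treat the regularizer and the data-fitting term separately:
$$\expect[\nabla \hat{Q}(\theta)] - \nabla Q(\theta) = \bigl(\expect[\nabla \hat{L}(\theta)] - \nabla L(\theta)\bigr) + \bigl(\expect[\nabla \hat{R}(\theta)] - \nabla R(\theta)\bigr).$$
For each piece I would show that the \emph{bias}, not the fluctuation, is $O(n^{-1}\,\mathrm{polylog}(n))$ in Frobenius norm, uniformly over $\theta$ that are $\varepsilon_1$-close to $\mathcal{S}$. Squaring the sum then gives the claimed $C n^{-2}\log^6 n$.

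\textbf{Regularizer.} Write $X = A^{\top}A - B^{\top}\hat{\Sigma}B$. Then $\hat{R}(\theta) = \frac18 \Tr(\hat{\Sigma} X \hat{\Sigma} X)$, so $\nabla \hat{R}(\theta)$ is a matrix polynomial in $\hat{\Sigma}$ of degree at most four. Its coefficients are polynomial in $(A,B)$, and these are bounded on the $\varepsilon_1$-neighbourhood of $\mathcal{S}$ because $\mathcal{S}$ is compact. Write $\hat{\Sigma} = \Sigma + E$ with $\expect E = 0$, and use the Wishart moment bounds $\|\expect[E^{\otimes k}]\| \lesssim n^{-1}$ for $k \ge 2$. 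Then $\expect[\nabla\hat R(\theta)] = \nabla R(\theta) + O(n^{-1})$, since every first-order term in $E$ vanishes in expectation.

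\textbf{Data-fitting term.} For each $i$ define $N_i = \frac1n\sum_j e^{x_i^{\top}Bx_j}x_j$ and $D_i = \frac1n\sum_j e^{x_i^{\top}Bx_j}$, so the attention output is $s_i = N_i/D_i$. Both $\nabla_A\hat L$ and $\nabla_B\hat L$ are averages over $i$ of smooth functions of $(x_i, y_i, N_i, D_i)$ and of the analogous weighted sums that arise when $B$ is differentiated. By exchangeability, it suffices to study a single $i$ and to condition on $x_i$ (and $z_i$).

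Conditionally on $x_i$, the quantities $N_i$ and $D_i$ are averages of i.i.d.\ terms. The one exception is the $j=i$ summand, whose contribution I would bound directly as $O(1/n)$ times an exponential weight. The conditional means are exactly the closed forms from the proof of Theorem~\ref{population-loss-theorem}:
$$\mu_D = e^{\frac12 x_i^{\top}B\Sigma B^{\top}x_i}, \qquad \mu_N = \mu_D\,\Sigma B^{\top}x_i.$$
I would Taylor expand the ratio, and the products in the gradient, to second order around $(\mu_N,\mu_D)$. The population gradient is recovered at zeroth order, and the first-order term has zero conditional mean. The second-order term is controlled by conditional variances. Using the Gaussian Poincar\'e inequality, or plain i.i.d.\ variance computations, these are of order
$$\tfrac1n\,\expect_{x_j}\!\left[e^{2x_i^{\top}Bx_j}\|x_j\|^2\right] = \tfrac1n\, e^{2x_i^{\top}B\Sigma B^{\top}x_i}\,\mathrm{poly}(\|x_i\|),$$
which is $\mu_D^2$ times a factor $\lesssim e^{x_i^{\top}B\Sigma B^{\top}x_i}/n$. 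Finally I would integrate over $x_i$. This is where Lemma~\ref{integrability-lemma} and assumption A3 enter. On $\mathcal{S}$ we have $\|\Sigma^{1/2}B\Sigma B^{\top}\Sigma^{1/2}\|_{\op} = \|M\Sigma^{1/2}\|_{\op} < 1/16$, and this persists on the $\varepsilon_1$-neighbourhood. That is exactly what makes $\expect\, e^{c\,x^{\top}B\Sigma B^{\top}x}\,\mathrm{poly}(\|x\|)$ finite for the needed constant $c$, which should be at most about $8$.

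\textbf{Main obstacle.} The hard step is the remainder of the ratio expansion, because $1/D_i$ is not bounded and the exponential weights are heavy-tailed.
\begin{itemize}
\item For the lower bound on the denominator I would first use Jensen's inequality: $D_i \ge \frac{n-1}{n}\exp\bigl(x_i^{\top}B\bar{x}_{-i}\bigr)$, where $\bar{x}_{-i} \sim \mathcal{N}(0,\Sigma/(n-1))$. This gives good control of $D_i^{-k}$ in expectation.
\item I would then truncate on the event $\|x_i\|,\ \max_j\|x_j\| \le C\sqrt{\log n}$. On this event $D_i/\mu_D$ concentrates within $O(\sqrt{\log n/n})$ times factors polynomial in $\log n$. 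The cubic remainder is then $O(n^{-3/2}\,\mathrm{polylog}\,n)$ and the quadratic term is $O(n^{-1}\log^{3} n)$. I expect these logarithmic factors to be the origin of the $\log^6 n$ once squared.
\item The complement event has probability $n^{-C'}$, and Cauchy--Schwarz with the integrability moments makes its contribution negligible.
\end{itemize}

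\textbf{Uniformity.} All constants depend only on bounds for $\|A\|,\|B\|$ and on the integrability margin. Both are uniform over the $\varepsilon_1$-neighbourhood of the compact set $\mathcal{S}$, which yields a single constant $C$ for all such $\theta$.
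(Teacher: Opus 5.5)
Your top-level split into the $\hat L$ and $\hat R$ parts matches the paper. So does your treatment of the regularizer: expand the degree-$\le 4$ polynomial in $\hat\Sigma=\Sigma+E$, the first-order terms vanish, and the rest is $O(n^{-1})$. For the data-fitting term you take a genuinely different route, which is viable, but one step fails as written.

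\textbf{Comparison with the paper.} The paper never Taylor-expands the ratio. It works directly with the softmax weights $p_{1j}$:
\begin{itemize}
\item It bounds $\expect_{-1}[p_{12}^q]\le (n-1)^{-q}e^{q^2x_1^\top B\Sigma B^\top x_1/2}$ by AM--GM. This is the same device as your Jensen lower bound on $D_i$.
\item It computes conditional means \emph{exactly} by Gaussian integration by parts, e.g.\ $\expect_{-1}[p_{12}(x_2-\Sigma B^\top x_1)]=-\expect_{-1}[p_{12}^2]\,\Sigma B^\top x_1$.
\item It controls the conditional variances of $p_{12},\mu_1,\mu_1\mu_1^\top,\Sigma_1$ with the Gaussian Poincar\'e inequality plus bias--variance decompositions.
\end{itemize}
There is no remainder to control. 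The $\log^6 n$ comes from crude bounds such as $\|x_2-\mu_1\|_2\le 2\max_i\|x_i\|_2$ with $(\expect\max_i\|x_i\|_2^{24})^{1/2}\lesssim\log^6 n$, not from truncation. Your delta method is more mechanical and exposes the leading $1/n$ bias explicitly; done with moments it can even drop the logarithms. The cost is remainder terms with heavy-tailed weights, which use up more of the integrability margin. Note also that the $j=i$ summand brings weights $e^{c\,x_i^\top Bx_i}$, a different quadratic form from $x^\top B\Sigma B^\top x$. Lemma~\ref{integrability-lemma} covers it, but your integrability check must include it.

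\textbf{The step that fails: truncation.} On $\{\max_j\|x_j\|_2\le C\sqrt{\log n}\}$ the summands $e^{x_i^\top Bx_j}/\mu_D$ are bounded only by $n^{C^2\|B\|_{\mathrm{op}}}$. The conditional variance $e^{x_i^\top B\Sigma B^\top x_i}-1$ is likewise bounded only by a power of $n$. So any Hoeffding/Bernstein bound for $D_i/\mu_D$ on that event carries factors $n^{c}$, not powers of $\log n$. Making the complement $n^{-C'}$-unlikely forces $C^2\gtrsim\|\Sigma\|_{\mathrm{op}}$. The exponent is then of order $\|\Sigma\|_{\mathrm{op}}\|B\|_{\mathrm{op}}$, which can be $\kappa(\Sigma)$ times $\|\Sigma^{1/2}B\Sigma^{1/2}\|_{\mathrm{op}}$ and is not small in general. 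That would destroy the $n^{-1}$ bias rate.

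\textbf{The fix: drop truncation and work with moments.}
\begin{itemize}
\item With $\bar\mu=\expect[D_i\mid x_i]$ and $\delta=D_i-\bar\mu$, use the exact identity $1/D_i=1/\bar\mu-\delta/\bar\mu^2+\delta^2/\bar\mu^3-\delta^3/(\bar\mu^3D_i)$.
\item Multiplied by $N_i$, the last term is $-(N_i/D_i)\,\delta^3/\bar\mu^3$. Since $N_i/D_i=\mu_i$ is a convex combination of the $x_j$, this is bounded by $\max_j\|x_j\|_2\,|\delta/\bar\mu|^3$ with no negative moments needed. Where $1/D_i$ does survive, your Jensen bound plus H\"older suffices.
\item Bound $\expect[|\delta/\bar\mu|^r\mid x_i]\lesssim n^{-r/2}e^{c_r x_i^\top B\Sigma B^\top x_i}$ by Rosenthal's inequality.
\item Integrate over $x_i$, choosing the H\"older exponents so that $c_r$ stays inside the margin of Lemma~\ref{integrability-lemma}.
\end{itemize}
The same bounded-ratio trick handles $\Sigma_i$ in the $B$-gradient, since $\sum_j p_{ij}x_jx_j^\top$ is also a convex combination.
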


\noindent We now prove Theorem \ref{neural-scaling-law-theorem}.
\medskip
\begin{proof}
Let us assume events $(E_1)$ and $(E_2)$ occur. Notice that the initial point $(A_0, B_0)$ is sufficiently close to $\mathcal{S}$ as to guarantee that both Lemma \ref{hat-p-hat-z-lemma} and Theorem \ref{uniform-gradient-approximation-theorem} apply, since $\bar{\varepsilon} \leq \varepsilon_0, \varepsilon_1$.
Define the potential
$$\phi_t = \|\Delta_t\|_P^2.$$
Notice that $$\phi_{t+1} \leq \|\theta_{t+1} - \theta^{\star}_t\|_P^2,$$ because $\theta^{\star}_t$ can only be farther away from $\theta_{t+1}$ than $\theta^{\star}_{t+1}$.
Plugging in the update rule, we see that
\begin{eqnarray*}
\phi_{t+1} &\leq& \|\theta_t - \eta \hat{P}^{-1}\hat{Z}_t - \theta^{\star}_t\|_P^2 \\
&=& \|\Delta_t\|_P^2 + \eta^2\| \hat{P}^{-1}\hat{Z}_t  \|_P^2 - 2\eta \langle \hat{P}^{-1}\hat{Z}_t, \Delta_t \rangle_P  \\
&\leq& \|\Delta_t\|_P^2 + \eta^2 \tilde{\beta} \|\Delta_t\|_P^2  - 2\eta \tilde{\alpha} \|\Delta_t \|_P^2 + (2\eta + \eta^2) \nu \|P^{-1/2} \xi_t\|_F^2 \\
&=& (1 - 2\eta \tilde{\alpha} + \eta^2 \tilde{\beta}) \|\Delta_t\|_P^2 + (2\eta + \eta^2) \nu \|P^{-1/2} \xi_t\|_F^2 \\
&=& \mu \phi_t + (2\eta + \eta^2) \nu \|P^{-1/2} \xi_t\|_F^2 \\
&\leq& \mu \phi_t + (2\eta + \eta^2) \nu \|P\|_F^{-1} Cn^{-2}\log^6{n},
\end{eqnarray*}
where we applied Lemma \ref{hat-p-hat-z-lemma} and Theorem \ref{uniform-gradient-approximation-theorem}, and defined  $$\mu = 1 - 2\eta \tilde{\alpha} + \eta^2 \tilde{\beta}.$$
Optimizing over $\eta$, we see that $\mu$ is minimized when $\eta = \tilde{\alpha} \tilde{\beta}^{-1}$, in which case $\mu = 1 - \tilde{\alpha}^2 \tilde{\beta}^{-1}.$
A simple induction on $\phi_t$ for $t = 0, \ldots m$ leads to the bound
$$\phi_m \leq \mu^m \bar{\varepsilon} + \frac{C(2\eta + \eta^2) \nu }{(1 - \mu) \|P\|_F} n^{-2} \log^6{n}, $$
where we used the assumption that $\phi_0 \leq \bar{\varepsilon}.$
Applying Lemma \ref{descent-lemma} and observing that $Q(\theta) \geq L(\theta)$ and $Q^{\star} = L^{\star}$,
we immediately obtain the claim.
\end{proof}

\newpage

    





\bibliographystyle{alpha} 
\bibliography{references.bib}

\newpage 

\appendix

\section{Experiments} \label{experiments-sec}
We evaluate the performance of our proposed algorithm on a synthetic linear regression task. We emphasize that our experiments are only meant to serve as a proof-of-concept; we leave a detailed evaluation for future work. We set $p = 20$, $d = 10$ and draw random matrices $X \in \mathbb{R}^{d \times d}$ and $Y \in \mathbb{R}^{p \times d}$, where each entry of these two matrices is drawn i.i.d. from $\mathcal{N}(0, I)$. We set $$M = \frac{1}{32}\frac{Y}{\|Y\|_{\mathrm{op}}}, \qquad \Sigma  = \frac{0.1I_d + XX'}{\|0.1I_d + XX'\|_{\mathrm{op}}}.$$
Notice that this choice of $M$ and $\Sigma$ satisfies assumptions \textbf{A1} - \textbf{A3}.
We also set $\Omega = 0.1I_{p}$; with this choice, $L^{\star} = \frac{1}{2}\Tr(\Omega) = 1$. We set a step size of $\eta = 0.01$ and an iteration budget of $T = 2000$. We set $n = 500$ and consider a stochastic variant of our algorithm which evaluates the gradient using a minibatch of $k$ samples drawn without replacement from the original $n$ samples in each iteration. In our experiments we set $k = 20$. We compare the performance of our proposed algorithm against the performance of stochastic gradient descent with the same step size, and using the same minibatch of $k$ samples in each iteration. 

We perform two experiments. First, we study the performance of our algorithm under the spectral initialization described in Algorithm \ref{alg:gd}. The results are presented in  Figure \ref{spectral-init-figure}. We see that at initialization, our algorithm already achieves near-optimal population loss; the population loss achieved by our algorithm is slightly higher than the optimal population loss because the estimates $\hat{M}$ and $\hat{\Sigma}$ used to initialize our algorithm do not exactly match their population counterparts. SGD, however, is initialized randomly, so that each entry  of $A$ and $B$  is sampled i.i.d. from $\mathcal{N}(0, 1)$. The initial loss incurred by SGD is over three orders of magnitude higher than the optimal loss. We also note that SGD does not converge to the optimal loss even after 2000 iterations.

In our second experiment, we keep the SGD initialization the same, and also initialize our algorithm at the same random point. The goal of this experiment is to assess what impact the preconditioner and regularizer of our algorithm have when the algorithm is initialized far from the manifold of global minima. The results are shown in Figure \ref{random-init-figure}. We see that our algorithm quickly converges to the optimal population loss, highlighting the utility of our preconditioner and regularizer.

\begin{figure} 
    \centering
    \includegraphics[width=0.8\textwidth]{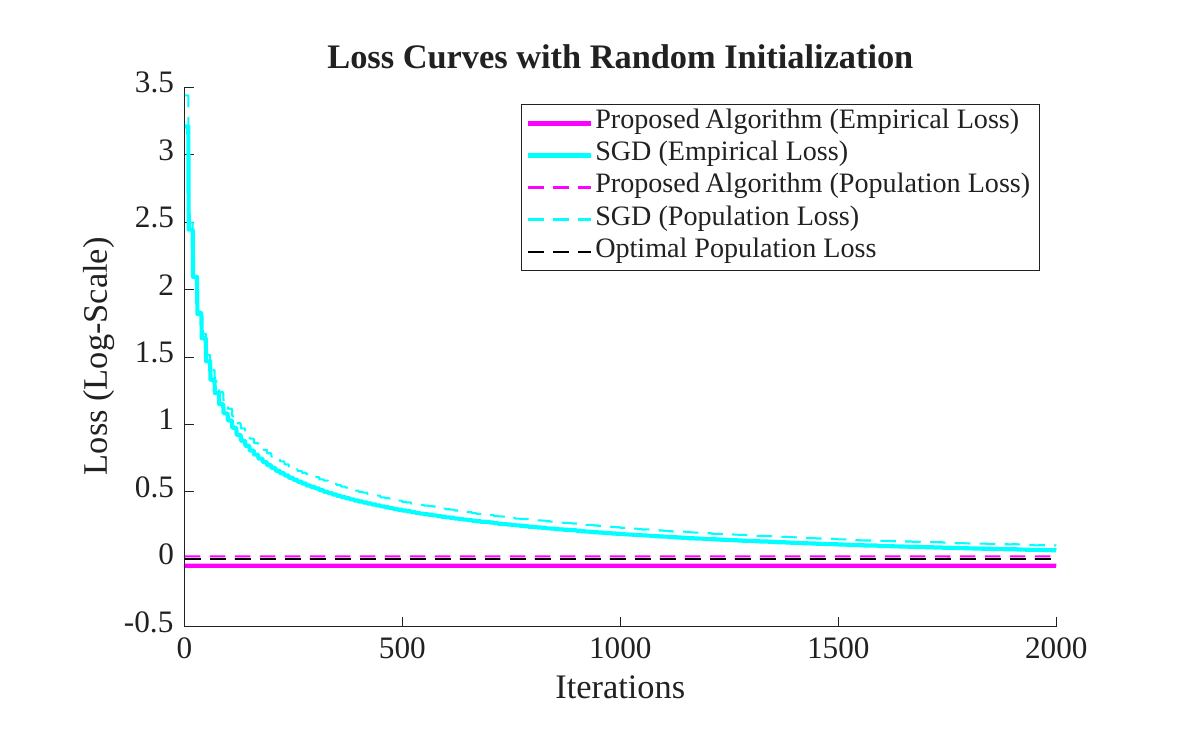}
    \caption{We consider the linear regression problem where our algorithm uses the spectral initialization as in Algorithm \ref{alg:gd}, and SGD is initialized randomly, with each parameter being drawn i.i.d. from $\mathcal{N}(0, 1)$.} \label{spectral-init-figure}
\end{figure}

\begin{figure} 
    \centering
    \includegraphics[width=0.8\textwidth]{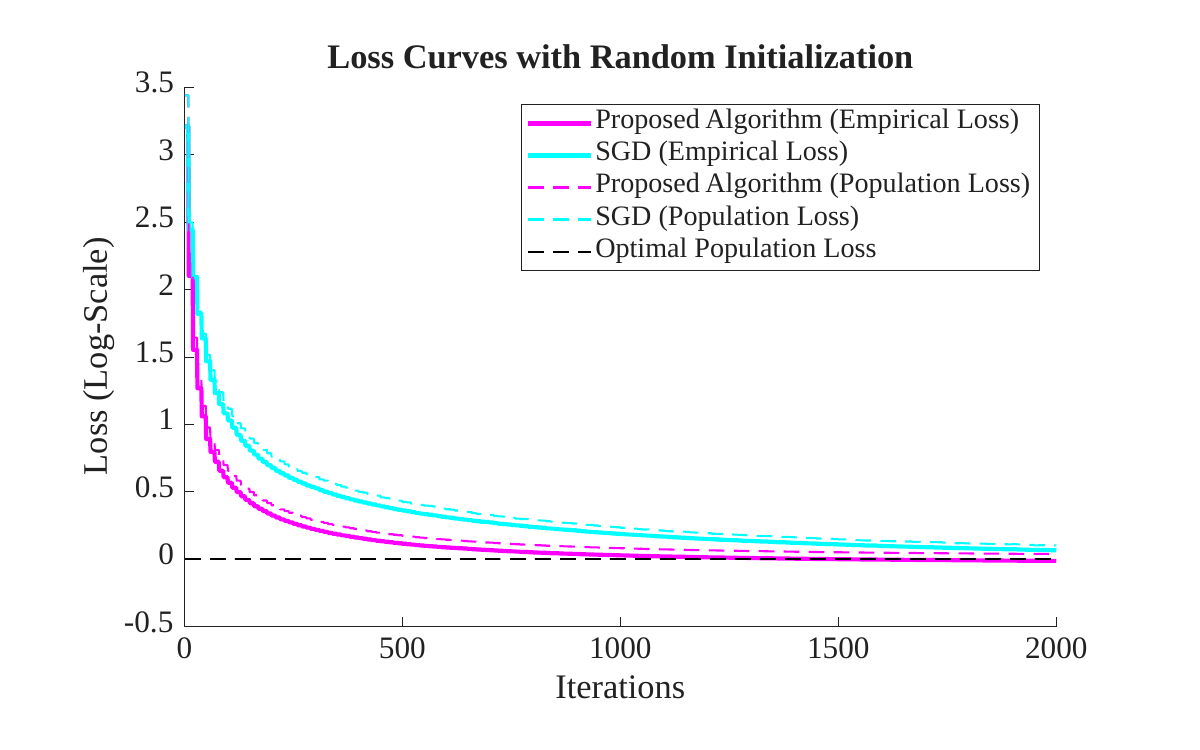}
    \caption{We consider the linear regression problem where both our algorithm and SGD are initialized at the same random point, with each parameter being drawn i.i.d. from $\mathcal{N}(0, 1)$. } \label{random-init-figure}
\end{figure}

\section{Proofs of Lemmas}
In this section we provide the proofs of those lemmas whose proofs we were unable to include in the main body due to space constraints. 

\paragraph{Useful facts.}
We record some basic facts which we use in our proofs. We make repeated use of the elementary bound 
\begin{equation} \label{elementary-frobenius-bound}
\left \| \sum_{i=1}^k A_i \right\|_F^2 \leq k \sum_{i = 1}^k \|A_i\|_F^2,
\end{equation}
where $\{A_i\}_{i = 1}^k$ is any collection of tensors of identical dimension. A particularly useful special case of this bound is $(u + v)^2 \leq 2u^2 + 2v^2$, where $u$ and $v$ are scalars.
Define $$\Var[X] = \expect[(X - \expect[X])(X - \expect[X])^{\top}]$$ and define $\Var(Y)$ analogously. Set $\Cov[X, Y] = \expect [(X - \expect[X])(Y - \expect[Y]).$
The following bias-variance decomposition holds for any random variables $X, Y$ of compatible dimension and deterministic targets $X_0, Y_0$:
\begin{equation} \label{bias-variance-decomposition}
\expect [(X - X_0)(Y - Y_0)] =  \expect[X - X_0]\expect[Y - Y_0] + \Cov[X, Y].
\end{equation}
In addition, we have
\begin{equation} \label{covariance-norm-bound}
\|\Cov[X, Y]\|_F^2 \leq \Tr(\Var[X])\Tr(\Var[Y]).
\end{equation}

\subsection{Proof of Lemma \ref{theta-star-lemma}}

\begin{proof}
Recall from the second part of Theorem \ref{population-loss-theorem} that each point on $\mathcal{S}$ has the form 
$$ \begin{bmatrix} U\Gamma^{1/2}J^{\top}\Sigma^{-1/2} \\ \Sigma^{-1/2}V\Gamma^{1/2}J^{\top}\Sigma^{-1/2}\end{bmatrix} $$  for some orthogonal $J \in \mathbb{R}^{d \times d}$, where $U \Gamma V^{\top}$ is the  singular value decomposition of $M \Sigma^{1/2}$. It follows that 
$$\theta^{\star} = S\Gamma^{1/2}J^{\star \top} \Sigma^{-1/2},$$ where we set
$$S = \begin{bmatrix} U \\ \Sigma^{-1/2}V\end{bmatrix} , \qquad J^{\star} \, = \argmin_{J \in \mathbb{O}_d} 
F(J) $$  and $$
F(J) = \frac{1}{2} \left\|\theta - S\Gamma^{1/2}J ^{\top} \Sigma^{-1/2}  \right\|_P^2.$$
We relax the constraint that $J$ is orthogonal and define the Lagrangian 
$$G(J, \Lambda ) = F(J) + \frac{1}{2}\Tr( \Lambda (J^{\top} J - I_d)).$$
We may assume that $\Lambda \in \mathbb{R}^{d \times d}$ is symmetric without loss of generality, since the matrix $J^{\top}J - I_d$ is clearly symmetric and the trace of the product of a symmetric matrix and a skew-symmetric matrix is zero. Any orthogonal minimizer $J^{\star}$ of $F(J)$ satisfies the first-order condition $\nabla_J \, G(J, \Lambda) = 0$ for some symmetric $\Lambda$. We see that this condition is precisely
$$ -\Sigma^{-1/2}\theta^\top P S\Gamma^{1/2} + 
\Sigma^{-1}J^\star\Gamma^{1/2}(S^\top P S)\Gamma^{1/2} + J^\star\Lambda =0. $$
Observing that $S^{\top} P S = 2I_d$ and rearranging, we obtain
\begin{equation} \label{jstar-first-order-cond}
\Sigma^{-1/2}\theta^\top P S\Gamma^{1/2}
=
2\Sigma^{-1}J^\star\Gamma
+
J^\star\Lambda.
\end{equation}
Recall that $\Delta = \theta - \theta^{\star}$ and $\theta^{\star} = S\Gamma^{1/2}J^{\star \top} \Sigma^{-1/2}$. It follows that
\begin{eqnarray*}
\Sigma^{-1/2}\theta^\top P S\Gamma^{1/2} &=& \Sigma^{-1/2} \Delta^{\top} P S\Gamma^{1/2} + \Sigma^{-1/2}\theta^{\star \top} P S\Gamma^{1/2} \\
&=& \Sigma^{-1/2} \Delta^{\top} P S\Gamma^{1/2} + 2\Sigma^{-1}J^\star\Gamma,
\end{eqnarray*}
where we once again used the identity  $S^{\top} P S = 2I_d$.
In light of \eqref{jstar-first-order-cond}, this equation implies that
\begin{equation} \label{jstar-first-order-cond-2}
 \Sigma^{-1/2} \Delta^{\top} P S\Gamma^{1/2} = J^\star\Lambda.
\end{equation}
We use this identity to prove that $\Delta^{\top} P \theta^{\star} \Sigma$ is symmetric. Plugging in our expression for $\theta^{\star}$ once more, we see that
\begin{eqnarray*} 
\Delta^{\top} P \theta^{\star} \Sigma &=& \Delta^{\top} P S\Gamma^{1/2}J^{\star \top} \Sigma^{1/2} \\
&=& \Sigma^{1/2} J^{\star} \Lambda J^{\star \top} \Sigma^{1/2}, 
\end{eqnarray*}
where we applied \eqref{jstar-first-order-cond-2} in the second step. This matrix is symmetric because $\Lambda$ is symmetric. This proves the first part of Lemma \ref{theta-star-lemma}.

We now prove the singular value inequalities described in the second part of Lemma \ref{theta-star-lemma}. We see that
$$P^{1/2}\theta^* \Sigma = \begin{bmatrix} U \\ V \end{bmatrix} \Gamma^{1/2} J^{\star \top} \Sigma^{1/2},$$
where $U\Gamma V^{\top}$ is a singular value decomposition of $M\Sigma^{1/2}$.
Using the fact that 
$$U^{\top}U + V^{\top}V = 2I_d$$ and the fact that $J^{\star}$ is orthogonal, and applying elementary properties of singular values, we obtain the claimed lower bound. The upper bound follows from a similar calculation.
\end{proof}

\subsection{Proof of Lemma \ref{descent-lemma}}

\begin{proof}
For all $t \in [0, 1]$, let $s(t) = \theta^{\star} + t(\theta - \theta^*)$. Notice that 
\begin{eqnarray*}
    Q(\theta) - Q^{\star} &=& Q(\theta) - Q(\theta^{\star}) \\
    &=& Q(s(1)) - Q(s(0)) \\
    &=& \int_{0}^1 \frac{d}{dt} Q(s(t)) \, dt \\
    &=& \int_{0}^1 \langle \nabla Q(s(t)), \theta - \theta^{\star} \rangle  \, dt \\
    &=& \int_{0}^1 \langle P^{-1}\nabla Q(s(t)), \theta - \theta^{\star} \rangle_P  \, dt \\
    &\leq& \int_{0}^1 \| P^{-1} \nabla Q(s(t)) \|_P \|\theta - \theta^{\star} \|_P  \, dt \\
    &\leq& \int_{0}^1  \sqrt{\beta} t\|\theta - \theta^{\star} \|_P^2  \, dt \\
    &=& \frac{\sqrt{\beta}}{2} \|\Delta\|_P^2,
\end{eqnarray*}
where we applied the Cauchy-Schwartz inequality and the one-point smoothness property described in Theorem \ref{population-loss-theorem}.
\end{proof}

\subsection{Proof of Lemma \ref{good-events-lemma}}

Before we prove Lemma \ref{good-events-lemma}, we state and prove two lemmas which we use in the proof.

\begin{lemma}[Initialization is close to manifold of global optima] \label{close-initialization-lemma}
Let $\mathcal{S}, \varepsilon_0$ be defined as in Theorem \ref{population-loss-theorem} and let $\varepsilon_1$ be defined as in Lemma \ref{integrability-lemma}. Set $\bar{\varepsilon} = \min(\varepsilon_0, \varepsilon_1).$
Let $$\begin{bmatrix} A_0 \\ B_0 \end{bmatrix} = \begin{bmatrix} \hat{U}\hat{\Gamma}^{1/2}\hat{\Sigma}^{-1/2} \\ \hat{\Sigma}^{-1/2}\hat{V}\hat{\Gamma}^{1/2} \hat{\Sigma}^{-1/2} \end{bmatrix},$$
where $\hat{U}\hat{\Gamma}\hat{V}$ is a singular value decomposition of $\hat{M}\hat{\Sigma}^{1/2}$. Let $$ \begin{bmatrix} A^{\star} \\ B^{\star} \end{bmatrix} = \begin{bmatrix} U\Gamma^{1/2}J^{\star \top}\Sigma^{-1/2} \\ \Sigma^{-1/2}V\Gamma^{1/2}J^{\star \top}\Sigma^{-1/2} \end{bmatrix}$$ be the projection in $P$-norm of $(A_0, B_0)$ onto $\mathcal{S}$,
where $U \Gamma V$ is a singular value decomposition of $M\Sigma^{1/2}$ and $$J^{\star} = \min_{J \in \mathbb{O}_d} \left\|P^{1/2}\begin{bmatrix} A_0 \\ B_0 \end{bmatrix} - P^{1/2} \begin{bmatrix} U\Gamma^{1/2}J^{\top}\Sigma^{-1/2} \\ \Sigma^{-1/2}V\Gamma^{1/2}J^{\top}\Sigma^{-1/2} \end{bmatrix}  \right\|_F.$$
A sufficient condition to ensure that $(A_0, B_0)$ is $\bar{\varepsilon}$-close in $P$-norm to $(A^{\star}, B^{\star})$ is that 
\begin{eqnarray} 
\|\hat{M}\hat{\Sigma}^{1/2} - M\Sigma^{1/2}\|_{\mathrm{op}} &\leq& K_2\label{initialization-approximation-cond-1} \\
\|\hat{\Sigma}^{-1/2} - \Sigma^{-1/2}\|_{\mathrm{op}} &\leq& K_3, \label{initialization-approximation-cond-2}
\end{eqnarray}
where we set
\begin{eqnarray*}
K_2 &=& \min\left(1, \,\frac{\sqrt{\sqrt{2}-1}\sqrt{\|\Sigma\|_{\mathrm{op}}} }{3\sqrt{2}} \bar{\varepsilon} \right) \sigma_d(M\Sigma^{1/2}) , \\
    K_3 &=& \min \left(\frac{\bar{\varepsilon}}{6\sqrt{d}\|P^{1/2}\|_{\mathrm{op}}\|M\Sigma^{1/2}\|_{\op}^{1/2} }, \, \frac{\|\Sigma\|_{\mathrm{op}}^{1/2} \bar{\varepsilon}}{12\sqrt{d}\|P^{1/2}\|_{\mathrm{op}}\|M\Sigma^{1/2}\|_{\op}^{1/2} }, \,  \|\Sigma^{-1/2}\|_{\mathrm{op}} \right).
\end{eqnarray*}
\end{lemma}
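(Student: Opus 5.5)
Since the $P$-distance from $\theta_0=(A_0,B_0)$ to $\mathcal{S}$ is a minimum over $J\in\mathbb{O}_d$, it suffices to exhibit one specific $J$ for which the distance is at most $\bar{\varepsilon}$. I would exhibit such a $J$ by aligning the empirical singular vectors with the population ones. Write $W=M\Sigma^{1/2}$, $\hat W=\hat M\hat\Sigma^{1/2}$, and note that $\|\theta\|_P^2=\|A\|_F^2+\|\Sigma^{1/2}B\|_F^2$. The distance to a candidate point of $\mathcal{S}$ then splits into two pieces:
\[
\bigl\|\hat U\hat\Gamma^{1/2}\hat\Sigma^{-1/2}-U\Gamma^{1/2}J^\top\Sigma^{-1/2}\bigr\|_F
\quad\text{and}\quad
\bigl\|\Sigma^{1/2}\hat\Sigma^{-1/2}\hat V\hat\Gamma^{1/2}\hat\Sigma^{-1/2}-V\Gamma^{1/2}J^\top\Sigma^{-1/2}\bigr\|_F .
\]
In each piece I would telescope, replacing $\hat\Sigma^{-1/2}$ by $\Sigma^{-1/2}$ one factor at a time. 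Each swap costs a term bounded by $\sqrt d\,\|\hat\Gamma^{1/2}\|_{\op}\|\hat\Sigma^{-1/2}-\Sigma^{-1/2}\|_{\op}$ times operator norms of the remaining factors. Condition \eqref{initialization-approximation-cond-1} with $K_2\le\sigma_d(W)\le\|W\|_{\op}$ gives $\|\hat\Gamma\|_{\op}\le 2\|W\|_{\op}$, and condition \eqref{initialization-approximation-cond-2} with $K_3\le\|\Sigma^{-1/2}\|_{\op}$ gives $\|\hat\Sigma^{-1/2}\|_{\op}\le 2\|\Sigma^{-1/2}\|_{\op}$. Hence these $\Sigma$-perturbation terms are at most a constant times $\sqrt d\,\|P^{1/2}\|_{\op}\|W\|_{\op}^{1/2}K_3$. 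The choice of $K_3$ (the constants $6$ and $12$) is designed to make their total at most $\bar\varepsilon/2$. I will not track the exact constants; I will simply allocate half of the budget $\bar\varepsilon$ to these terms.

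What remains is the core term. Stacking the two blocks, I need to bound
\[
\Bigl\|\begin{bmatrix}\hat U\\ \hat V\end{bmatrix}\hat\Gamma^{1/2}-\begin{bmatrix}U\\ V\end{bmatrix}\Gamma^{1/2}J^\top\Bigr\|_F,
\]
with $\Sigma^{-1/2}$ factored out on the right, or equivalently absorbed via $\|\Sigma^{1/2}\|_{\op}$. This is exactly the distance between the balanced factorizations $Y=\begin{bmatrix}U\\ V\end{bmatrix}\Gamma^{1/2}$ and $\hat Y=\begin{bmatrix}\hat U\\ \hat V\end{bmatrix}\hat\Gamma^{1/2}$ of $W$ and $\hat W$, up to rotation. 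Here I would invoke the standard Procrustes/factorization perturbation bound of \citep{tu16} (their Lemma 5.4 together with the symmetric dilation trick). Since $YY^\top$ and $\hat Y\hat Y^\top$ agree with $2\,\mathrm{Sym}(W)$ and $2\,\mathrm{Sym}(\hat W)$ on the off-diagonal blocks, and the diagonal blocks are balanced, this gives
\[
\min_{J}\|\hat Y-YJ^\top\|_F^2\le\frac{1}{2(\sqrt2-1)\sigma_d(W)}\,\|\hat Y\hat Y^\top-YY^\top\|_F^2 .
\]
I would then bound $\|\hat Y\hat Y^\top-YY^\top\|_F\lesssim\sqrt{d}\,\|\hat W-W\|_{\op}$. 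The diagonal blocks $\hat U\hat\Gamma\hat U^\top=(\hat W\hat W^\top)^{1/2}$ require a matrix square-root perturbation bound, and I would use $\sigma_d(\hat W)\ge\sigma_d(W)/2$ for this. Combining these, the core term is at most a constant times $\|\hat W-W\|_{\op}/\sqrt{\sigma_d(W)}$. The factor $\sqrt{\sqrt2-1}$ in $K_2$ indicates that this is the intended route, and the choice of $K_2$ makes the core term at most $\bar\varepsilon/2$.

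The main obstacle is this core step. Singular vectors themselves can be unstable, so I must control only the product $\hat U\hat\Gamma^{1/2}$ modulo a common rotation, and it is the dilation/Procrustes argument that avoids eigengap conditions. The detail requiring care is that a single $J$ must align the $U$ block and the $V$ block simultaneously; working with the stacked matrix $Y$ handles this automatically. Adding the two halves gives distance at most $\bar\varepsilon$.
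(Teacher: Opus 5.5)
Your route is the paper's. It too splits the $P$-distance by the triangle inequality into two terms that swap $\hat\Sigma^{-1/2}$ for $\Sigma^{-1/2}$ and a core term bounded by $\|\Sigma^{-1/2}\|_{\op}\min_J\|\hat Y-YJ^\top\|_F$. It handles the core term by citing Lemma 5.14 of \cite{tu16}, which packages exactly the Lemma 5.4-plus-dilation argument you rebuild.

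If you do rebuild it, skip the matrix square-root perturbation for the diagonal blocks. Done naively it costs a factor $\kappa(W)$, and the $p\times p$ block is singular when $p>d$. Instead, $YY^\top=2\,\mathrm{Sym}(W)_+$ is the Frobenius projection of the dilation onto the PSD cone, so nonexpansiveness gives $\|\hat Y\hat Y^\top-YY^\top\|_F\le 2\sqrt2\,\|\hat W-W\|_F$ with no eigenvalue condition.

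The step you defer---checking that the stated $K_2,K_3$ close the budget---does not go through, so you should not assert it. Three factors are missing from your accounting:
\begin{itemize}
\item Factoring $\Sigma^{-1/2}$ out of the core term costs $\|\Sigma^{-1/2}\|_{\op}=\sigma_d(\Sigma)^{-1/2}$; it is not ``absorbed via $\|\Sigma^{1/2}\|_{\op}$''.
\item The $B$-block swap carries $\|\Sigma^{1/2}\|_{\op}\|\hat\Sigma^{-1/2}\|_{\op}\le 2\sqrt{\kappa(\Sigma)}$, not just a multiple of $\|P^{1/2}\|_{\op}$.
\item Passing from $\|\hat W-W\|_F$ to $\|\hat W-W\|_{\op}$ costs $\sqrt d$.
\end{itemize}
With the stated $K_2$, Lemma 5.14 of \cite{tu16} yields a core term of $\tfrac13\sqrt{d\,\kappa(\Sigma)\,\sigma_d(W)}\,\bar\varepsilon$. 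The swap terms are likewise only bounded by a constant times $\sqrt{\kappa(\Sigma)}\,\bar\varepsilon$.

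The paper's proof makes the same slip: its $K_2,K_3$ carry $\|\Sigma\|_{\op}^{1/2}$ where $\sigma_d(\Sigma)^{1/2}$ is needed, and $K_2$ lacks a $1/\sqrt d$. This is not mere looseness of the triangle inequality. Take $\hat\Sigma=\Sigma$, $W=\sigma UV^\top$ for a scalar $\sigma$, and $\hat W=(\sigma+K_2)UV^\top$. The exact $P$-distance from $\theta_0$ to $\mathcal S$ is attained at $J=I$ and equals $(\sqrt{\sigma+K_2}-\sqrt\sigma)\sqrt{2\Tr(\Sigma^{-1})}\approx 0.11\sqrt{\sigma\,\|\Sigma\|_{\op}\Tr(\Sigma^{-1})}\,\bar\varepsilon$. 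This exceeds $\bar\varepsilon$ once $d$ or $\kappa(\Sigma)$ is large enough.

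What your argument, and the paper's, actually proves is the lemma with $K_2,K_3$ shrunk by factors of order $\sqrt{d\,\kappa(\Sigma)}$. That is harmless downstream, because Lemma \ref{good-events-lemma} only needs some fixed positive $K_2,K_3$.
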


\begin{proof}
For each $J \in \mathbb{O}_d$, let us define the corresponding point on $\mathcal{S}$:
$$\begin{bmatrix} A(J) \\ B(J) \end{bmatrix} = \begin{bmatrix} U\Gamma^{1/2}J^{\top}\Sigma^{-1/2} \\ \Sigma^{-1/2}V\Gamma^{1/2}J^{\top}\Sigma^{-1/2} \end{bmatrix}.$$
Note that each point on $\mathcal{S}$ can be written in this form for some $J$. We observe that 
\begin{eqnarray*}
    \left \|\begin{bmatrix} A_0 \\ B_0 \end{bmatrix} - \begin{bmatrix} A^{\star} \\ B^{\star} \end{bmatrix} \right\|_P &=& \min_{J \in \mathbb{O}_d} \left \|\begin{bmatrix} A_0 \\ B_0 \end{bmatrix} - \begin{bmatrix} A(J) \\ B(J) \end{bmatrix} \right\|_P \\
    &=&  \min_{J \in \mathbb{O}_d} \left\|P^{1/2} \begin{bmatrix} \hat{U}\hat{\Gamma}^{1/2}\hat{\Sigma}^{-1/2} \\ \hat{\Sigma}^{-1/2}\hat{V}\hat{\Gamma}^{1/2} \hat{\Sigma}^{-1/2} \end{bmatrix}  - P^{1/2} \begin{bmatrix} U\Gamma^{1/2}J^{\top}\Sigma^{-1/2} \\ \Sigma^{-1/2}V\Gamma^{1/2}J^{\top}\Sigma^{-1/2} \end{bmatrix}  \right\|_F \\
    &\leq&  \min_{J \in \mathbb{O}_d} \left(T_1 + T_2 + T_3(J) \right) \\
    &=&  T_1 + T_2 + \min_{J \in \mathbb{O}_d}  T_3(J) ,
\end{eqnarray*}
where we applied the triangle inequality and defined
\begin{eqnarray*}
    T_1 &=& \left\|P^{1/2} \begin{bmatrix} \hat{U}\hat{\Gamma}^{1/2}\hat{\Sigma}^{-1/2} \\ \hat{\Sigma}^{-1/2}\hat{V}\hat{\Gamma}^{1/2} \hat{\Sigma}^{-1/2} \end{bmatrix}  -  P^{1/2} \begin{bmatrix} \hat{U}\hat{\Gamma}^{1/2}\hat{\Sigma}^{-1/2} \\ \Sigma^{-1/2}\hat{V}\hat{\Gamma}^{1/2} \hat{\Sigma}^{-1/2} \end{bmatrix} \right\|_F \\
    T_2 &=& \left\|P^{1/2} \begin{bmatrix} \hat{U}\hat{\Gamma}^{1/2}\hat{\Sigma}^{-1/2} \\ \Sigma^{-1/2}\hat{V}\hat{\Gamma}^{1/2} \hat{\Sigma}^{-1/2} \end{bmatrix}  -  P^{1/2} \begin{bmatrix} \hat{U}\hat{\Gamma}^{1/2}\Sigma^{-1/2} \\ \Sigma^{-1/2}\hat{V}\hat{\Gamma}^{1/2} \Sigma^{-1/2} \end{bmatrix} \right\|_F \\
    T_3(J) &=& \left\|P^{1/2} \begin{bmatrix} \hat{U}\hat{\Gamma}^{1/2}\Sigma^{-1/2} \\ \Sigma^{-1/2}\hat{V}\hat{\Gamma}^{1/2} \Sigma^{-1/2} \end{bmatrix} -  P^{1/2} \begin{bmatrix} U\Gamma^{1/2}J^{\top}\Sigma^{-1/2} \\ \Sigma^{-1/2}V\Gamma^{1/2}J^{\top}\Sigma^{-1/2} \end{bmatrix}  \right\|_F.
\end{eqnarray*}
We bound each of these terms in turn.
Applying the definitions of $\hat{P}$ and $P$, we observe that
\begin{eqnarray*}
T_1 &\leq& \|P^{1/2} \hat{P}^{-1/2} - I_{p + d} \|_{\mathrm{op}} \|\hat{\Sigma}^{-1/2} \|_{\mathrm{op}} \left \|\begin{bmatrix} \hat{U}\hat{\Gamma}^{1/2} \\ \hat{V}\hat{\Gamma}^{1/2}  \end{bmatrix} \right\|_F \\
&=& \|\Sigma^{1/2} \hat{\Sigma}^{-1/2} - I_{d} \|_{\mathrm{op}} \|\hat{\Sigma}^{-1/2} \|_{\mathrm{op}} \sqrt{2\Tr(\hat{\Gamma})} \\
&\leq& 2\sqrt{2d} \|\Sigma^{1/2}\|_{\mathrm{op}} \|\hat{\Sigma}^{-1/2} - \Sigma^{-1/2} \|_{\mathrm{op}} \|\Sigma^{-1/2}\|_{\mathrm{op}} \|\hat{\Gamma}\|_{\mathrm{op}}^{1/2} \\
&\leq& 2\sqrt{2d} \|P^{1/2}\|_{\mathrm{op}} \|\hat{\Sigma}^{-1/2} - \Sigma^{-1/2} \|_{\mathrm{op}} \|\Sigma^{-1/2}\| \left(\|M\Sigma^{1/2}\|_{\mathrm{op}} + \|\hat{M}\hat{\Sigma}^{1/2} - M\Sigma^{1/2} \|_{\mathrm{op}} \right)^{1/2} \\
&\leq& 4\sqrt{d} \|P^{1/2}\|_{\mathrm{op}} \|\hat{\Sigma}^{-1/2} - \Sigma^{-1/2} \|_{\mathrm{op}} \|\Sigma^{-1/2}\|_{\mathrm{op}} \|M\Sigma^{1/2}\|_{\mathrm{op}}^{1/2} \\
&\leq& \frac{1}{3}\bar{\varepsilon},
\end{eqnarray*}
where we used the fact that $\hat{U}^{\top}\hat{U} = \hat{V}^{\top}\hat{V} = I_d$ and $\|\Sigma^{1/2}\|_{\mathrm{op}} \leq \|P^{1/2}\|_{\mathrm{op}}$, and applied the bounds \eqref{initialization-approximation-cond-1} and \eqref{initialization-approximation-cond-2}.  We use a similar procedure to bound $T_2$: 
\begin{eqnarray*}
T_2 &\leq& \|P^{1/2} \|_{\mathrm{op}} \|\hat{\Sigma}^{-1/2} - \Sigma^{-1/2} \|_{\mathrm{op}} \left \|\begin{bmatrix} \hat{U}\hat{\Gamma}^{1/2} \\ \hat{V}\hat{\Gamma}^{1/2}  \end{bmatrix} \right\|_F \\
&=& \|P^{1/2} \|_{\mathrm{op}} \|\hat{\Sigma}^{-1/2} - \Sigma^{-1/2} \|_{\mathrm{op}} \sqrt{2\Tr(\hat{\Gamma})} \\
&\leq& \sqrt{2d} \|P^{1/2}\|_{\mathrm{op}} \|\hat{\Sigma}^{-1/2} - \Sigma^{-1/2} \|_{\mathrm{op}}  \|\hat{\Gamma}\|_{\mathrm{op}}^{1/2} \\
&\leq& \sqrt{2d} \|P^{1/2}\|_{\mathrm{op}} \|\hat{\Sigma}^{-1/2} - \Sigma^{-1/2} \|_{\mathrm{op}}  \left(\|M\Sigma^{1/2}\|_{\mathrm{op}} + \|\hat{M}\hat{\Sigma}^{1/2} - M\Sigma^{1/2} \|_{\mathrm{op}} \right)^{1/2} \\
&\leq& 2\sqrt{d} \|P^{1/2}\|_{\mathrm{op}} \|\hat{\Sigma}^{-1/2} - \Sigma^{-1/2} \|_{\mathrm{op}}  \|M\Sigma^{1/2}\|_{\mathrm{op}}^{1/2} \\
&\leq& \frac{1}{3}\bar{\varepsilon},
\end{eqnarray*}
where we applied the bounds \eqref{initialization-approximation-cond-1} and \eqref{initialization-approximation-cond-2}.  
We now bound $ \min_{J \in \mathbb{O}_d}  T_3(J) $:
\begin{eqnarray*}
    \min_{J \in \mathbb{O}_d}  T_3(J) &\leq& \|\Sigma^{-1/2}\|_{\mathrm{op}}  \, \min_{J \in \mathbb{O}_d}  \left\| \begin{bmatrix} \hat{U}\hat{\Gamma}^{1/2} \\ \hat{V}\hat{\Gamma}^{1/2} \end{bmatrix} -  \begin{bmatrix} U\Gamma^{1/2}J^{\top} \\ V\Gamma^{1/2}J^{\top} \end{bmatrix}  \right\|_F \\
    &\leq& \frac{\sqrt{2d}}{\sqrt{\sqrt{2} - 1}} \|\Sigma^{-1/2}\|_{\mathrm{op}} \frac{\|\hat{M}\hat{\Sigma}^{1/2} - M\Sigma^{1/2}\|_{\mathrm{op}}}{\sigma_d(M\Sigma^{1/2})} \\
    &\leq& \frac{1}{3}\bar{\varepsilon},
\end{eqnarray*} 
where we applied the bound \eqref{initialization-approximation-cond-1} and Lemma 5.14 from  \cite{tu16}.
\end{proof}

\begin{lemma} \label{z_i-size-lemma}
With probability $1 - \delta/4$ over the sequence of random variables $\{x_i, z_i\}_{i = 1}^n$, the following inequality holds for sufficiently large $n$:
$$ \left\| \frac{1}{n}\sum_{i = 1}^n z_i x_i^{\top} \right \|_{\mathrm{op}} \leq \frac{K_5\sqrt{\log{4n/\delta}}}{\sqrt{n}}, $$
where we define 
$$K_5 = 2\sqrt{2}\|\Omega\|_{\mathrm{op}}^{1/2}\|\Sigma\|_{\mathrm{op}}.$$
\end{lemma}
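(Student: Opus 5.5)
The plan is to condition on the covariates $X = (x_1, \ldots, x_n)$ and use that the noise $Z = (z_1, \ldots, z_n)$ is independent Gaussian. Write $W = \frac{1}{n}\sum_{i=1}^n z_i x_i^{\top} = \frac{1}{n} Z X^{\top}$, a $p \times d$ matrix. Conditionally on $X$, $W$ is a centered Gaussian matrix. For fixed unit vectors $u \in \mathbb{R}^p$ and $v \in \mathbb{R}^d$, the scalar $u^{\top} W v = \frac{1}{n}\sum_i (u^{\top} z_i)(x_i^{\top} v)$ is Gaussian with conditional variance
$$\frac{1}{n^2}\, u^{\top}\Omega u \sum_i (x_i^{\top}v)^2 \;\leq\; \frac{\|\Omega\|_{\mathrm{op}}}{n}\, v^{\top}\hat{\Sigma} v \;\leq\; \frac{\|\Omega\|_{\mathrm{op}}\|\hat{\Sigma}\|_{\mathrm{op}}}{n}.$$

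First I control $\|\hat{\Sigma}\|_{\mathrm{op}}$ with probability $1-\delta/8$. By standard Gaussian covariance concentration, $\|\hat{\Sigma} - \Sigma\|_{\mathrm{op}} \lesssim \|\Sigma\|_{\mathrm{op}}\bigl(\sqrt{(d+\log(8/\delta))/n} + (d + \log(8/\delta))/n\bigr)$. Hence for $n$ sufficiently large (depending on $d$ and $\delta$) we have $\|\hat{\Sigma}\|_{\mathrm{op}} \leq 2\|\Sigma\|_{\mathrm{op}}$. On this event the conditional variance proxy is at most $2\|\Omega\|_{\mathrm{op}}\|\Sigma\|_{\mathrm{op}}/n$.

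Next I bound the operator norm of the conditionally Gaussian matrix. The simplest route is a $1/4$-net argument on $S^{p-1}\times S^{d-1}$, which has cardinality at most $9^{p+d}$. This gives $\|W\|_{\mathrm{op}} \leq 2\max_{\text{net}} u^{\top}Wv$ up to the usual net constant. A union bound with Gaussian tails then gives, with conditional probability $1-\delta/8$,
$$\|W\|_{\mathrm{op}} \lesssim \sqrt{\frac{\|\Omega\|_{\mathrm{op}}\|\Sigma\|_{\mathrm{op}}\,\bigl((p+d)\log 9 + \log(8/\delta)\bigr)}{n}}.$$
Since $\log(4n/\delta)$ grows with $n$, for $n$ sufficiently large it dominates $(p+d)\log 9 + \log(8/\delta)$. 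This produces a bound of the form $c\,\|\Omega\|_{\mathrm{op}}^{1/2}\|\Sigma\|_{\mathrm{op}}^{1/2}\sqrt{\log(4n/\delta)}/\sqrt{n}$. Taking a union over the two events gives total failure probability at most $\delta/4$.

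The main obstacle is matching the stated constant $K_5 = 2\sqrt{2}\|\Omega\|_{\mathrm{op}}^{1/2}\|\Sigma\|_{\mathrm{op}}$, whose power of $\|\Sigma\|_{\mathrm{op}}$ differs from the natural $\|\Sigma\|_{\mathrm{op}}^{1/2}$. If $\|\Sigma\|_{\mathrm{op}}\geq 1$, the stated bound is weaker and follows directly. Otherwise I would expect the paper's argument to be a cruder entrywise or Frobenius bound, for example bounding $\|W\|_{\mathrm{op}}\leq\|W\|_F$ and applying a union bound over the $n$ terms, which is where the $\log(4n/\delta)$ comes from. I would try that route first. The "$n$ sufficiently large" clause absorbs the dimension factors, and I would adjust the normalization (for example by assuming $\|\Sigma\|_{\mathrm{op}}\le 1$, or by tracking $\|\hat\Sigma\|$ more carefully) so that the constant comes out as $K_5$.
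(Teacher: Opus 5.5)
Your proof is correct and already establishes the lemma as stated; the ``main obstacle'' you flag does not exist. On an event of probability at least $1-\delta/4$, your net bound gives $\|W\|_{\mathrm{op}} \le 4\sqrt{\|\Omega\|_{\mathrm{op}}\|\Sigma\|_{\mathrm{op}}\bigl((p+d)\log 9+\log(8/\delta)\bigr)/n}$. The numerator of this bound does not depend on $n$, whereas the target carries the growing factor $\sqrt{\log(4n/\delta)}$. Squaring both sides, your bound is at most $K_5\sqrt{\log(4n/\delta)}/\sqrt{n}$ once $\log(4n/\delta) \ge 2\|\Sigma\|_{\mathrm{op}}^{-1}\bigl((p+d)\log 9+\log(8/\delta)\bigr)$. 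This holds for all large $n$ whether or not $\|\Sigma\|_{\mathrm{op}}\ge 1$. The extra $\sqrt{\log n}$ and the odd power of $\|\Sigma\|_{\mathrm{op}}$ are slack absorbed by the ``sufficiently large $n$'' clause. You need no normalization, and you should not retreat to a cruder bound.

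The paper's route is exactly such a cruder bound, and it is weaker than yours. It writes $\frac1n\sum_i z_i x_i^\top = \frac1n\Omega^{1/2}GX^\top$ and bounds its operator norm by $\frac1n\|\Omega\|_{\mathrm{op}}^{1/2}\|G\|_{\mathrm{op}}\|X\|_F$. It then combines a Gaussian operator-norm bound on $G$ with a union bound over the $n$ samples, $\max_i\|x_i\|_2\le R=\|\Sigma\|_{\mathrm{op}}\bigl(\sqrt{d}+\sqrt{2\log(4n/\delta)}\bigr)$. That union bound is where both $\log(4n/\delta)$ and the power of $\|\Sigma\|_{\mathrm{op}}$ in $K_5$ come from.

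The submultiplicative factorization, however, discards the cancellation among independent mean-zero summands. Since $\|G\|_{\mathrm{op}}\approx\sqrt n$ and $\|X\|_F\le\sqrt{n}R$, the chain honestly yields only $\|\Omega\|_{\mathrm{op}}^{1/2}R(1+o(1))$, which is of order $\sqrt{\log n}$. The paper's displayed threshold $\frac{\|\Omega\|^{1/2}_{\mathrm{op}}R}{n}(\sqrt n+\sqrt d+u)$ silently drops a factor of $\sqrt{n}$. Also, $\max_i\|x_i\|_2$ scales like $\|\Sigma\|_{\mathrm{op}}^{1/2}$, so that choice of $R$ is not a valid high-probability bound when $\|\Sigma\|_{\mathrm{op}}<1$.

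Your step of conditioning on $X$ and using that $u^\top W v$ is Gaussian with variance $\frac{u^\top\Omega u}{n}v^\top\hat\Sigma v$ is precisely what captures the $n^{-1/2}$ rate. A Frobenius-norm variant would also work if it keeps this cancellation, for example via $\mathbb{E}[\|W\|_F^2\mid X]=\frac1n\Tr(\Omega)\Tr(\hat\Sigma)$ plus Gaussian concentration. Summing $\|z_i\|_2\|x_i\|_2$ term by term, or factoring as $\|G\|_{\mathrm{op}}\|X\|_{\mathrm{op}}$, would not.
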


\begin{proof}
Let $X$ be the $d \times n$ matrix with the columns $x_1, \ldots x_n$ and let $W$ be the $d \times n$ matrix with columns $\mu_1, \ldots \mu_n$. Let $z_1 = \Omega^{1/2}g_i$ where $g_i \sim \mathcal{N}(0, I_p)$. Let $G$ be the $p \times n$ matrix with columns $g_1, \ldots g_n$.
Notice that 
$$ S = \frac{1}{n}\sum_{i = 1}^n z_i x_i^{\top} = \frac{1}{n}\Omega^{1/2} G X^{\top}. $$
We observe that
\begin{eqnarray*}
\|S\|_{\mathrm{op}} &\leq& \frac{1}{n} \|\Omega\|^{1/2}_{\mathrm{op}} \|G\|_{\mathrm{op}} \|X\|_{\mathrm{op}} \\
&\leq& \frac{1}{n} \|\Omega\|^{1/2}_{\mathrm{op}} \|G\|_{\mathrm{op}} \|X\|_F \\
&=& \frac{1}{n} \|\Omega\|^{1/2}_{\mathrm{op}} \|G\|_{\mathrm{op}} \left(\sum_{i = 1}^n \|x_i\|_2^2 \right)^{1/2} \\
&\leq& \sqrt{n}R \|\Omega\|^{1/2}_{\mathrm{op}} \|G\|_{\mathrm{op}},
\end{eqnarray*}
provided that $\sup_{i \in [n]} \|x_i\|_2 \leq R$. A standard bound is
$$\Pr \left( \|G\|_{\mathrm{op}} \geq \sqrt{n} + \sqrt{d} + u\right) \leq e^{-u^2/2}.$$
It follows that
$$\Pr \left( \|S\|_{\mathrm{op}} \geq \frac{\|\Omega\|^{1/2}_{\mathrm{op}}R}{n}(\sqrt{n} + \sqrt{d} + u) \right) \leq e^{-u^2/2} + \Pr\left(\sup_{i \in [n]} \|x_i\|_2 \geq R \right).$$
Setting $R = \|\Sigma\|_{\mathrm{op}}(\sqrt{d} + \sqrt{2\log{4n/\delta}})$, $u = \sqrt{2\log{4/\delta}}$ and using a standard tail bound on the maximum norm of $n$ i.i.d. Gaussians r.v.s, we see that this probability is at most $\delta/2$. The result follows by taking $n$ to be sufficiently large.
\end{proof}

\noindent We now prove Lemma \ref{good-events-lemma}.

\medskip

\begin{proof}
Let $K_2$ and $K_3$ be defined as in Lemma \ref{close-initialization-lemma} and let $K_5$ be defined as in Lemma \ref{z_i-size-lemma}. 
Let us define the event $(E_4)$ to be the event that $$\max \left( \|\hat{\Sigma} - \Sigma \|_{\mathrm{op}} , \, \|\hat{\Sigma}^{1/2} - \Sigma^{1/2} \|_{\mathrm{op}} , \, \|\hat{\Sigma}^{-1/2} - \Sigma^{-1/2} \|_{\mathrm{op}} , \, \|\hat{\Sigma}^{-1} - \Sigma^{-1} \|_{\mathrm{op}} \right) \leq K_4, $$
where we define
$$K_4 = \min \left(K_3, \, \frac{K_2}{2\|M\|_{\mathrm{op}}}, \, \|\Sigma^{-1/2}\|_{\mathrm{op}}, \, \frac{\alpha}{6\sqrt{\beta} \, \|\Sigma\|_{\mathrm{op}}} \right)$$
Let us also define the event $(E_5)$ to be the event that
$$ \left\| \frac{1}{n}\sum_{i = 1}^n z_i x_i^{\top} \right \|_{\mathrm{op}} \leq \frac{K_5 \sqrt{\log{4n/\delta}}}{\sqrt{n}}. $$
We will first show that $(E_4)$ and $(E_5)$ together imply $(E_1)$ and $(E_2)$, and then argue that $(E_4)$ and $(E_5)$ must simultaneously occur with probability at least $1 - \delta$. Suppose that $(E_4)$ and $(E_5)$ occur. Notice that 
\begin{eqnarray*}
\|\hat{M}\hat{\Sigma}^{1/2} - M \Sigma^{1/2}\|_{\op} &=& \left\|\frac{1}{n}\sum_{i = 1}^n(Mx_i + z_i)x_i^{\top}\hat{\Sigma}^{-1}\hat{\Sigma}^{1/2} - M\Sigma^{1/2} \right\|_{\mathrm{op}} \\
&=& \left\|M\hat{\Sigma}^{1/2} - M\Sigma^{1/2}  + \frac{1}{n}\sum_{i = 1}^n z_i x_i^{\top}\hat{\Sigma}^{-1/2} \right\|_{\mathrm{op}} \\
&\leq& \|M\|_{\mathrm{op}} \|\hat{\Sigma}^{1/2} - \Sigma^{1/2}\|_{\mathrm{op}} + \left\| \frac{1}{n}\sum_{i = 1}^n z_i x_i^{\top} \right \|_{\mathrm{op}}\|\hat{\Sigma}^{-1/2} \|_{\mathrm{op}} \\
&\leq& \|M\|_{\mathrm{op}} \, K_4 + \frac{2K_5 \sqrt{\log{4n/\delta}} }{\sqrt{n}} \|\Sigma^{-1/2}\|_{\mathrm{op}} \\
&\leq& K_2
\end{eqnarray*}
for sufficiently large $n$, where we used the fact that $K_4 \|M\|_{\mathrm{op}} \leq \frac{1}{2}K_2$ and the fact that 
$$\|\hat{\Sigma}^{-1/2} \|_{\mathrm{op}} \leq \|\Sigma^{-1/2}\|_{\mathrm{op}} + \|\hat{\Sigma}^{-1/2} - \Sigma^{-1/2}\|_{\mathrm{op}} \leq 2\|\Sigma^{-1/2}\|_{\mathrm{op}}$$
on event $(E_4)$. Using the fact that $K_4 \leq K_3$ and applying Lemma \ref{close-initialization-lemma}, we see that $(E_4)$ and $(E_5)$ together imply $(E_1)$. It is clear from the definition of $K_4$ that $(E_4)$ implies $(E_2)$.

It remains to show that $(E_4)$ and $(E_5)$
must simultaneously occur with probability at least $1 - \delta$ when $n$ is sufficiently large. Lemma
\ref{z_i-size-lemma} shows that $(E_5)$ occurs with probability at least $1 - \delta/2$ when $n$ is sufficiently large. We will show that $(E_4)$ also occurs with probability at least $1 - \delta/2$ when $n$ is sufficiently large; the union bound thus implies that both $(E_4)$ and $(E_5)$ occur with probability at least $1 - \delta$. A result of \cite{rudelson2010non} implies that for any $\varepsilon < \frac{1}{2}$, 
$$\|\hat{\Sigma} - \Sigma\|_{\mathrm{op}} \leq \|\Sigma\| (2\varepsilon + \varepsilon^2)$$
with probability $1 - \delta/2$, provided that $n \gtrsim (d + \log(2/\delta))/\varepsilon^2. $ Applying standard matrix perturbation, we observe that
\begin{eqnarray*}
\|\hat{\Sigma}^{1/2}-\Sigma^{1/2}\|_{\mathrm{op}}
&\leq&
\frac{\|\Sigma\|_{\mathrm{op}}(2\varepsilon+\varepsilon^2)}
{\sqrt{\sigma_d(\Sigma)}(2-\varepsilon)}
\\[6pt]
\|\hat{\Sigma}^{-1/2}-\Sigma^{-1/2}\|_{\mathrm{op}}
&\leq&
\frac{\|\Sigma\|_{\mathrm{op}}(2\varepsilon+\varepsilon^2)}
{\sigma_d(\Sigma)^{3/2}(1-\varepsilon)(2-\varepsilon)}
\\[6pt]
\|\hat{\Sigma}^{-1}-\Sigma^{-1}\|_{\mathrm{op}}
&\leq&
\frac{\|\Sigma\|_{\mathrm{op}}(2\varepsilon+\varepsilon^2)}
{\sigma_d(\Sigma)^{2}(1-\varepsilon)^2}.
\end{eqnarray*}
It is clear that these bounds imply the bound described in $(E_4)$ when $\varepsilon$ is chosen to be appropriately small.

\end{proof}

\subsection{Proof of Lemma \ref{hat-p-hat-z-lemma}}

\begin{proof}
Let us assume the event $(E_2)$ described in Lemma \ref{good-events-lemma} occurs. It is easy to check that this event implies that
\begin{equation} \label{p-hat-approximation-condition}
\|P^{1/2}\hat{P}^{-1}P^{1/2} - I_{p + d}\|_{\mathrm{op}} \leq  \frac{\alpha}{6\sqrt{\beta}}.
\end{equation}
To see this, we use the block structure of $\hat{P}$ and $P$, and observe that
\begin{eqnarray*}
\|P^{1/2}\hat{P}^{-1}P^{1/2} - I_{p+d}\|_{\mathrm{op}} &=& \|\Sigma^{1/2}\hat{\Sigma}^{-1}\Sigma^{1/2} - I_d\|_{\mathrm{op}} \\
&=& \|\Sigma^{1/2}(\hat{\Sigma}^{-1} - \Sigma^{-1})\Sigma^{1/2}\|_{\mathrm{op}} \\
&\leq& \|\Sigma\|_{\mathrm{op}} \|\hat{\Sigma}^{-1} - \Sigma^{-1}\|_{\mathrm{op}} \\
&\leq& \frac{\alpha}{6\sqrt{\beta}},
\end{eqnarray*}
where we applied the condition $(E_2)$.  To prove the first inequality state in Lemma \ref{hat-p-hat-z-lemma}, we observe that
\begin{eqnarray*}
&& \langle \hat{P}^{-1} \hat{Z}_t,  \Delta_t \rangle_P \\
&=& \langle P^{-1} Z_t, \Delta_t \rangle_P + \langle (\hat{P}^{-1} - P^{-1}) Z_t, \Delta_t \rangle_P + \langle \hat{P}^{-1} (\hat{Z}_t - Z_t), \Delta_t \rangle_P \\
&\geq& \alpha \|\Delta_t \|_P^2 - \|(\hat{P}^{-1} - P^{-1})Z_t\|_P\|\Delta_t\|_P - \|\hat{P}^{-1}\xi_t\|_P \|\Delta_t\|_P \\
&\geq&  \alpha \|\Delta_t \|_P^2 - \|P^{1/2}\hat{P}^{-1}P^{1/2} - I\|_{\mathrm{op}} \|P^{-1}Z_t\|_P\|\Delta_t\|_P - \|P^{1/2}\hat{P}^{-1}P^{1/2}\|_{\mathrm{op}} \| P^{-1/2}\xi_t\|_F \|\Delta_t\|_P \\
&\geq&  \alpha \|\Delta_t \|_P^2 - \sqrt{\beta}\|P^{1/2}\hat{P}^{-1}P^{1/2} - I\|_{\mathrm{op}} \|\Delta_t\|_P^2 - \|P^{1/2}\hat{P}^{-1}P^{1/2}\|_{\mathrm{op}} \| P^{-1/2}\xi_t\|_F \|\Delta_t\|_P \\
&\geq& \alpha \|\Delta_t \|_P^2 - \sqrt{\beta}\|P^{1/2}\hat{P}^{-1}P^{1/2} - I\|_{\mathrm{op}} \|\Delta_t\|_P^2 - \frac{3}{2\alpha} \|P^{1/2}\hat{P}^{-1}P^{1/2}\|_{\mathrm{op}}^2 \| P^{-1/2}\xi_t\|_F^2 -  \frac{\alpha}{6} \|\Delta_t\|_P^2.
\end{eqnarray*}
where we used the one-point strong convexity property described in Theorem \ref{population-loss-theorem} and the Cauchy-Schwarz inequality in the second step, elementary properties of the $P$-norm and the Frobenius norm in the third step, one-point smoothness in the fourth step, and Young's inequality in the last step. Applying \eqref{p-hat-approximation-condition} and collecting terms yields the claim.

To prove the second inequality, we observe that
\begin{eqnarray*}
&& \| \hat{P}^{-1}\hat{Z}_t  \|_P^2\\
&=& \big \| P^{-1}Z_t + (\hat{P}^{-1} - P^{-1})Z_t + \hat{P}^{-1}(\hat{Z}_t - Z_t) \big \|_P^2 \\
&\leq& 3\| P^{-1}Z_t \|_P^2 + 3\|(\hat{P}^{-1} - P^{-1})Z_t\|_P^2 + 3\|\hat{P}^{-1}\xi_t  \|_P^2 \\
&\leq& 3\| P^{-1}Z_t \|_P^2 + 3\|P^{1/2}\hat{P}^{-1}P^{1/2} - I\|_{\mathrm{op}}^2 \|P^{-1}Z_t\|_P^2 + 3  \|P^{1/2}\hat{P}^{-1}P^{1/2}\|_{\mathrm{op}}^2 \| P^{-1/2}\xi_t \|_F^2 \\
&\leq& \left(3 \beta + \frac{\alpha^2}{12} \right) \|\Delta_t \|_P^2 + \left(6 + \frac{\alpha^2}{6\beta} \right) \| P^{-1/2} \xi_t\|_F^2,
\end{eqnarray*}
where we applied elementary properties of the $P$-norm and Frobenius norm, one-point smoothness, and \eqref{p-hat-approximation-condition}.
\end{proof}

\section{Proof of Theorem \ref{uniform-gradient-approximation-theorem}}

\begin{proof}
Notice that
$$\|\expect[\nabla \hat{Q}(\theta)] - \nabla Q(\theta)\|_F^2 \leq 2\|\expect[\nabla \hat{L}(\theta)] - \nabla L(\theta)\|_F^2 + 2\|\expect[\nabla \hat{R}(\theta)] - \nabla R(\theta)\|_F^2.$$
Theorem \ref{empirical-gradient-approximation-theorem} implies that the first term is $\lesssim n^{-2}\log^6{n}$ and Theorem \ref{empirical-regularizer-approximation-theorem} implies that the second term is $\lesssim n^{-2}$.
\end{proof}

The rest of the Appendix is devoted to the proofs of Theorem \ref{empirical-gradient-approximation-theorem} and \ref{empirical-regularizer-approximation-theorem}. Before we state and prove these theorems, we prove a lemma which guarantees the integrability of certain exponential functions in a neighborhood of the manifold of global minima:

\begin{lemma} \label{integrability-lemma}
Assume that $\sqrt{\sigma_1(M\Sigma^{1/2})} < \frac{1}{16}$ and set
$$\varepsilon_1 = \frac{1}{2\sqrt{\|\Sigma\|_{\mathrm{op}}}} \left( \frac{1}{16} -  \sqrt{\sigma_1(M\Sigma^{1/2})} \right).$$
Let $\theta = (A, B)$ be $\varepsilon_1$-close to $\mathcal{S}$. The matrix $B$ satisfies the following integrability conditions
$$\expect \left[ \exp \left(8x_1^{\top} B x_1 \right)\right] < \infty, \qquad \expect \left[ \exp \left(16 x_1^{\top} B \Sigma B^{\top} x_1 \right)\right] < \infty.$$
\end{lemma}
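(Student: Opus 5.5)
The plan is to reduce both conditions to a single operator-norm bound on the whitened matrix $\Sigma^{1/2}B\Sigma^{1/2}$, and then to control that matrix by perturbing around the nearest point of $\mathcal{S}$.

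First I record the Gaussian integrability criterion. Write $x_1=\Sigma^{1/2}g$ with $g\sim\mathcal{N}(0,I_d)$, and let $C$ be any $d\times d$ matrix. Then $x_1^{\top}Cx_1=g^{\top}Hg$ with $H=\Sigma^{1/2}\tfrac{1}{2}(C+C^{\top})\Sigma^{1/2}$. Diagonalizing $H$ turns $\expect[\exp(g^{\top}Hg)]$ into a product of one-dimensional integrals $\expect[\exp(\lambda_i g_i^2)]$. This product is finite exactly when $\lambda_{\max}(H)<\frac12$, and a sufficient condition is $\|\Sigma^{1/2}C\Sigma^{1/2}\|_{\mathrm{op}}<\frac12$.

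Applying the criterion to $C=8B$ requires $\|\Sigma^{1/2}B\Sigma^{1/2}\|_{\mathrm{op}}<\frac1{16}$. Applying it to $C=16B\Sigma B^{\top}$ uses
$$\Sigma^{1/2}B\Sigma B^{\top}\Sigma^{1/2}=(\Sigma^{1/2}B\Sigma^{1/2})(\Sigma^{1/2}B\Sigma^{1/2})^{\top},$$
so it requires $\|\Sigma^{1/2}B\Sigma^{1/2}\|_{\mathrm{op}}^2<\frac1{32}$. The first requirement implies the second, since $16\cdot\frac{1}{256}<\frac12$. It therefore suffices to show $\|\Sigma^{1/2}B\Sigma^{1/2}\|_{\mathrm{op}}<\frac1{16}$.

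Next I bound the whitened matrix at the projection. Let $\theta^{\star}=(A^{\star},B^{\star})$ be the $P$-norm projection of $\theta$ onto $\mathcal{S}$. By the parametrization of $\mathcal{S}$ in Theorem \ref{population-loss-theorem},
$$\Sigma^{1/2}B^{\star}\Sigma^{1/2}=V\Gamma^{1/2}J^{\top}$$
for some $J\in\mathbb{O}_d$. Its operator norm is therefore $\sqrt{\sigma_1(M\Sigma^{1/2})}$.

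Then I handle the perturbation. The $B$-block of the $P$-norm is $\|\Sigma^{1/2}(B-B^{\star})\|_F$, so this quantity is at most $\|\theta-\theta^{\star}\|_P\le\varepsilon_1$. Hence
$$\|\Sigma^{1/2}(B-B^{\star})\Sigma^{1/2}\|_{\mathrm{op}}\le\|\Sigma^{1/2}(B-B^{\star})\|_F\,\|\Sigma\|_{\mathrm{op}}^{1/2}\le\tfrac12\left(\tfrac1{16}-\sqrt{\sigma_1(M\Sigma^{1/2})}\right).$$

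By the triangle inequality,
$$\|\Sigma^{1/2}B\Sigma^{1/2}\|_{\mathrm{op}}\le\tfrac12\left(\tfrac1{16}+\sqrt{\sigma_1(M\Sigma^{1/2})}\right)<\tfrac1{16}.$$
The strict inequality uses the hypothesis $\sqrt{\sigma_1(M\Sigma^{1/2})}<\frac1{16}$. This establishes both integrability conditions.

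There is no serious obstacle here. The only care points are two. The first is correctly identifying the $B$-block of the $P$-norm, which carries $\Sigma^{1/2}$ on the left only, so that a single extra factor $\|\Sigma\|_{\mathrm{op}}^{1/2}$ appears. The second is symmetrizing $C$ before invoking the Gaussian moment generating function.
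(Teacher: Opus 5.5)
Your proposal is correct and follows essentially the same argument as the paper. Both proofs reduce the two integrability conditions to the single bound $\|\Sigma^{1/2}B\Sigma^{1/2}\|_{\mathrm{op}}<\frac{1}{16}$ via the Gaussian moment generating function criterion, then prove it by the triangle inequality around the $P$-norm projection, using $\|\Sigma^{1/2}B^{\star}\Sigma^{1/2}\|_{\mathrm{op}}=\sqrt{\sigma_1(M\Sigma^{1/2})}$ and $\|\Sigma^{1/2}(B-B^{\star})\|_F\le\|\theta-\theta^{\star}\|_P\le\varepsilon_1$; your derivation of the criterion (symmetrize, diagonalize, and use that $\expect[\exp(\lambda g^2)]<\infty$ if and only if $\lambda<\frac12$) simply spells out the equivalence the paper states without proof.
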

\begin{proof}
Let $\theta^{\star} = (A^{\star}, B^{\star})$ be the projection of $\theta$ on $\mathcal{S}$ and let $\Delta = \theta - \theta^{\star}$ and $\Delta_B = B - B^{\star}$. The first condition is equivalent to
$$\expect \left[ \exp \left(8x_1^{\top} \text{Sym}(B^{\star} + \Delta_B) x_1 \right)\right] < \infty,$$
where we let $\text{Sym}(X) = \frac{1}{2}(X + X^{\top})$ denote the symmetric part of a square matrix $X$. For any square $X$, the condition that $\expect[\exp(8x_1^{\top} \text{Sym}(X) x_1)]$ is finite is equivalent to the condition that $$\lambda_{\text{max}}(\Sigma^{1/2}\text{Sym}(X) \Sigma^{1/2}) < \frac{1}{16}.$$ It suffices to show that 
$$\|\Sigma^{1/2}(B^{\star} + \Delta_B) \Sigma^{1/2}\|_{\mathrm{op}} < \frac{1}{16}$$ because $\lambda_{\text{max}}(\text{Sym}(X)) \leq \|X\|_{\mathrm{op}}$ for all square matrices $X$. Recall from Theorem \ref{population-loss-theorem} that $$B^{\star} = \Sigma^{-1/2}V\Gamma^{1/2}J^{\top}\Sigma^{-1/2}$$ where $U\Gamma V^{\top}$ is a singular value decomposition of $M\Sigma^{1/2}$ and both $V \in \mathbb{R}^{d \times d}$ and $J \in \mathbb{R}^{d \times d}$ are orthogonal. We observe that
\begin{eqnarray*}
\|\Sigma^{1/2}(B^{\star} + \Delta_B) \Sigma^{1/2}\|_{\mathrm{op}} &\leq& \|\Sigma^{1/2}B^{\star} \Sigma^{1/2}\|_{\mathrm{op}} + \|\Sigma^{1/2}\Delta_B \Sigma^{1/2}\|_{\mathrm{op}} \\
&\leq& \|V\Gamma^{1/2} J^{\top} \|_{\mathrm{op}} + \|\Sigma^{1/2}\Delta_B\|_F \| \Sigma^{1/2}\|_{\mathrm{op}} \\
&\leq& \|\Gamma^{1/2} \|_{\mathrm{op}} + \|\Delta\|_P \| \Sigma^{1/2}\|_{\mathrm{op}} \\
&=& \sqrt{\sigma_1(M\Sigma^{1/2})}  + \|\Delta\|_P \| \Sigma^{1/2}\|_{\mathrm{op}} \\
&\leq& \sqrt{\sigma_1(M\Sigma^{1/2})}  + \varepsilon_1 \sqrt{\| \Sigma\|_{\mathrm{op}}} \\
&<& \frac{1}{16},
\end{eqnarray*}
where we used the fact that $V$ and $J$ are orthogonal, and the fact that $\|\Sigma^{1/2}\Delta_B\|_F \leq \|\Delta\|_P$, along with the definitions of $\varepsilon_1$ and $\Gamma$ and the assumption that$\sqrt{\sigma_1(M\Sigma^{1/2})} < \frac{1}{16}$ . We now prove that 
$$ \expect \left[ \exp \left(16 x_1^{\top} B \Sigma B^{\top} x_1 \right)\right] $$ is finite. It suffices to show that 
$$\left\|\Sigma^{1/2}(B^{\star} + \Delta_B)\Sigma (B^{\star} + \Delta_B)^{\top} \Sigma^{1/2} \right\|_{\mathrm{op}} < \frac{1}{32}.$$
Notice that this condition is equivalent to 
$$\left\|\Sigma^{1/2}(B^{\star} + \Delta_B)\Sigma^{1/2} \right\|^2_{\mathrm{op}} < \frac{1}{32},$$
which is implied by our previous bound.
\end{proof}

\subsection{Approximation of population loss gradient by empirical loss gradient}

\begin{theorem} \label{empirical-gradient-approximation-theorem}
Let $\theta$ be $\varepsilon_1$-close to $\mathcal{S}$, where $\varepsilon_1$ is defined as in Lemma \ref{integrability-lemma} and $\mathcal{S}$ is defined as in Theorem \ref{population-loss-theorem}.
The training loss $\hat{L}(\theta)$ satisfies
$$\left\| \, \expect[\nabla \hat{L} (\theta)] - \nabla L (\theta) \, \right\|_F^2 \lesssim n^{-2}\log^6{n}.$$
\end{theorem}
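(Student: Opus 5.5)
We will write the empirical gradient explicitly and compare it term by term with the population gradient, organizing the comparison around the ratio structure of the softmax. For a query $x_i$, set
\[
N_i(B) = \frac{1}{n}\sum_{j} e^{x_i^\top B x_j} x_j, \qquad D_i(B) = \frac{1}{n}\sum_j e^{x_i^\top B x_j}, \qquad s_i = N_i/D_i .
\]
Then $\nabla_A \hat L = \frac1n \sum_i (A s_i - y_i) s_i^\top$ and $\nabla_B \hat L = \frac1n\sum_i \sum_j \big(x_i (A s_i-y_i)^\top A\big)\,\partial s_i/\partial(x_i^\top B x_j)\,x_j^\top$. The derivative of the softmax is $w_{ij}(x_j - s_i)$ with $w_{ij}=e^{x_i^\top Bx_j}/(nD_i)$, so this gradient again involves only ratios of empirical averages of the form $\frac1n\sum_j e^{x_i^\top Bx_j} g(x_j)$ with $g(x)\in\{1,x,xx^\top\}$. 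The population analogues are obtained by replacing each average with $\expect_{x_2}$. These have closed forms by completing the square, e.g. $\bar N(x_1)=e^{\frac12 x_1^\top B\Sigma B^\top x_1}\Sigma B^\top x_1$, and the population gradient in Theorem \ref{population-loss-theorem} is recovered by differentiating under these formulas. By exchangeability, $\expect[\nabla\hat L]$ reduces to an expectation over a single query $x_1$ (together with $z_1$) and the remaining $n-1$ keys. We also separate the $j=1$ self-term, which contributes $O(1/n)$ relative weight and is handled separately using the integrability in Lemma \ref{integrability-lemma}.

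The core step is a second-order delta-method expansion of each ratio around its conditional mean given $x_1$. Writing $D_1=\bar D(1+\delta_D)$ and $N_1=\bar N+\delta_N$, we expand $1/(1+\delta_D)=1-\delta_D+\delta_D^2 - \delta_D^3/(1+\delta_D)$. Conditional on $x_1$, the first-order terms have mean zero, so they vanish in expectation. The second-order terms are conditional (co)variances of averages of $n-1$ i.i.d. terms, hence of order $1/n$ times moments of $e^{x_1^\top Bx_2}$ and $e^{2x_1^\top Bx_2}$. It is exactly this cancellation of the first-order terms that yields an $n^{-1}$ bias in the gradient, and hence $n^{-2}$ in squared norm. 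We will bound these (co)variances using the covariance bound \eqref{covariance-norm-bound} and the bias–variance identity \eqref{bias-variance-decomposition}. We will also use the Gaussian Poincaré inequality in $x_2$ to control $\Var_{-1}$ of functions like $e^{x_1^\top Bx_2}x_2$ by $\expect\|\nabla_{x_2}\|^2$, which produces factors $\|B^\top x_1\|^2 e^{2x_1^\top B x_2}$. Integrating over $x_1$ then requires exactly the exponential moments $\expect e^{8x_1^\top B x_1}$ and $\expect e^{16x_1^\top B\Sigma B^\top x_1}$ guaranteed by Lemma \ref{integrability-lemma}; Cauchy–Schwarz is used to split products of up to four such factors. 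The constants obtained this way depend polynomially on the remaining quantities, uniformly for $\theta$ that is $\varepsilon_1$-close to $\mathcal S$, since $\|A\|,\|B\|$ are bounded there.

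The main obstacle is the remainder term $\delta_D^3/(1+\delta_D)$: the denominator can be close to zero, and the relative fluctuations are heavy-tailed because $e^{x_1^\top Bx_2}$ is log-normal conditional on $x_1$. Our first attempt will be truncation. We define a good event $G$ on which $\|x_1\|\lesssim\sqrt{\log n}$, all $\|x_j\|\lesssim\sqrt{\log n}$, and $|\delta_D|\le 1/2$. On $G$, the summands are bounded by $e^{O(\|B\|\log n)}$-type quantities; sub-Gaussian/Bernstein concentration then gives $|\delta_D|,\|\delta_N\|\lesssim \sqrt{\log n/n}\cdot\mathrm{polylog}$, and the cubic remainder is of order $n^{-3/2}$ times polylogarithmic factors. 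Squaring these contributions produces the $\log^6 n$. Off $G$, we use the trivial bound that $s_1$ is a convex combination of the $x_j$, so $\|s_1\|\le\max_j\|x_j\|$ and the gradient has polynomial moments. Combined with $\Pr(G^c)\le n^{-10}$ (chosen by the truncation level), Cauchy–Schwarz makes this contribution negligible. Finally, we square, sum the $A$- and $B$-blocks via \eqref{elementary-frobenius-bound}, and conclude $\|\expect[\nabla\hat L(\theta)]-\nabla L(\theta)\|_F^2\lesssim n^{-2}\log^6 n$.
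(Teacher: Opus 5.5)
Your overall strategy (conditional centering kills the first-order terms, and the second-order terms are $O(1/n)$ conditional covariances made integrable by Lemma \ref{integrability-lemma}) is a legitimate alternative to the paper's. However, the step you use for what you correctly flag as the main obstacle fails as written. On $G$ you only know $\|x_j\|\lesssim\sqrt{\log n}$, so the summands $e^{x_1^\top Bx_j}$ are bounded only by $n^{c}$, for a constant $c$ that need not be small. In whitened coordinates, $|x_1^\top Bx_j|\le\|\Sigma^{1/2}B\Sigma^{1/2}\|_{\mathrm{op}}\|\Sigma^{-1/2}x_1\|_2\|\Sigma^{-1/2}x_j\|_2$. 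With $\|\Sigma^{1/2}B\Sigma^{1/2}\|_{\mathrm{op}}$ near $1/16$ and the radius $\approx\sqrt{22\log n}$ needed for $n^{-10}$ failure probability, this gives $c>1$. Bernstein's range term $b\log n/n\approx n^{c-1}\log n$ then dominates, so it yields neither $|\delta_D|,\|\delta_N\|\lesssim\sqrt{\log n/n}\,\mathrm{polylog}(n)$ on $G$ nor an $n^{-10}$ bound on the probability that $|\delta_D|>1/2$. The pointwise claim is in fact false. For $x_1$ near the truncation boundary, $\sigma^2(x_1):=x_1^\top B\Sigma B^\top x_1$ is of order $\log n$, and $\Var(\delta_D\mid x_1)\approx(e^{\sigma^2(x_1)}-1)/n=n^{-1+\Theta(1)}$. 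Truncation also breaks the exact conditional centering of the first-order terms, which adds correction terms you did not account for.

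The fix is to drop truncation and work in expectation.
\begin{itemize}
\item The exact identity $1/(1+\delta)=1-\delta+\delta^2-\delta^3/(1+\delta)$ gives $s_1=\frac{N_1}{\bar D}(1-\delta_D+\delta_D^2)-s_1\delta_D^3$. The remainder is therefore $s_1\delta_D^3$ with $\|s_1\|_2\le\max_j\|x_j\|_2$, and no small-denominator issue remains.
\item H\"older's inequality and the conditional bound $\expect[\delta_D^4\mid x_1]\lesssim n^{-2}e^{2\sigma^2(x_1)}+n^{-3}e^{6\sigma^2(x_1)}$ (up to the $j=1$ term), integrated using Lemma \ref{integrability-lemma}, give $\expect\big[\|s_1\|_2|\delta_D|^3\big]\lesssim n^{-3/2}\sqrt{\log n}$.
\item Similarly, $e=s_1-\Sigma B^\top x_1$ satisfies $e=u-e\,\delta_D$, where $u=(N_1-\Sigma B^\top x_1\,D_1)/\bar D$ is a centered average. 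This controls $\expect[ee^\top]$ and the higher products appearing in $s_1s_1^\top$ and $\Sigma_1$.
\end{itemize}
With this repair your route works, and it differs genuinely from the paper's. The paper never expands the ratio; it works directly with the softmax weights $p_{1j}$. It obtains the first-order cancellation from Gaussian integration by parts in $x_2$, for example $(n-1)\expect_{-1}[p_{12}(x_2-\Sigma B^\top x_1)]=-(n-1)\expect_{-1}[p_{12}^2]\,\Sigma B^\top x_1$. It handles the denominator through the AM--GM bound $\expect_{-1}[p_{12}^q]\le(n-1)^{-q}\exp(\tfrac{q^2}{2}x_1^\top B\Sigma B^\top x_1)$. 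It controls the conditional variances of $\mu_1$, $\mu_1\mu_1^\top$ and $\Sigma_1$ with the Gaussian Poincar\'e inequality, and its $\log^6 n$ comes from $(\expect[M_n^{24}])^{1/2}$ rather than from truncation. Your version exposes the $1/n$ bias explicitly as conditional covariances and needs no Stein identity for the cancellation. The price is heavier remainder bookkeeping for the rational functions of $(N_1,D_1,T_1)$ that appear in $\Sigma_1$.
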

\begin{proof}
Recall that 
\begin{eqnarray*}
\left\| \, \expect[\nabla \hat{L} (\theta)] - \nabla L (\theta) \, \right\|_F^2 &=& \left\| \, \begin{bmatrix} \expect[\nabla_A \hat{L} (A, B)] \\ \expect[\nabla_B \hat{L} (A, B)]\end{bmatrix} - \begin{bmatrix} \nabla_A L (A, B) \\ \nabla_B L (A, B) \end{bmatrix} \, \right\|_F^2  \\
&=& \Big\|\expect[\nabla_A \hat{L} (A, B)] - \nabla_A L (A, B)\Big\|_F^2 + \Big\|\expect[\nabla_B \hat{L} (A, B)] -  \nabla_B L (A, B)\Big\|_F^2. 
\end{eqnarray*}
It suffices to show that both of these terms are $\lesssim n^{-2}\log^6{n}$. We first show that the expectation of the $A$-gradient of the empirical loss is close to the $A$-gradient of the population loss.  Recall that 
$$\Big\|\expect[\nabla_A \hat{L} (A, B)] - \nabla_A L (A, B)\Big\|_F^2 = 
\left\| \,  \expect\left [\, \frac{1}{n}\sum_{i=1}^n (A \mu_i - Mx_i)\,\mu_i^\top \, \right] - (A \Sigma B^{\top}  - M) \Sigma B \Sigma  \,  \right\|_F^2.$$
Using the fact that the covariates are i.i.d., the submultiplicative property of the Frobenius norm, and the bound \eqref{elementary-frobenius-bound}, we see that this is at most
$$2\|A\|_{\textrm{op}}^2 \left\| \,  \expect [ \,  \mu_1\mu_1^{\top} \, ] -     \Sigma B^{\top}   \Sigma B \Sigma \, \right\|_F^2 + 2\|M\|_{\textrm{op}}^2 \left\| \,  \expect[\, x_1\mu_1^{\top} \, ] -   \Sigma B \Sigma  \,  \right\|_F^2.
$$
Lemma \ref{mu-mu-norm-lemma-2} implies that the first term is $\lesssim n^{-2}\log^4{n}$ and Lemma \ref{expect-x1-mu-norm-lemma-2} implies that the second term is $\lesssim n^{-2}$. We now show that the expectation of the $B$-gradient of the empirical loss is close to the $B$-gradient of the population loss. This is considerably more difficult to due to the nonlinear interactions between the various terms appearing in the $B$-gradient of the empirical loss.
Recall that $$\Big\|\expect[\nabla_B \hat{L} (A, B)] -  \nabla_B L (A, B)\Big\|_F^2 = \left \| \, \expect\left[ \, \frac{1}{n}\sum_{i = 1}^n x_i  (A \mu_i - Mx_i)^{\top} A \Sigma_i  \, \right] - \Sigma(A \Sigma B^{\top}  - M)^{\top} A \Sigma  \, \right\|_F^2.$$ Using the fact that the covariates are i.i.d., the submultiplicative property of the Frobenius norm, and the bound \eqref{elementary-frobenius-bound} once again, we see that this is at most
\begin{equation} \label{expect-gradient-B-distance-2}
2\left \| \, \expect[ \, x_1 \mu_1^{\top}A^{\top}A \Sigma_1] - \Sigma B \Sigma A^{\top}A \Sigma \right \|_F^2 + 2 \left\| \, \expect[x_1 x_1^{\top} M^{\top}A \Sigma_1] - \Sigma M^{\top}A \Sigma \right \|_F^2.
\end{equation}
We bound each of these terms separately. 

\paragraph{First term of $B$-gradient approximation.}
The first term of \eqref{expect-gradient-B-distance-2} can be rewritten as
$$ 2\left \| \, \expect[ \, (x_1 \mu_1^{\top} - x_1 x_1^{\top} B \Sigma + x_1 x_1^{\top} B \Sigma) A^{\top}A (\Sigma_1 - \Sigma + \Sigma)] - \Sigma B \Sigma A^{\top}A \Sigma \right \|_F^2,$$
which is at most
\begin{equation} \label{expect-gradient-B-first-term}
6 \left \| \, \expect[ \, (x_1 \mu_1^{\top} - x_1 x_1^{\top} B\Sigma) A^{\top}A (\Sigma_1 - \Sigma)] \, \right\|_F^2  + 6 \left \| \, \expect[ \, x_1 \mu_1^{\top} - x_1 x_1^{\top} B \Sigma]A^{\top}A \Sigma \right \|_F^2 + 6 \left  \| \expect[x_1 x_1^{\top} B \Sigma A^{\top}A (\Sigma_1 - \Sigma)] \right\|_F^2. 
\end{equation}
The first of these terms is the most difficult to bound, so we bound the second and third terms first. The second term of \eqref{expect-gradient-B-first-term} is at most
$$6 \|A^{\top}A \Sigma\|_{\mathrm{op}}^2 \left \| \, \expect[ \, x_1 \mu_1^{\top} - x_1 x_1^{\top} B \Sigma] \right \|_F^2,$$ which is $\lesssim n^{-2} $ by Lemma \ref{expect-x1-mu-norm-lemma-2}. We now bound the third term of \eqref{expect-gradient-B-first-term}. Notice that by Jensen's inequality, this term is at most
$$6   \left( \expect_1 \left\| \expect_{-1}[x_1 x_1^{\top} B \Sigma A^{\top}A (\Sigma_1 - \Sigma)] \right\|_F \right)^2. $$
Applying the submultiplicative property of the Frobenius norm, we see that this term is at most
$$6   \left( \expect_1 \left[ \|x_1\|_2^2 \left\| \expect_{-1}[ B \Sigma A^{\top}A (\Sigma_1 - \Sigma)] \right\|_F \right] \right)^2. $$
Applying the Cauchy-Schwarz inequality and the definition of the operator norm, we see that this term is at most
$$6 \|B \Sigma A^{\top}A\|_{\mathrm{op}}^2  \expect_1 \|x_1\|_2^4 \, \expect_1   \left\| \expect_{-1}[  \Sigma_1 - \Sigma] \right\|_F^2,$$
which is $\lesssim n^{-2} \log^4{n}$ by Lemma \ref{conditional-sigma-norm-bound-lemma-3}.

We now bound the first term of \eqref{expect-gradient-B-first-term}.
Applying Jensen's inequality, we 
see that term is at most
$$6 \left( \expect_1  \left \| \, \expect_{-1}[ \, (x_1 \mu_1^{\top} - x_1 x_1^{\top} B\Sigma) A^{\top}A (\Sigma_1 - \Sigma)] \, \right\|_F \right)^2.$$
Applying the the bias-variance decomposition \eqref{bias-variance-decomposition}, we see that this is equal to 
$$6 \left( \expect_1  \left \| \, \expect_{-1}[ x_1 \mu_1^{\top} - x_1 x_1^{\top} B\Sigma] A^{\top}A \expect_{-1}[\Sigma_1 - \Sigma] + \expect_{-1}\Big[  (x_1 \mu_1^{\top} - \expect_{-1}[x_1 \mu_1^{\top}] )  A^{\top}A (\Sigma_1 - \expect_{-1}[\Sigma_1]) \Big] \, \right\|_F \right)^2.$$
Applying the triangle inequality, we see that this is at most the sum of two terms, namely
\begin{equation} \label{gradient-B-first-term-bias}
12 \left( \expect_1  \left \| \, \expect_{-1}[ x_1 \mu_1^{\top} - x_1 x_1^{\top}  B\Sigma] A^{\top}A \expect_{-1}[\Sigma_1 - \Sigma] \right\|_F \right)^2
\end{equation}
and 
\begin{equation}  \label{gradient-B-first-term-variance}
12 \left( \expect_1 \left \|  \left( (x_1 \mu_1^{\top} - x_1 x_1^{\top} B\Sigma) - \expect[x_1 \mu_1^{\top} - x_1 x_1^{\top} B\Sigma]\right) A^{\top}A (\Sigma_1 - \expect[\Sigma_1])] \, \right\|_F \right)^2.
\end{equation}
We bound each term separately. 
Applying  the submultiplicative property and the Cauchy-Schwartz inequality, we see that \eqref{gradient-B-first-term-bias} is at most 
$$12 \|A\|_{\mathrm{op}}^4 \left(\expect_1  \big \|  \expect_{-1}[ x_1 \mu_1^{\top} - x_1 x_1^{\top} B\Sigma] \big \|_F^2 \right) \left( \expect_1 \big\| \expect_{-1}[\Sigma_1] - \Sigma \big\|_F^2 \right). $$
This expression is $\lesssim n^{-4}\log^4{n}$ in light of Lemma \ref{expect-x1-mu-norm-lemma-1} and Lemma \ref{conditional-sigma-norm-bound-lemma-2}.

We now turn to \eqref{gradient-B-first-term-variance}. Applying Jensen's inequality twice, we see that this term is at most
$$12 \expect_1 \Big(  \expect_{-1} \left \| \, (x_1 \mu_1^{\top} - \expect[x_1 \mu_1^{\top}]) A^{\top}A (\Sigma_1 - \expect[\Sigma_1]) \, \right\|_F \Big)^2.$$
By the submultiplicative property of the Frobenius norm, this is at most
$$12 \|A\|_{\mathrm{op}}^4 \expect_1 \Big(  \expect_{-1} \Big[ \left \| \, x_1 \mu_1^{\top} - \expect[x_1 \mu_1^{\top}] \, \right\|_F \big \| \,\Sigma_1 - \expect[\Sigma_1]) \, \big \|_F \Big] \Big)^2.$$
Applying the Cauchy-Schwarz inequality, we see that this is at most
$$12 \|A\|_{\mathrm{op}}^4 \expect_1 \Big[  \expect_{-1}  \left \| \, x_1 \mu_1^{\top} - \expect[x_1 \mu_1^{\top}] \, \right\|_F^2 \, \expect_{-1} \big \| \,\Sigma_1 - \expect[\Sigma_1]) \, \big \|_F^2 \Big].$$
This is equal to 
$$12 \|A\|_{\mathrm{op}}^4 \, \expect_1 \Big[ \|x_1\|_2^2 \, \Tr( \Var_{-1}[ \mu_1 ]) \, \Tr(\Var_{-1}[\Sigma_1]) \Big] .$$
Set $M_n = \sup_{i \in n} \|x_i \|_2$.  Applying Lemma \ref{variance-mu-lemma} and Lemma \ref{variance-sigma-lemma}, we see that this is at most
$$\frac{96\|A\|_{\mathrm{op}}^4 \|\Sigma\|^2_{\mathrm{op}}}{(n-1)^2} \expect_1 \left[  \exp\left(8 x_1^{\top} B \Sigma B^{\top} x_1 \right) \|x_i\|_2^2 \Big(4 \|B\|^2_{\mathrm{op}} \big(\expect_{-1}[M_n^8]\big)^{1/2} + d \Big)\Big( 32\|B\|_{\mathrm{op}}^2  \big(\expect_{-1}[M_n^{12}]\big)^{1/2} + 8d\big(\expect_{-1}[M_n^4]\big)^{1/2}  \Big) \right].$$
We see that this expression is at most
$$\frac{96\|A\|_{\mathrm{op}}^4 \|\Sigma\|^2_{\mathrm{op}}}{(n-1)^2} \expect_1 \left[  \exp\left(8 x_1^{\top} B \Sigma B^{\top} x_1 \right) \Big(128 \|B\|^4_{\mathrm{op}} \big(\expect_{-1}[M_n^{24}]\big)^{1/2} + 64 \|B\|_{\mathrm{op}}^2  \big(\expect_{-1}[M_n^{16}]\big)^{1/2} + 8d^2\big(\expect_{-1}[M_n^8]\big)^{1/2}  \Big) \right],$$
where we used the fact that $\|x_1 \|_2 \leq M_n$
and applied Chebyshev's association inequality to conclude that $\expect[M_n^{k_1}]\expect[M_n^{k_2}] \leq \expect[M_n^{k_1 + k_2}]$ for all $k_1, k_2 \geq 0$. Applying the Cauchy-Schwarz inequality, we see that this is at most
$$\frac{96\|A\|_{\mathrm{op}}^4 \|\Sigma\|^2_{\mathrm{op}}}{(n-1)^2} \Big(\expect_1 [  \exp\left(16 x_1^{\top} B \Sigma B^{\top} x_1 \right)] \Big)^{1/2} \Big(128 \|B\|^4_{\mathrm{op}} \big(\expect[M_n^{24}]\big)^{1/2} + 64 \|B\|_{\mathrm{op}}^2  \big(\expect[M_n^{16}]\big)^{1/2} + 8d^2\big(\expect[M_n^8]\big)^{1/2}  \Big).$$
Lemma \ref{integrability-lemma} implies that
$\expect_1 [  \exp\left(16x_1^{\top} B \Sigma B^{\top} x_1 \right)]$ is finite. We also used the fact that $(\expect[M_n^{24}])^{1/2} \lesssim \log^6{n}$. This proves that \eqref{gradient-B-first-term-variance} is $\lesssim n^{-2}\log^6{n}$.

\paragraph{Second term of $B$-gradient approximation.}
We now bound the second term of \eqref{expect-gradient-B-distance-2}. We observe that this term is equal to
$$2 \left\| \, \expect[x_1 x_1^{\top} M^{\top}A \Sigma_1] - \expect[x_1x_1^{\top}] M^{\top}A \Sigma \right \|_F^2,$$
which by Jensen's inequality is at most
$$2 \left( \expect_1 \left\| \, \expect_{-1}[x_1 x_1^{\top} M^{\top}A \Sigma_1] - x_1x_1^{\top} M^{\top}A \Sigma \right \|_F \right)^2.$$
This expression can be rewritten as
$$2 \left( \expect_1 \left\| \, x_1 x_1^{\top} M^{\top}A \left(\expect_{-1}[ \Sigma_1] -  \Sigma \right) \right \|_F \right)^2.$$
Using the submultiplicative property of the Frobenius norm and the definition of the operator norm, we see that this expression is at most
$$2 \|M\|_{\mathrm{op}}^2 \|A\|_{\mathrm{op}}^2 \left( \expect_1 \left[ \|x_1\|_2^2 \left\| \,  \expect_{-1}[ \Sigma_1] -  \Sigma] \right \|_F \right] \right)^2.$$
Applying the Cauchy-Schwarz inequality, we see that this expression is at most
$$2 \|M\|_{\mathrm{op}}^2 \|A\|_{\mathrm{op}}^2  \expect_1  \|x_1\|_2^4 \, \expect_1 \left\| \,  \expect_{-1}[ \Sigma_1] -  \Sigma] \right \|_F^2,$$
which is $\lesssim n^{-2} \log^4{n}$.

\end{proof}

\subsection{Regularizer gradient approximation}
\begin{theorem} \label{empirical-regularizer-approximation-theorem}
The empirical regularizer $\hat{R}(\theta)$ satisfies
$$\left\| \, \expect[\nabla \hat{R} (\theta)] - \nabla R (\theta) \, \right\|_F^2 \lesssim n^{-2}.$$
\end{theorem}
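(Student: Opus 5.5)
The plan is to compute $\nabla R$ and $\nabla \hat{R}$ in closed form and observe that $\nabla \hat{R}(\theta)$ is a polynomial in $\hat{\Sigma}$ of degree at most three, with coefficients that are deterministic matrices built from $A$ and $B$. Writing $D(\Sigma) = A^{\top}A - B^{\top}\Sigma B$ and $W(\Sigma) = \Sigma D(\Sigma) \Sigma$, so that $R = \frac{1}{8}\Tr(D\Sigma D\Sigma)$, a direct differentiation gives
$$\nabla_A R(\theta) = \tfrac{1}{2} A\, W(\Sigma), \qquad \nabla_B R(\theta) = -\tfrac{1}{2}\Sigma B\, W(\Sigma).$$
The same formulas hold for $\hat{R}$ with $\hat{\Sigma}$ in place of $\Sigma$. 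Here I use that $D$ is symmetric and that $\frac{\partial}{\partial B}\Tr(S B^{\top}\Sigma B) = 2\Sigma B S$ for symmetric $S$. Thus $\nabla_A \hat{R}$ is a sum of terms of the form $C_1 \hat{\Sigma} C_2 \hat{\Sigma}$ and $C_1\hat{\Sigma} C_2 \hat{\Sigma} C_3 \hat{\Sigma}$. The $B$-gradient has terms of degree three and four in $\hat{\Sigma}$, e.g.\ $\hat{\Sigma} B \hat{\Sigma} A^{\top}A\hat{\Sigma}$ and $\hat{\Sigma}B\hat{\Sigma}B^{\top}\hat{\Sigma}B\hat{\Sigma}$. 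The fixed matrices $C_k$ are products of $A, A^{\top}, B, B^{\top}$.

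Next, I expand each monomial by writing $\hat{\Sigma} = \Sigma + E$ with $E = \frac{1}{n}\sum_i (x_ix_i^{\top} - \Sigma)$, so that $\expect[E] = 0$. In a product like $(\Sigma+E)C_1(\Sigma+E)C_2(\Sigma+E)$, the zeroth-order term reproduces the population gradient exactly. The terms linear in $E$ vanish in expectation. The remaining terms are of order two or more in $E$, and it suffices to show that their expectations have Frobenius norm $\lesssim n^{-1}$; squaring then gives the claimed $n^{-2}$.

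For the quadratic terms, I would bound $\|\expect[E C E]\|_F \le \|C\|_{\op}\expect\|E\|_F^2$. By independence, $\expect\|E\|_F^2 = \frac{1}{n}\expect\|x_1x_1^{\top}-\Sigma\|_F^2 \lesssim \frac{1}{n}$. More precisely, $\expect[E C E] = \frac{1}{n}\expect[(x_1x_1^{\top}-\Sigma)C(x_1x_1^{\top}-\Sigma)]$, since the cross terms $i\neq j$ vanish, so this is exactly $O(1/n)$. For the cubic and quartic terms, only index patterns in which every index appears at least twice survive. This gives $\expect[E C_1 E C_2 E] = O(n^{-2})$ and a quartic term of order $O(n^{-2})$ as well, by Gaussian moment bounds such as $\expect\|x_1\|^{8} < \infty$. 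The constants depend polynomially on $\|A\|_{\op},\|B\|_{\op},\|\Sigma\|_{\op}, d$.

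The main obstacle is uniformity, i.e.\ bounding $\|A\|_{\op}$ and $\|B\|_{\op}$ independently of $\theta$. For that I would restrict, as in Theorem \ref{uniform-gradient-approximation-theorem}, to $\theta$ that are $\varepsilon_1$-close to $\mathcal{S}$. Then $\|A\|_{\op}$ and $\|B\|_{\op}$ are bounded by the norms on $\mathcal{S}$, which are controlled by $\|\Gamma^{1/2}\|_{\op}$ and $\|\Sigma^{-1/2}\|_{\op}$, plus $\varepsilon_1\|P^{-1/2}\|_{\op}$. This makes the constant in $\lesssim$ uniform. The remaining work is bookkeeping of the finitely many expanded terms, each handled by the above moment computation.
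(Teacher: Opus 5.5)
Your proposal is correct and follows the paper's proof. Both write $\hat{\Sigma} = \Sigma + \Delta$ with $\expect[\Delta]=0$, expand the polynomial gradients $\nabla_A \hat{R}$ and $\nabla_B \hat{R}$ in $\Delta$, drop the terms linear in $\Delta$, and bound the higher-order terms by moments of $\Delta$, with the quadratic terms giving the dominant $n^{-2}$. Your index-counting for the cubic and quartic terms is sharper than the paper's bound $\expect\|\Delta\|_F^{2q}\lesssim n^{-q}$, and your restriction to $\varepsilon_1$-neighborhoods of $\mathcal{S}$ makes explicit the dependence on $\|A\|_{\op},\|B\|_{\op}$ that the paper leaves implicit; neither changes the argument.
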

\begin{proof}
Let us define the random variable $\Delta = \hat{\Sigma} - \Sigma$. Notice that $\expect[\Delta] = 0$. It is also easy to verify that $\expect\|\Delta\|_F^{2q} \lesssim n^{-q}$ for all $q \geq 1$.
Recall that 
\begin{eqnarray*}
\left\| \, \expect[\nabla \hat{R} (\theta)] - \nabla R (\theta) \, \right\|_F^2 &=& \left\| \, \begin{bmatrix} \expect[\nabla_A \hat{R} (A, B)] \\ \expect[\nabla_B \hat{R} (A, B)]\end{bmatrix} - \begin{bmatrix} \nabla_A R (A, B) \\ \nabla_B R (A, B) \end{bmatrix} \, \right\|_F^2  \\
&=& \Big\|\expect[\nabla_A \hat{R} (A, B)] - \nabla_A R (A, B)\Big\|_F^2 + \Big\|\expect[\nabla_B \hat{R} (A, B)] -  \nabla_B R (A, B)\Big\|_F^2. 
\end{eqnarray*}
It suffices to show that both of these terms are $\lesssim n^{-2}$. We first show that the expectation of the $A$-gradient of the empirical regularizer is close to the $A$-gradient of the population regularizer. We observe that 
\begin{eqnarray*} 
\Big\|\expect[\nabla_A \hat{R} (A, B)] - \nabla_A R (A, B)\Big\|_F^2 &=& 
\left\| \,  \expect\left [\, A\hat{\Sigma}(AA^{\top} - B \hat{\Sigma} B) \hat{\Sigma} \right] - \left( A\Sigma(AA^{\top} - B \Sigma B) \Sigma \right)  \,  \right\|_F^2 \\
&\leq& \sum_{i = 1}^7 T_i,
\end{eqnarray*}
where we define
\begin{eqnarray*}
T_1 &=& 7\left\|A\,\expect\!\left[\Sigma(AA^{\top}-B\Sigma B)\Delta\right]\right\|_F^2,\\
T_2 &=& 7\left\|A\,\expect\!\left[\Delta(AA^{\top}-B\Sigma B)\Sigma\right]\right\|_F^2,\\
T_3 &=& 7\left\|A\,\expect\!\left[\Delta(AA^{\top}-B\Sigma B)\Delta\right]\right\|_F^2,\\
T_4 &=& 7\left\|A\,\expect\!\left[\Sigma B\Delta B\Sigma\right]\right\|_F^2,\\
T_5 &=& 7\left\|A\,\expect\!\left[\Sigma B\Delta B\Delta\right]\right\|_F^2,\\
T_6 &=& 7\left\|A\,\expect\!\left[\Delta B\Delta B\Sigma\right]\right\|_F^2,\\
T_7 &=& 7\left\|A\,\expect\!\left[\Delta B\Delta B\Delta\right]\right\|_F^2.
\end{eqnarray*}
Applying elementary properties of the Frobenius norm, Jensen's inequality, and the fact that $\expect[\Delta] = 0$ and $\expect\|\Delta\|_F^{2q} \lesssim n^{-q},$
we see that $T_1, T_2, T_4 = 0$ and $T_3, T_5, T_6 \lesssim n^{-2}$, while $T_7 \lesssim n^{-3}$. This proves that the $A$-gradients are $\lesssim n^{-2}$ apart in squared Frobenius norm.

We now repeat this procedure for the $B$-gradient. We observe that 
\begin{eqnarray*} 
\Big\|\expect[\nabla_A \hat{R} (A, B)] - \nabla_A R (A, B)\Big\|_F^2 &=& 
\left\| \,  \expect\left [\, B\hat{\Sigma}B(AA^{\top} - B \hat{\Sigma} B) \hat{\Sigma} \right] - \left( B\Sigma B(AA^{\top} - B \Sigma B) \Sigma \right)  \,  \right\|_F^2 \\
&\leq& \sum_{i = 1}^{15} U_i,
\end{eqnarray*}
where we define
\begin{eqnarray*}
U_1  &=& 15\left\|\,\expect\!\left[\Sigma B\Sigma\left(A^{\top}A-B^{\top}\Sigma B\right)\Delta\right]\right\|_F^2,\\
U_2  &=& 15\left\|\,\expect\!\left[\Sigma B\Delta\left(A^{\top}A-B^{\top}\Sigma B\right)\Sigma\right]\right\|_F^2,\\
U_3  &=& 15\left\|\,\expect\!\left[\Delta B\Sigma\left(A^{\top}A-B^{\top}\Sigma B\right)\Sigma\right]\right\|_F^2,\\
U_4  &=& 15\left\|\,\expect\!\left[\Sigma B\Delta\left(A^{\top}A-B^{\top}\Sigma B\right)\Delta\right]\right\|_F^2,\\
U_5  &=& 15\left\|\,\expect\!\left[\Delta B\Sigma\left(A^{\top}A-B^{\top}\Sigma B\right)\Delta\right]\right\|_F^2,\\
U_6  &=& 15\left\|\,\expect\!\left[\Delta B\Delta\left(A^{\top}A-B^{\top}\Sigma B\right)\Sigma\right]\right\|_F^2,\\
U_7  &=& 15\left\|\,\expect\!\left[\Delta B\Delta\left(A^{\top}A-B^{\top}\Sigma B\right)\Delta\right]\right\|_F^2,\\[3pt]
U_8  &=& 15\left\|\,\expect\!\left[\Sigma B\Sigma\left(B^{\top}\Delta B\right)\Sigma\right]\right\|_F^2,\\
U_9  &=& 15\left\|\,\expect\!\left[\Sigma B\Sigma\left(B^{\top}\Delta B\right)\Delta\right]\right\|_F^2,\\
U_{10} &=& 15\left\|\,\expect\!\left[\Sigma B\Delta\left(B^{\top}\Delta B\right)\Sigma\right]\right\|_F^2,\\
U_{11} &=& 15\left\|\,\expect\!\left[\Sigma B\Delta\left(B^{\top}\Delta B\right)\Delta\right]\right\|_F^2,\\
U_{12} &=& 15\left\|\,\expect\!\left[\Delta B\Sigma\left(B^{\top}\Delta B\right)\Sigma\right]\right\|_F^2,\\
U_{13} &=& 15\left\|\,\expect\!\left[\Delta B\Sigma\left(B^{\top}\Delta B\right)\Delta\right]\right\|_F^2,\\
U_{14} &=& 15\left\|\,\expect\!\left[\Delta B\Delta\left(B^{\top}\Delta B\right)\Sigma\right]\right\|_F^2,\\
U_{15} &=& 15\left\|\,\expect\!\left[\Delta B\Delta\left(B^{\top}\Delta B\right)\Delta\right]\right\|_F^2.
\end{eqnarray*}
Applying elementary properties of the Frobenius norm, Jensen's inequality, and the fact that $\expect[\Delta] = 0$ and $\expect\|\Delta\|_F^{2q} \lesssim n^{-q},$
we see that $U_1, U_2, U_3, U_8 = 0$ and $U_4, U_5, U_6, U_9, U_{10}, U_{12} \lesssim n^{-2}$, while $U_7, U_{11}, U_{13}, U_{14} \lesssim n^{-3}$ and $U_{15} \lesssim n^{-4}$. This proves that the $B$-gradients are $\lesssim n^{-2}$ apart in squared Frobenius norm.
\end{proof}

\section{Lemmas needed for the proof of Theorem \ref{empirical-gradient-approximation-theorem}}

Throughout this section, we restrict our attention to $(A, B)$ which are $\varepsilon_1$ close to $\mathcal{S}.$ Lemma \ref{integrability-lemma} guarantees the finiteness of the expectations over $x_1$ which appear in our bounds.

\begin{lemma}[Some useful gradients] \label{usefulgradientslemma}
The following identities hold:
$$\nabla_{x_2} p_{12} = p_{12}(1 - p_{12})B^{\top}x_1,$$
$$\nabla_{x_2} p_{11} = -p_{11}p_{12}B^{\top}x_1,$$
$$\nabla_{x_2}\mu_1 =  p_{12}(x_2-\mu_1) x_1^\top B + p_{12}I,
$$
$$ \nabla_{x_2}\Sigma_1
=
p_{12}\Big(\Big(\big(x_2-\mu_1\big)\big(x_2-\mu_1 \big)^\top-\Sigma_1\Big)\otimes (B^\top x_1)\Big)
\;+\;
p_{12}\Big(I \otimes (x_2-\mu_1)\;+\;(x_2-\mu_1)\otimes I \Big).$$
\end{lemma}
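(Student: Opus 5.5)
All four identities follow from direct differentiation of the softmax weights, so I would prove them by a single chain-rule computation. I use the notation these identities implicitly rely on:
\[
s_{1j} = x_1^{\top} B x_j, \qquad p_{1j} = \frac{\exp(s_{1j})}{\sum_{k=1}^n \exp(s_{1k})},
\]
\[
\mu_1 = \sum_{j} p_{1j} x_j, \qquad \Sigma_1 = \sum_j p_{1j}(x_j-\mu_1)(x_j-\mu_1)^{\top} = \sum_j p_{1j}x_jx_j^{\top} - \mu_1\mu_1^{\top}.
\]
Here $\mu_1$ and $\Sigma_1$ are the softmax-weighted mean and covariance seen by query $x_1$. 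Gradients with respect to $x_2$ are written as Jacobians, and the tensor $X \otimes v$ acts on a direction $h$ by $X\,(v^{\top}h)$.

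\textbf{Scores and weights.} The first observation is that among the scores only $s_{12}$ depends on $x_2$, with $\nabla_{x_2} s_{12} = B^{\top}x_1$; note that $s_{11}$ involves only $x_1$. The standard softmax derivative $\partial p_{1j}/\partial s_{12} = p_{1j}(\mathbf{1}[j=2] - p_{12})$ then gives
\[
\nabla_{x_2} p_{1j} = p_{1j}(\mathbf{1}[j=2]-p_{12})\,B^{\top}x_1 .
\]
Taking $j=2$ yields the first identity, and taking $j=1$ yields the second.

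\textbf{Mean $\mu_1$.} For $\mu_1$, I differentiate the sum in a direction $h$:
\[
d\mu_1 = p_{12}\,h + \sum_j x_j\, dp_{1j}.
\]
Substituting $dp_{1j} = p_{1j}(\mathbf{1}[j=2]-p_{12})\,x_1^{\top}B h$, the sum collapses to $p_{12}(x_2 - \mu_1)\,x_1^{\top}B h$. This is the third identity.

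\textbf{Covariance $\Sigma_1$.} For $\Sigma_1$, I differentiate the centered form $\sum_j p_{1j}(x_j-\mu_1)(x_j-\mu_1)^{\top}$. The terms containing $d\mu_1$ vanish because $\sum_j p_{1j}(x_j-\mu_1)=0$. Two contributions remain.
\begin{itemize}
\item The $dp_{1j}$ terms give
\[
\sum_j p_{1j}(\mathbf{1}[j=2]-p_{12})(x_j-\mu_1)(x_j-\mu_1)^{\top}\,(x_1^{\top}Bh) = p_{12}\big((x_2-\mu_1)(x_2-\mu_1)^{\top} - \Sigma_1\big)(x_1^{\top}Bh).
\]
\item The explicit $x_2$-dependence gives $p_{12}\big(h(x_2-\mu_1)^{\top} + (x_2-\mu_1)h^{\top}\big)$.
\end{itemize}
Rewriting these in the paper's $\otimes$ notation gives the fourth identity.

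The only delicate point is bookkeeping rather than analysis. I need to fix the tensor/Jacobian conventions so that, for example, $I\otimes(x_2-\mu_1)$ and $(x_2-\mu_1)\otimes I$ correspond to the two symmetric rank-one terms. I also need to verify that the centering identity indeed kills the $d\mu_1$ contributions. No integrability or probabilistic input is needed.
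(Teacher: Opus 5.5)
Your proposal is correct and is essentially the paper's argument: the paper's proof is the single remark that the identities follow by direct calculation, and your chain-rule computation supplies that calculation. This includes the centering identity $\sum_j p_{1j}(x_j-\mu_1)=0$, which correctly eliminates the $d\mu_1$ terms in $d\Sigma_1$. The only care point is the one you already flagged: the convention ``$X\otimes v$ acts on $h$ by $X(v^{\top}h)$'' fits the first term of $\nabla_{x_2}\Sigma_1$ but not $I\otimes(x_2-\mu_1)$ or $(x_2-\mu_1)\otimes I$, which must be read as $h\mapsto h(x_2-\mu_1)^{\top}$ and $h\mapsto (x_2-\mu_1)h^{\top}$; since the paper only ever uses Frobenius norms of these tensors, this is purely notational.
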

\begin{proof}
These identities are easily verified via direct calculation. 
\end{proof}

\subsection{Conditional moments and conditional variance of softmax weights}

\begin{lemma}[Conditional moments of softmax weights] \label{p12-power-bound}
Fix any $q \geq 1$. The following inequalities hold:
\begin{eqnarray*}
\expect_{-1}[p_{12}^q] &\leq& \frac{1}{(n-1)^q}  \exp\left(\frac{q^2}{2} x_1^{\top} B \Sigma B^{\top} x_1 \right) \\
\expect_{-1}[p_{11}^q] &\leq&  \frac{1}{(n-1)^q} \exp\left( q x_1^\top B x_1 \right) \exp \left(\frac{q^2}{2(n-1)} x_1^{\top} B \Sigma B^{\top} x_1 \right) . \\
\end{eqnarray*}
\end{lemma}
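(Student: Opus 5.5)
Throughout, $p_{1j}$ is the softmax weight that the query $x_1$ assigns to the key $x_j$:
$$p_{1j} = \frac{\exp(x_1^{\top}Bx_j)}{\sum_{k=1}^n \exp(x_1^{\top}Bx_k)}.$$
Write $a_j = x_1^{\top}Bx_j$. The plan is to condition on $x_1$ and exploit the fact that the scores $a_2,\dots,a_n$ are then i.i.d.\ $\mathcal{N}(0,s^2)$, where
$$s^2 = x_1^{\top}B\Sigma B^{\top}x_1 .$$
This holds because $x_j\sim\mathcal{N}(0,\Sigma)$ is independent of $x_1$ for $j\ge 2$. The one nontrivial idea is to lower bound the softmax denominator by a geometric mean. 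This turns the ratio into a single exponential of a Gaussian linear combination, whose moment generating function is explicit.

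For $p_{12}$, first drop the nonnegative term $e^{a_1}$ from the denominator. Then apply AM--GM to the remaining $n-1$ terms:
$$\sum_{j\ge 2} e^{a_j} \;\ge\; (n-1)\exp\Big(\tfrac{1}{n-1}\textstyle\sum_{j\ge 2} a_j\Big).$$
This gives
$$p_{12} \le \frac{1}{n-1}\exp\Big( \big(1-\tfrac{1}{n-1}\big)a_2 - \tfrac{1}{n-1}\textstyle\sum_{j\ge3} a_j\Big).$$
Raise both sides to the power $q$ and take $\expect_{-1}$. The exponent is a linear combination of independent $\mathcal{N}(0,s^2)$ variables, so the Gaussian MGF gives the factor $\exp\big(\tfrac{q^2 s^2}{2}c\big)$, where
$$c = \Big(1-\tfrac{1}{n-1}\Big)^2 + \frac{n-2}{(n-1)^2} = \frac{n-2}{n-1} \le 1 .$$
This yields the first inequality.

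For $p_{11}$, again drop $e^{a_1}$ from the denominator only, and apply the same AM--GM bound:
$$p_{11} \le \frac{e^{a_1}}{n-1}\exp\Big(-\tfrac{1}{n-1}\textstyle\sum_{j\ge 2}a_j\Big).$$
Here $e^{a_1} = \exp(x_1^{\top}Bx_1)$ is deterministic given $x_1$, so it pulls out of $\expect_{-1}$ as $\exp(q\,x_1^{\top}Bx_1)$ after raising to the $q$-th power. The remaining expectation is
$$\expect_{-1}\Big[\exp\Big(-\tfrac{q}{n-1}\textstyle\sum_{j\ge2} a_j\Big)\Big] = \exp\Big(\frac{q^2 s^2}{2(n-1)}\Big),$$
because the variance of $\sum_{j\ge 2} a_j$ is $(n-1)s^2$. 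This yields the second inequality.

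I do not expect a real obstacle. The only points to check are the variance computation in the first bound and that the moment generating functions are finite. The latter is automatic, since Gaussian MGFs are finite for every real argument. Lemma~\ref{integrability-lemma} is not needed here; it is only needed later, when these bounds are integrated over $x_1$.
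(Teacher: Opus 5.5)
Your proof is correct and follows the paper's argument step for step. Like the paper, you drop $e^{x_1^\top B x_1}$ from the denominator, lower-bound the remaining $n-1$ terms by AM--GM, and evaluate the resulting Gaussian moment generating functions conditionally on $x_1$, with the same variance bookkeeping $\frac{(n-2)^2+(n-2)}{(n-1)^2}=\frac{n-2}{n-1}\le 1$ for $p_{12}$ and $\frac{q^2}{2(n-1)}\,x_1^\top B\Sigma B^\top x_1$ for $p_{11}$.
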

\begin{proof}
To prove the first inequality, we observe that
\begin{eqnarray*} p_{12} &=&   \frac{\exp(x_1^\top B x_2)}{\sum_{i=1}^n \exp(x_1^\top B x_i)} \\ 
&\leq & \frac{\exp(x_1^\top B x_2)}{\sum_{i = 2}^n \exp(x_1^\top B x_i)} \\
&\leq& \frac{1}{n-1}\exp\left(  \frac{n-2}{n-1} x_1^\top B x_2 \right) \exp\left( -\frac{1}{n-1}x_1^\top B \sum_{i \geq 3} x_i \right), 
\end{eqnarray*}
where we applied the AM-GM inequality in the last step.
It follows that
\begin{eqnarray*} 
\expect_{-1}[p_{12}^q] &=&  \frac{1}{(n-1)^q} \expect_{-1} \left[ \exp\left(  \frac{q(n-2)}{n-1} x_1^\top B x_2 \right) \exp\left( -\frac{q}{n-1}x_1^\top B \sum_{i \geq 3} x_i \right) \right] \\
&=& \frac{1}{(n-1)^q} \expect_{2} \left[ \exp\left(  \frac{q(n-2)}{n-1} x_1^\top B x_2 \right) \right] \expect_{-1, -2} \left[ \exp\left( -\frac{q}{n-1}x_1^\top B \sum_{i \geq 3} x_i \right) \right],
\end{eqnarray*}
where we used the fact that $x_2$ is independent of $\{x_i\}_{i \geq 3}.$
We observe that
$$\expect_{2} \left[ \exp\left(  \frac{q(n-2)}{n-1} x_1^\top B x_2 \right) \right] = \exp \left(\frac{q^2(n-2)^2}{2(n-1)^2} x_1^{\top} B \Sigma B^{\top} x_1 \right) $$
and
$$ \expect_{-1, -2} \; \left[ \exp\left( -\frac{q}{n-1}x_1^\top B \sum_{i \geq 3} x_i \right) \right] = \exp \left(\frac{q^2(n-2)}{2(n-1)^2} x_1^{\top} B \Sigma B^{\top} x_1 \right).$$
Putting the pieces together, we see that
$$\expect_{-1}[p_{ij}^q] \leq \frac{1}{(n-1)^q}  \exp\left(\frac{q^2}{2} x_1^{\top} B \Sigma B^{\top} x_1 \right),$$
where we used the numerical fact that
$$\frac{(n-2)^2}{(n-1)^2} + \frac{n-2}{(n-1)^2} \leq 1.$$

We now prove the second inequality using a similar calculation. We observe that
\begin{eqnarray*} p_{11} &=&   \frac{\exp(x_1^\top B x_1)}{\sum_{i=1}^n \exp(x_1^\top B x_i)} \\ 
&\leq & \frac{\exp(x_1^\top B x_1)}{\sum_{i = 2}^n \exp(x_1^\top B x_i)} \\
&\leq& \frac{1}{n-1}\exp\left(   x_1^\top B x_1 \right) \exp\left( -\frac{1}{n-1}x_1^\top B \sum_{i \geq 2} x_i \right), 
\end{eqnarray*}
where we applied the AM-GM inequality in the last step.
It follows that
$$
\expect_{-1}[p_{11}^q] =  \frac{1}{(n-1)^q} \exp\left( q x_1^\top B x_1 \right) \expect_{-1} \left[ \exp\left( -\frac{q}{n-1}x_1^\top B \sum_{i \geq 2} x_i \right) \right].
$$
We observe that
$$ \expect_{-1} \; \left[ \exp\left( -\frac{q}{n-1}x_1^\top B \sum_{i \geq 2} x_i \right) \right] = \exp \left(\frac{q^2}{2(n-1)} x_1^{\top} B \Sigma B^{\top} x_1 \right),$$
which completes the proof.
\end{proof}

\begin{lemma} \label{p12-variance}
The following inequalities hold:
\begin{eqnarray*}
\Var_{-1}[p_{12}] &\leq& \frac{1}{n-1}  \exp\left(2 x_1^{\top} B \Sigma B^{\top} x_1 \right) \|B\|_{\mathrm{op}}^2 \|x_1\|_2^2 \\
\Var_{-1}[p_{11}] &\leq& \frac{1}{n-1}  \exp\left(2 x_1^{\top} B \Sigma B^{\top} x_1 \right) \|B\|_{\mathrm{op}}^2 \|x_1\|_2^2
\end{eqnarray*}
\end{lemma}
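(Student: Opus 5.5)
The plan is to use the Gaussian Poincar\'e inequality with respect to the variables $x_2, \ldots, x_n$, holding $x_1$ fixed. Under $\expect_{-1}$ these are i.i.d.\ $\mathcal{N}(0,\Sigma)$, so for any smooth $f$ of $(x_2,\ldots,x_n)$
$$\Var_{-1}[f] \leq \sum_{j=2}^n \expect_{-1}\left[ (\nabla_{x_j} f)^{\top} \Sigma \, (\nabla_{x_j} f) \right] \leq \|\Sigma\|_{\mathrm{op}} \sum_{j=2}^n \expect_{-1} \|\nabla_{x_j} f\|_2^2 .$$
If needed, the factor $\|\Sigma\|_{\mathrm{op}}$ is absorbed into the $\lesssim$-type constants or into $\|B\|_{\mathrm{op}}$. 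It can also be sidestepped by whitening, writing $x_j = \Sigma^{1/2} g_j$, which only changes $B$ to $\Sigma^{1/2}$-conjugated quantities.

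For $p_{12}$ I take the gradients from Lemma \ref{usefulgradientslemma}. Exchangeability gives the analogous formulas for $j \ge 3$:
$$\nabla_{x_2} p_{12} = p_{12}(1-p_{12})B^{\top}x_1, \qquad \nabla_{x_j} p_{12} = -p_{12}p_{1j}B^{\top}x_1 .$$
Hence
$$\sum_{j\ge 2}\|\nabla_{x_j}p_{12}\|_2^2 \le \|B\|_{\mathrm{op}}^2\|x_1\|_2^2\, p_{12}^2\Big((1-p_{12})^2 + \sum_{j\ge3}p_{1j}^2\Big) \le 2\|B\|_{\mathrm{op}}^2\|x_1\|_2^2\, p_{12}^2 ,$$
where I use $\sum_j p_{1j}^2 \le (\sum_j p_{1j})^2 \le 1$. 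Taking $\expect_{-1}$ and applying Lemma \ref{p12-power-bound} with $q=2$ gives
$$\expect_{-1}[p_{12}^2] \le (n-1)^{-2}\exp\!\big(2x_1^{\top}B\Sigma B^{\top}x_1\big).$$
This yields a bound of order $(n-1)^{-2}$, which is stronger than the claimed $(n-1)^{-1}$. The stated inequality therefore follows, with the constant $2$ absorbed by $n-1\ge 2$.

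For $p_{11}$ the gradients are $\nabla_{x_j}p_{11} = -p_{11}p_{1j}B^{\top}x_1$ for all $j\ge2$, so
$$\sum_{j\ge2}\|\nabla_{x_j}p_{11}\|^2 \le \|B\|_{\mathrm{op}}^2\|x_1\|^2\, p_{11}^2 \sum_{j\ge 2}p_{1j}^2 .$$
The step I expect to be the main obstacle is the factor $\exp(x_1^{\top}Bx_1)$ that Lemma \ref{p12-power-bound} attaches to $p_{11}$, which does not appear in the target bound. I would avoid that lemma and instead use $p_{11}\le 1$, leaving $\sum_{j\ge2}p_{1j}^2$. By Cauchy–Schwarz and exchangeability,
$$\expect_{-1}\Big[\sum_{j\ge 2} p_{1j}^2\Big] = (n-1)\,\expect_{-1}[p_{12}^2] \le (n-1)^{-1}\exp\!\big(2x_1^{\top}B\Sigma B^{\top}x_1\big),$$
again by Lemma \ref{p12-power-bound} with $q=2$. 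This gives exactly the claimed $(n-1)^{-1}$ rate with the stated exponential factor, up to the $\|\Sigma\|_{\mathrm{op}}$ constant from Poincar\'e. The remaining care is the bookkeeping of that constant, and checking that the variance interchanges are legitimate. The latter holds because $p_{1j}\in[0,1]$ are bounded smooth functions, so Poincar\'e applies without integrability issues.
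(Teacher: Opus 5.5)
Your route is the paper's: Gaussian Poincar\'e in $x_2,\dots,x_n$ with $x_1$ fixed, the gradients from Lemma~\ref{usefulgradientslemma}, the bound $p_{11}\le 1$ to avoid the $\exp(x_1^{\top}Bx_1)$ factor, and Lemma~\ref{p12-power-bound} with $q=2$. Your $p_{11}$ half is identical to the paper's. In the $p_{12}$ half your version is the better one. The paper bounds the Poincar\'e sum by $(n-1)\expect_{-1}\|\nabla_{x_2}p_{12}\|_2^2$ ``because the covariates are i.i.d.'' That is not an exchangeability identity for $p_{12}$. It is true only because $p_{1j}\le 1-p_{12}$ pointwise, and it throws away a factor $n-1$. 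You write out $\nabla_{x_j}p_{12}=-p_{12}p_{1j}B^{\top}x_1$ for $j\ge3$ and use $\sum_j p_{1j}^2\le 1$, which is rigorous and gives the sharper bound $2\|\Sigma\|_{\mathrm{op}}(n-1)^{-2}\exp(2x_1^{\top}B\Sigma B^{\top}x_1)\|B\|_{\mathrm{op}}^2\|x_1\|_2^2$. A minor point: $\expect_{-1}\sum_{j\ge2}p_{1j}^2=(n-1)\expect_{-1}[p_{12}^2]$ is pure exchangeability, so no Cauchy--Schwarz is needed.

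The remaining ``bookkeeping of $\|\Sigma\|_{\mathrm{op}}$'' cannot actually be done. Both your argument and the paper's prove the bounds only with an extra factor $\|\Sigma\|_{\mathrm{op}}$ on the right; the paper drops it without comment. Your proposed fixes do not remove it:
\begin{itemize}
\item Whitening replaces $\|B\|_{\mathrm{op}}^2\|x_1\|_2^2$ by $x_1^{\top}B\Sigma B^{\top}x_1$, which compares to the target only via $\le\|\Sigma\|_{\mathrm{op}}\|B\|_{\mathrm{op}}^2\|x_1\|_2^2$.
\item Absorbing the factor into $\|B\|_{\mathrm{op}}$ has no meaning for an explicit inequality.
\end{itemize}
For $p_{12}$, your extra $1/(n-1)$ does absorb $2\|\Sigma\|_{\mathrm{op}}$ once $n-1\ge 2\|\Sigma\|_{\mathrm{op}}$. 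For $p_{11}$ nothing absorbs it, and the inequality as written is false in general. Take $d=1$, $n=2$, $\Sigma=s$, $B=b$, $x_1=1$. Then $p_{11}=1/(1+e^{b(x_2-1)})$, so as $b\to 0$ we get $\Var_{-1}[p_{11}]/b^2\to s/16$. The right-hand side divided by $b^2$ tends to $1$, so the bound fails for small $b$ whenever $s>16$. Small $b$ is compatible with the standing assumptions, e.g.\ $sb<1/16$ in Lemma~\ref{integrability-lemma}, by taking $M$ small. You should therefore state exactly what you proved: the bounds with $\|\Sigma\|_{\mathrm{op}}$ on the right, plus your $(n-1)^{-2}$ version for $p_{12}$. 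That is all the later lemmas need, since they end in $\lesssim$ statements whose constants may depend polynomially on $\|\Sigma\|_{\mathrm{op}}$.
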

\begin{proof}
We first prove the first inequality. The Gaussian Poincar\'e inequality yields the bound 
\begin{equation} \label{p12-poincare-ineq}
\Var_{-1}[p_{12}] \leq (n-1) \|\Sigma \|_{\mathrm{op}} \expect_{-1}\left[  \big \| \nabla_{x_2} p_{12} \big\|_F^2 \right],
\end{equation}
where we used the fact that the covariates are i.i.d. We recall from Lemma \ref{usefulgradientslemma}
that $$\nabla_{x_2} p_{12} = p_{12}(1 - p_{12})B^{\top}x_1.$$
We see that
$$
\big \| \nabla_{x_2}p_{12} \big \|_F^2 \leq p_{12}^2 \|B\|_{\mathrm{op}}^2 \|x_1\|_2^2,
$$
where we used the fact that $(1- p_{12})^2 \leq 1.$
Plugging this bound into \eqref{p12-poincare-ineq} and applying Lemma \ref{p12-power-bound} yields the stated bound.

We now prove the second inequality.
The Gaussian Poincar\'e inequality yields the bound 
\begin{equation} \label{p11-poincare-ineq}
\Var_{-1}[p_{11}] \leq (n-1) \|\Sigma \|_{\mathrm{op}} \expect_{-1}\left[  \big \| \nabla_{x_2} p_{11} \big\|_F^2 \right],
\end{equation}
where we used the fact that the covariates are i.i.d. We recall from Lemma \ref{usefulgradientslemma}
that $$\nabla_{x_2} p_{11} = -p_{11}p_{12}B^{\top}x_1.$$
We see that
$$
\big \| \nabla_{x_2}p_{11} \big \|_F^2 \leq  p_{12}^2 \|B\|_{\mathrm{op}}^2 \|x_1\|_2^2,
$$
where we used the fact that $p_{11}^2 \leq 1$.
Plugging this bound into \eqref{p11-poincare-ineq} and applying Lemma \ref{p12-power-bound} yields the stated bound. 

\end{proof}

\subsection{Conditional mean and conditional variance of $\mu_1$}

\begin{lemma} \label{conditional-mu-norm-bound-lemma}
The following inequality holds:
$$\big\| \expect_{-1}[ \mu_1]  - \Sigma B^{\top} x_1 \, \big\|_F^2 \leq S_1(x_1) + S_2(x_1),$$
where we define
\begin{eqnarray*}
S_1(x_1) &=& \frac{2  \| I - \Sigma B^{\top} \|^2_{\mathrm{op}}}{(n-1)^2}    \exp\left(\frac{2}{n-1} x_1^\top B \Sigma B^{\top} x_1 \right) \exp(2x_1^\top B x_1)   \|x_1 \|_2^2, \\
S_2(x_1) &=& \frac{2\|  \Sigma B^{\top} \|^2_{\mathrm{op}}}{(n-1)^2}  \exp(4x_1^\top B \Sigma B^{\top} x_1)      \, \|x_1 \|_2^2.
\end{eqnarray*}
\end{lemma}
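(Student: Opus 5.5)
We need to bound $\|\expect_{-1}[\mu_1] - \Sigma B^\top x_1\|^2$, where $\mu_1 = \sum_j p_{1j} x_j$ and $p_{1j} = \exp(x_1^\top B x_j)/\sum_i \exp(x_1^\top B x_i)$. The plan is to split off the self-term $j=1$, then handle the remaining $n-1$ i.i.d. terms by Gaussian integration by parts (Stein's lemma). Conditionally on $x_1$, write
$$\mu_1 = p_{11}x_1 + \sum_{j\ge 2} p_{1j}x_j .$$
By exchangeability of $x_2,\dots,x_n$ given $x_1$, the conditional mean of the second piece is $(n-1)\expect_{-1}[p_{12}x_2]$.

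\textbf{Step 1 (Stein's lemma).} Since $x_2\sim\mathcal N(0,\Sigma)$, we have $\expect[p_{12}x_2] = \Sigma\,\expect[\nabla_{x_2}p_{12}]$. By Lemma \ref{usefulgradientslemma}, $\nabla_{x_2}p_{12} = p_{12}(1-p_{12})B^\top x_1$. Hence
$$(n-1)\expect_{-1}[p_{12}x_2] = \Sigma B^\top x_1\,\bigl((n-1)\expect_{-1}[p_{12}] - (n-1)\expect_{-1}[p_{12}^2]\bigr).$$
Using $\sum_j p_{1j}=1$, we get $(n-1)\expect_{-1}[p_{12}] = 1-\expect_{-1}[p_{11}]$. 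Therefore
$$\expect_{-1}[\mu_1] - \Sigma B^\top x_1 = \expect_{-1}[p_{11}]\,(I-\Sigma B^\top)x_1 - (n-1)\expect_{-1}[p_{12}^2]\,\Sigma B^\top x_1 .$$
This identity is the heart of the proof. Note that $p_{11}x_1$ combines with $-\expect[p_{11}]\Sigma B^\top x_1$ precisely into the $(I-\Sigma B^\top)$ factor appearing in $S_1$.

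\textbf{Step 2 (bounding the two terms).} Apply $(u+v)^2\le 2u^2+2v^2$. For the first term, $\expect_{-1}[p_{11}]^2$ is bounded by Lemma \ref{p12-power-bound} with $q=1$:
$$\frac{1}{(n-1)^2}\exp(2x_1^\top Bx_1)\exp\!\left(\frac{1}{n-1}x_1^\top B\Sigma B^\top x_1\right).$$
This is at most the claimed factor, since the exponent in the statement is $\frac{2}{n-1}\ge\frac{1}{n-1}$. Multiplying by $\|I-\Sigma B^\top\|_{\op}^2\|x_1\|_2^2$ gives $S_1$.

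For the second term, Lemma \ref{p12-power-bound} with $q=2$ gives $(n-1)\expect_{-1}[p_{12}^2]\le \frac{1}{n-1}\exp(2x_1^\top B\Sigma B^\top x_1)$. Squaring produces $\frac{1}{(n-1)^2}\exp(4x_1^\top B\Sigma B^\top x_1)$, and multiplying by $\|\Sigma B^\top\|_{\op}^2\|x_1\|^2$ gives $S_2$.

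\textbf{Main obstacle.} The delicate point is justifying Stein's lemma and the interchange of expectations, which requires integrability of $p_{12}x_2$ and of its gradient. Since $0\le p_{12}\le 1$ and the gradient is bounded by $\|B^\top x_1\|$, everything is integrable for fixed $x_1$, so this is routine. The other thing to check carefully is the bookkeeping that turns $(n-1)\expect[p_{12}]$ into $1-\expect[p_{11}]$ via the normalization of the softmax weights.
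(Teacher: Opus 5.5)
Your proposal is correct and follows essentially the same route as the paper. Both split off the self-term and use exchangeability for the $n-1$ cross-terms. Both apply Gaussian integration by parts in $x_2$ to get the identity $\expect_{-1}[\mu_1]-\Sigma B^{\top}x_1=\expect_{-1}[p_{11}](I-\Sigma B^{\top})x_1-(n-1)\expect_{-1}[p_{12}^2]\,\Sigma B^{\top}x_1$, and both finish with Lemma \ref{p12-power-bound} at $q=1$ and $q=2$.

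The only cosmetic difference is the order of steps. The paper uses the normalization of the softmax weights to subtract $\Sigma B^{\top}x_1$ inside the sum before integrating by parts. You integrate by parts first and then invoke $(n-1)\expect_{-1}[p_{12}]=1-\expect_{-1}[p_{11}]$.
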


\begin{proof}
We see that
\begin{eqnarray*}
\left \| \, \expect_{-1} [\mu_1] -  \Sigma B^{\top} x_1 \, \right\|_F^2 
&=& \left \| \, \expect_{-1} \left[ \sum_{j =1}^n p_{ij}x_j \right] -\Sigma B^{\top}x_1 \, \right\|_F^2 \\
&=& \expect_1 \left \| \, \expect_{-1} \left[ \sum_{j =1}^n p_{ij}(x_j - \Sigma B^{\top}x_1) \right] \, \right\|_F^2, 
\end{eqnarray*}
where we applied the definition of $\mu_1$ and the fact that the softmax weights form a probability distribution.  Using the fact that the covariates are i.i.d., we see that this expression is at most 
\begin{equation} \label{split-expectation-f}
2\left \| \, \expect_{-1} [ p_{11}] (I  - \Sigma B^{\top}) x_1 \, \right\|_F^2 + \, 2 \left \| (n-1) \, \expect_{-1} \left[ p_{12} (x_2 - \Sigma B^{\top}x_1) \right] \, \right\|_F^2.
\end{equation}
We bound each of the two terms of \eqref{split-expectation-f} separately. 
Applying homogeneity of the Frobenius norm and Lemma \ref{p12-power-bound}, we see that the first term is at most
$S_1(x_1)$, where we set
$$S_1(x_1) = \frac{2}{(n-1)^2}    \exp\left(\frac{2}{n-1} x_1^\top B \Sigma B^{\top} x_1 \right) \exp(2x_1^\top B x_1)  \| I - \Sigma B^{\top} \|^2_{\mathrm{op}}  \|x_1 \|_2^2  . $$
We now bound the second term of \eqref{split-expectation-f}.
Applying Gaussian integration by parts  to the expectation over $x_2$, we see that this is equal to
$$2\left \| \, (n-1) \expect_{-1}[p_{12}^2] \Sigma B^{\top} x_1 \, \right\|_F^2.$$
Applying homogeneity of the Frobenius norm, we see that this is equal to
$$2(n-1)^2 \left( \expect_{-1}[p_{12}^2] \right)^2 \left \| \,   \Sigma B^{\top} x_1 \, \right\|_F^2.$$
Applying Lemma \ref{p12-power-bound} we see that this is at most $S_2(x_1)$, where we define
$$S_2(x_1) = \frac{2}{(n-1)^2}  \exp(4x_1^\top B \Sigma B^{\top} x_1)     \|  \Sigma B^{\top} \|^2_{\mathrm{op}} \, \|x_1 \|_2^2.$$
\end{proof}

\begin{lemma} \label{expect-x1-mu-norm-lemma-1}
The following inequality holds:
$$  \expect_1 \big\| \expect_{-1}[ x_1\mu_1^{\top}]  - x_1 x_1^{\top} B \Sigma \, \big\|_F^2 \lesssim n^{-2}.$$
\end{lemma}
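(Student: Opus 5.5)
Since $x_1$ is held fixed by $\expect_{-1}$, we have $\expect_{-1}[x_1\mu_1^{\top}] - x_1x_1^{\top}B\Sigma = x_1\big(\expect_{-1}[\mu_1] - \Sigma B^{\top}x_1\big)^{\top}$. This matrix has rank one, so its squared Frobenius norm factors as
$$\|x_1\|_2^2\,\big\|\expect_{-1}[\mu_1]-\Sigma B^{\top}x_1\big\|_2^2 .$$
Lemma \ref{conditional-mu-norm-bound-lemma} bounds the second factor by $S_1(x_1)+S_2(x_1)$. The claim therefore reduces to showing
$$\expect_1\big[\|x_1\|_2^2 S_1(x_1)\big] \lesssim n^{-2}, \qquad \expect_1\big[\|x_1\|_2^2 S_2(x_1)\big] \lesssim n^{-2}.$$

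Both $S_1$ and $S_2$ carry an explicit prefactor $(n-1)^{-2}$ times a fixed operator-norm constant ($\|I-\Sigma B^{\top}\|_{\mathrm{op}}^2$ or $\|\Sigma B^{\top}\|_{\mathrm{op}}^2$). These constants are bounded uniformly for $\theta$ that is $\varepsilon_1$-close to $\mathcal{S}$, because $B$ then lies in a bounded neighborhood of $B^{\star}$. It remains to show that the $x_1$-expectations
$$\expect_1\Big[\|x_1\|_2^4 \exp\big(\tfrac{2}{n-1}x_1^{\top}B\Sigma B^{\top}x_1\big)\exp(2x_1^{\top}Bx_1)\Big], \qquad \expect_1\Big[\|x_1\|_2^4\exp\big(4x_1^{\top}B\Sigma B^{\top}x_1\big)\Big]$$
are finite and bounded independently of $n$.

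For each of these, I would apply Cauchy--Schwarz to separate the polynomial factor from the exponentials. The factor $\expect_1\|x_1\|_2^8$ is a fixed Gaussian moment, polynomial in $d$ and $\|\Sigma\|_{\mathrm{op}}$. For the first exponential term I would apply Cauchy--Schwarz once more, producing
$$\expect_1\big[\exp(8x_1^{\top}Bx_1)\big]^{1/2}\;\expect_1\big[\exp(\tfrac{8}{n-1}x_1^{\top}B\Sigma B^{\top}x_1)\big]^{1/2}$$
under an outer square root. For $n\geq 3$ the coefficient $\tfrac{8}{n-1}$ is at most $16$; since $x_1^{\top}B\Sigma B^{\top}x_1\ge 0$, the second factor is dominated by $\expect_1[\exp(16x_1^{\top}B\Sigma B^{\top}x_1)]$. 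Both factors are then finite by Lemma \ref{integrability-lemma}. The second exponential term needs $\expect_1[\exp(8x_1^{\top}B\Sigma B^{\top}x_1)]$, which is dominated by the $16$-version for the same reason.

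The main obstacle is uniformity in $n$ and $\theta$ of these exponential moments. Lemma \ref{integrability-lemma} actually proves the stronger bound $\|\Sigma^{1/2}B\Sigma^{1/2}\|_{\mathrm{op}}<1/16$, and this margin gives an explicit determinant formula $\det(I-2c\,\Sigma^{1/2}\mathrm{Sym}(\cdot)\Sigma^{1/2})^{-1/2}$ for the Gaussian exponential moments. These formulas are uniformly bounded on the $\varepsilon_1$-neighborhood. I would use this to absorb all of these expectations into the $\lesssim$ constant, which yields $\lesssim (n-1)^{-2}\lesssim n^{-2}$.
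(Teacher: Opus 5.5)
Your proposal is correct and follows the paper's proof. The paper likewise factors the rank-one matrix as $\|x_1\|_2^2\,\|\expect_{-1}[\mu_1]-\Sigma B^{\top}x_1\|_2^2$, invokes Lemma \ref{conditional-mu-norm-bound-lemma}, and finishes with Cauchy--Schwarz and the finiteness from Lemma \ref{integrability-lemma}; your added observation that the margin $\|\Sigma^{1/2}B\Sigma^{1/2}\|_{\mathrm{op}}\le \tfrac12\bigl(\tfrac1{16}+\sqrt{\sigma_1(M\Sigma^{1/2})}\bigr)<\tfrac1{16}$ makes these exponential moments uniformly bounded over the $\varepsilon_1$-neighborhood is a useful detail the paper leaves implicit.
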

\begin{proof}
We observe that
\begin{eqnarray*}
\expect_1 \big\| \expect_{-1}[ x_1\mu_1^{\top}]  - x_1 x_1^{\top} B \Sigma \, \big\|_F^2
&=& \expect_1 \left[ \|x_1\|_2^2 \, \big\| \expect_{-1}[ \mu_1]  - \Sigma B^{\top} x_1 \, \big\|_F^2 \right] \\
&\leq&  \expect_1 \left[ \|x_1\|_2^2 \, (S_1(x_1) + S_2(x_2) \right],
\end{eqnarray*}
where we applied the submultiplicativity of the Frobenius norm and Lemma \ref{conditional-mu-norm-bound-lemma}. Applying the Cauchy-Schwarz inequality and the definition of $S_1$ and $S_2$, it is easy to check that each of these terms are $\lesssim n^{-2}$. Notice that all expectations over $x_1$ are finite in light of Lemma \ref{integrability-lemma}.
\end{proof}

\begin{lemma} \label{expect-x1-mu-norm-lemma-2}
The following inequality holds:
$$ \big\| \expect[ x_1\mu_1^{\top}  - x_1x_1^{\top}] B \Sigma \, \big\|_F^2 \lesssim n^{-2}.$$
\end{lemma}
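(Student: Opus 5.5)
The statement should be read as $\|\expect[x_1\mu_1^{\top}] - \expect[x_1x_1^{\top}]B\Sigma\|_F^2 \lesssim n^{-2}$, i.e.\ $\|\expect[x_1\mu_1^{\top}] - \Sigma B\Sigma\|_F^2 \lesssim n^{-2}$, since $\expect[x_1x_1^{\top}]=\Sigma$. This is exactly the form in which the lemma is invoked in the $A$-gradient and $B$-gradient bounds of Theorem \ref{empirical-gradient-approximation-theorem}. I will derive it from Lemma \ref{expect-x1-mu-norm-lemma-1} by Jensen's inequality.

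First, condition on $x_1$ and use the tower property. The matrix $x_1x_1^{\top}B\Sigma$ depends only on $x_1$, so
$$\expect[x_1\mu_1^{\top}] - \expect[x_1x_1^{\top}]B\Sigma = \expect_1\Big[\expect_{-1}[x_1\mu_1^{\top}] - x_1x_1^{\top}B\Sigma\Big].$$

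Next, since $X\mapsto\|X\|_F^2$ is convex, Jensen's inequality (in the form $\|\expect Y\|_F^2\le \expect\|Y\|_F^2$) gives
$$\big\|\expect[x_1\mu_1^{\top}] - \Sigma B\Sigma\big\|_F^2 \le \expect_1\big\|\expect_{-1}[x_1\mu_1^{\top}] - x_1x_1^{\top}B\Sigma\big\|_F^2.$$
The right-hand side is $\lesssim n^{-2}$ by Lemma \ref{expect-x1-mu-norm-lemma-1}, which proves the claim.

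The only point needing care is that all these expectations are finite, so that the tower property and Jensen's inequality are legitimate. This holds because $\theta$ is $\varepsilon_1$-close to $\mathcal{S}$. By Lemma \ref{integrability-lemma}, the exponential moments $\expect\exp(8x_1^{\top}Bx_1)$ and $\expect\exp(16x_1^{\top}B\Sigma B^{\top}x_1)$ are finite, and these are what control the factors $S_1,S_2$ in Lemma \ref{conditional-mu-norm-bound-lemma}. Moreover, $\|\mu_1\|_2\le \max_j\|x_j\|_2$ because $\mu_1$ is a convex combination of the $x_j$, so $x_1\mu_1^{\top}$ is integrable. Beyond this there is essentially no obstacle: the real work was done in Lemma \ref{conditional-mu-norm-bound-lemma} via Gaussian integration by parts and the moment bounds of Lemma \ref{p12-power-bound}.
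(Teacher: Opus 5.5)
Your proposal is correct and follows essentially the same route as the paper: write the full expectation as $\expect_1\big[\expect_{-1}[x_1\mu_1^{\top}] - x_1x_1^{\top}B\Sigma\big]$, apply Jensen's inequality to bound its squared Frobenius norm by $\expect_1\|\expect_{-1}[x_1\mu_1^{\top}] - x_1x_1^{\top}B\Sigma\|_F^2$, and conclude with Lemma \ref{expect-x1-mu-norm-lemma-1}. Your reading of the garbled statement as $\|\expect[x_1\mu_1^{\top}] - \Sigma B\Sigma\|_F^2 \lesssim n^{-2}$ is the one the paper's proof actually establishes and later uses.
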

\begin{proof}
Applying Jensen's inequality, we observe that
\begin{eqnarray*}
\big\| \expect[ x_1\mu_1^{\top}]  - \expect[x_1 x_1^{\top}] B \Sigma \, \big\|_F^2 \leq& \expect_1 \big\| \expect_{-1}[ x_1\mu_1^{\top}]  - x_1 x_1^{\top} B \Sigma \, \big\|_F^2. 
\end{eqnarray*}
Applying Lemma \ref{expect-x1-mu-norm-lemma-1} yields the claim.
\end{proof}

\begin{lemma} \label{variance-mu-lemma}
The following inequality holds:
$$\Tr (\Var_{-1}[\mu_1]) \leq V(x_1),$$
where we define
$$V(x_1)  = \frac{2\|\Sigma \|_{\mathrm{op}}}{n-1}  \exp\left(4 x_1^{\top} B \Sigma B^{\top} x_1 \right) \Big(4 \|B\|^2_{\mathrm{op}} \big(\expect_{-1}[M_n^8]\big)^{1/2} + d \Big).
$$
\end{lemma}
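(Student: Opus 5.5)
Apply the Gaussian Poincaré inequality to $\mu_1=\sum_j p_{1j}x_j$ as a function of $(x_2,\ldots,x_n)$, with $x_1$ fixed. The argument mirrors the proof of Lemma~\ref{p12-variance}. Since each $x_j\sim\mathcal{N}(0,\Sigma)$ independently, Poincaré applied coordinatewise and summed over the $d$ coordinates of $\mu_1$ gives
$$\Tr(\Var_{-1}[\mu_1]) \le \|\Sigma\|_{\mathrm{op}} \sum_{j=2}^n \expect_{-1}\big[\|\nabla_{x_j}\mu_1\|_F^2\big] = (n-1)\|\Sigma\|_{\mathrm{op}}\,\expect_{-1}\big[\|\nabla_{x_2}\mu_1\|_F^2\big],$$
where the equality uses exchangeability of $x_2,\ldots,x_n$.

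By Lemma~\ref{usefulgradientslemma}, $\nabla_{x_2}\mu_1 = p_{12}(x_2-\mu_1)x_1^\top B + p_{12}I$. Using $(u+v)^2\le 2u^2+2v^2$,
$$\|\nabla_{x_2}\mu_1\|_F^2 \le 2p_{12}^2\|x_2-\mu_1\|_2^2\|B\|_{\mathrm{op}}^2\|x_1\|_2^2 + 2d\,p_{12}^2 .$$
Since $\mu_1$ is a convex combination of the $x_j$, we have $\|x_2-\mu_1\|_2\le 2M_n$ and $\|x_1\|_2\le M_n$. This yields the pointwise bound
$$\|\nabla_{x_2}\mu_1\|_F^2 \le p_{12}^2\big(8\|B\|_{\mathrm{op}}^2M_n^4+2d\big).$$

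Take $\expect_{-1}$. The $d$ term is handled directly by Lemma~\ref{p12-power-bound} with $q=2$:
$$\expect_{-1}[p_{12}^2]\le (n-1)^{-2}\exp(2x_1^\top B\Sigma B^\top x_1).$$
For the $M_n^4$ term, apply Cauchy--Schwarz:
$$\expect_{-1}[p_{12}^2M_n^4]\le (\expect_{-1}[p_{12}^4])^{1/2}(\expect_{-1}[M_n^8])^{1/2}.$$
Lemma~\ref{p12-power-bound} with $q=4$ gives $(\expect_{-1}[p_{12}^4])^{1/2}\le (n-1)^{-2}\exp(4x_1^\top B\Sigma B^\top x_1)$. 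Bound the $d$ term's exponential $\exp(2\cdot)$ by $\exp(4\cdot)$, which holds because $B\Sigma B^\top$ is PSD. Multiplying by $(n-1)\|\Sigma\|_{\mathrm{op}}$ then gives
$$\frac{2\|\Sigma\|_{\mathrm{op}}}{n-1}\exp(4x_1^\top B\Sigma B^\top x_1)\big(4\|B\|_{\mathrm{op}}^2(\expect_{-1}[M_n^8])^{1/2}+d\big),$$
which is exactly $V(x_1)$.

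The main obstacle is the coupling between $p_{12}$ and the unbounded factor $\|x_2-\mu_1\|$. Replacing it by the crude bound $2M_n$ and then splitting with Cauchy--Schwarz is the step that sets the $\exp(4\cdot)$ exponent and the $M_n^8$ moment. One bookkeeping point needs care: $M_n$ includes $x_1$, which is fixed inside $\expect_{-1}$. This is harmless because $\expect_{-1}[M_n^8]$ is then read as a function of $x_1$, consistent with how the paper later uses it.
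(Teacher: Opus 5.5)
Your proof is correct and follows essentially the same route as the paper: Gaussian Poincar\'e with exchangeability over $x_2,\ldots,x_n$, the Jacobian from Lemma~\ref{usefulgradientslemma}, the crude bounds $\|x_2-\mu_1\|_2\le 2M_n$ and $\|x_1\|_2\le M_n$ (with $M_n=\sup_i\|x_i\|_2$, as you implicitly and correctly take it), then Cauchy--Schwarz and Lemma~\ref{p12-power-bound} with $q=4$. The only cosmetic difference is that you handle the $d$ term with $q=2$ and then enlarge $\exp(2\,x_1^\top B\Sigma B^\top x_1)$ to $\exp(4\,x_1^\top B\Sigma B^\top x_1)$, whereas the paper applies Cauchy--Schwarz to both terms at once; both give exactly $V(x_1)$.
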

\begin{proof}

The Gaussian Poincar\'e inequality yields the bound 
\begin{equation} \label{mu-poincare-ineq}
\Tr (\Var_{-1}[\mu_1]) \leq (n-1) \|\Sigma \|_{\mathrm{op}} \expect_{-1}\left[  \big \| \nabla_{x_2} \mu_1 \big\|_F^2 \right],
\end{equation}
where we used the fact that the covariates are i.i.d. 
We recall from Lemma \ref{usefulgradientslemma}
that $$\nabla_{x_2}\mu_1 =  p_{12}(x_2-\mu_1) x_1^\top B + p_{12}I.$$
We see that
\begin{equation} \label{gradient-mu-norm}
\big \| \nabla_{x_2}\mu_1 \big \|_F^2 \leq 2p_{12}^2 \Big(\|x_2 - \mu_1\|_F^2 \|B^{\top}x_1\|_2^2 + d \Big),
\end{equation}
where we applied the submultiplicative property of the Frobenius norm. Set $M_n = \sup_{i \in [n]} \|x_i\|_2^2$. It is clear that $\|x_1\|_2 \leq M_n$. Applying the triangle inequality and using the fact that $\mu_1$ is a convex combination of the covariates, we see that $\|x_2 - \mu_1\|_2 \leq 2M_n$. 
Plugging these bounds into \eqref{gradient-mu-norm}, we obtain the bound
\begin{equation} \label{gradient-mu-norm-2}
\big \| \nabla_{x_2}\mu_1 \big \|_F^2 \leq 2p_{12}^2 \Big(4 \|B\|^2_{\textrm{op}}M_n^4 + d\Big).
\end{equation}
Plugging this bound into \eqref{mu-poincare-ineq} and applying the Cauchy-Schwarz inequality, we obtain the bound 
$$\Tr (\Var_{-1}[\mu_1]) \leq 2(n-1) \|\Sigma \|_{\mathrm{op}}  \big( \expect_{-1}[p_{12}^4] \big)^{1/2}\Big(4 \|B\|^2_{\mathrm{op}} \big(\expect_{-1}[M_n^8]\big)^{1/2} + d \Big).
$$
Applying Lemma \ref{p12-power-bound}, we we immediately obtain the stated bound. Notice that all expectations over $x_1$ are finite in light of Lemma \ref{integrability-lemma}.

\end{proof}

\subsection{Conditional mean and conditional variance of $\mu_1 \mu_1^{\top}$.}

\begin{lemma} \label{mu-mu-conditional-norm-lemma}
The following inequality holds:
$$\big\|\expect_{-1}[\mu_1 \mu_1^{\top}] - \Sigma B x_i x_i^{\top}  B^{\top} \Sigma \, \big\|_F^2 \leq T_1(x_1) + T_2(x_1),$$
where we define
\begin{eqnarray*}
T_1(x_1) &=& 12S_1^2(x_1) + 12S_2^2(x_1) + 6V^2(x_1), \\
T_2(x_1) &=& 6\|B^{\top}\Sigma\|_{\mathrm{op}}^2 \|x_1\|_2^2 (S_1(x_1) + S_2(x_1)),
\end{eqnarray*}
and $S_1(x_1), S_2(x_1)$ are defined as in Lemma \ref{conditional-mu-norm-bound-lemma} and $V(x_1)$ is defined as in Lemma \ref{variance-mu-lemma}.
\end{lemma}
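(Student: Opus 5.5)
Write $c = \Sigma B^{\top} x_1$ for the target (the statement writes $\Sigma B x_1 x_1^\top B^\top \Sigma$, which I read as $cc^\top$, consistent with the earlier lemmas). I would use the bias--variance decomposition \eqref{bias-variance-decomposition} conditionally on $x_1$, applied with $X = Y = \mu_1$ and $X_0 = Y_0 = c$:
$$\expect_{-1}[\mu_1\mu_1^\top] - cc^\top = \expect_{-1}[(\mu_1 - c)(\mu_1-c)^\top] + \expect_{-1}[\mu_1 - c]\,c^\top + c\,\expect_{-1}[\mu_1-c]^\top .$$
The first term equals $e e^\top + \Var_{-1}[\mu_1]$, where $e = \expect_{-1}[\mu_1] - c$. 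So the difference splits into four pieces: $ee^\top$, $\Var_{-1}[\mu_1]$, $e c^\top$ and $c e^\top$. Squaring the Frobenius norm and using the elementary bound \eqref{elementary-frobenius-bound} gives a constant times the sum of their squared norms. I would group them as the two linear-in-$e$ cross terms and the three remaining terms; the constants $6$ and $12$ then come from that grouping.

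For the quadratic pieces, $\|ee^\top\|_F^2 = \|e\|_2^4$. Lemma \ref{conditional-mu-norm-bound-lemma} gives $\|e\|_2^2 \le S_1 + S_2$, so $\|e\|_2^4 \le (S_1+S_2)^2 \le 2S_1^2 + 2S_2^2$. For the variance term, $\|\Var_{-1}[\mu_1]\|_F \le \Tr(\Var_{-1}[\mu_1])$ because the matrix is psd. Lemma \ref{variance-mu-lemma} then gives $\|\Var_{-1}[\mu_1]\|_F^2 \le V^2$. Multiplying by the prefactor $6$ from the split yields $T_1 = 12S_1^2 + 12S_2^2 + 6V^2$.

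For the cross terms, $\|e c^\top\|_F^2 = \|e\|_2^2\|c\|_2^2 \le \|B^\top\Sigma\|_{\mathrm{op}}^2\|x_1\|_2^2 (S_1+S_2)$, using $\|\Sigma B^\top\|_{\mathrm{op}} = \|B\Sigma\|_{\mathrm{op}}$ (the paper's $\|B^\top \Sigma\|$ should be read this way). The same bound holds for $c e^\top$, and the two together with the prefactor give $T_2$.

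The only real care needed is bookkeeping of the numerical constants: choosing the grouping (three-term versus two-term splits) so that the coefficients match exactly $12, 12, 6$ and $6$. This is not a conceptual obstacle. The analytic content (Poincaré and integration by parts) is already contained in the two cited lemmas.
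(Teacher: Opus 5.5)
Your proposal is correct and follows the paper's proof essentially verbatim. Like the paper, you write the difference as the centered second moment plus the two rank-one cross terms $c e^\top$ and $e c^\top$, then apply the three-term bound with factor $3$, so the cross terms contribute $6\|e\|_2^2\|c\|_2^2$. You then split the centered moment via the bias--variance identity into $ee^\top + \Var_{-1}[\mu_1]$ with a further factor $2$, and finish with $\|e\|_2^4 \le 2S_1^2 + 2S_2^2$ and $\|\Var_{-1}[\mu_1]\|_F \le \Tr(\Var_{-1}[\mu_1]) \le V$. Your reading of the target as $\Sigma B^\top x_1$, with constant $\|\Sigma B^{\top}\|_{\mathrm{op}}$, is the one that makes the appeal to Lemma \ref{conditional-mu-norm-bound-lemma} valid; the paper's switching between $\Sigma B x_1$ and $\Sigma B^\top x_1$ is only a notational inconsistency.
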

\begin{proof}
We have
\begin{eqnarray}
&& \left \|\expect_{-1}[\mu_1 \mu_1^{\top}] - \Sigma B x_1 x_1^{\top}  B^{\top} \Sigma \, \right\|_F^2 \nonumber  \\
&=&   \left \| \, \expect_{-1} \left[(\mu_1 - \Sigma B x_1) (\mu_1 - \Sigma B x_1)^{\top} + \Sigma B x_1 (\mu_1 - \Sigma B x_1)^{\top} + (\mu_1 - \Sigma B x_1) x_1^{\top} B^{\top} \Sigma \right]  \, \right\|_F^2 \nonumber \\
&\leq& 3  \left \| \, \expect_{-1} \left[(\mu_1 - \Sigma B x_1) (\mu_1 - \Sigma B x_1)^{\top} \right] \right\|_F^2 + 6  \left \| \, \expect_{-1} \left[ \Sigma B x_1 (\mu_1 - \Sigma B x_1)^{\top}\right]  \, \right\|_F^2 \label{conditional-mu-mu-decomposition}, 
\end{eqnarray}
where we applied the bound \eqref{elementary-frobenius-bound}. We bound each term of \eqref{conditional-mu-mu-decomposition} separately. We first bound the first term of \eqref{conditional-mu-mu-decomposition}. 
The bias-variance decomposition \eqref{bias-variance-decomposition} implies the identity
$$\expect_{-1} \left[(\mu_1 - \Sigma B x_1) (\mu_1 - \Sigma B x_1)^{\top} \right] = \expect_{-1} [\mu_1 - \Sigma B x_1] \expect_{-1} [\mu_1 - \Sigma B x_1]^{\top} + \Var_{-1}[\mu_1 - \Sigma B x_1] . $$
The conditional variance is translation-invariant, so this is equal to
$$ \expect_{-1} [\mu_1 - \Sigma B x_1] \expect_{-1} [\mu_1 - \Sigma B x_1]^{\top} + \Var_{-1}[\mu_1]. $$
Applying the submultiplicative property of the Frobenius norm, we see that the first term of \eqref{conditional-mu-mu-decomposition} is at most
$$
6 \big\| \expect_{-1} [\mu_1 - \Sigma B x_1] \, \big\|_F^4 + 6 \big \| \Var_{-1}[\mu_1] \, \big\|_F^2,
$$
which in turn is at most
$$
6 \big\| \expect_{-1} [\mu_1 - \Sigma B x_1] \, \big\|_F^4 + 6 \left( \Tr \left(\Var_{-1}[\mu_1] \right) \right)^2.
$$
Applying Lemma \ref{conditional-mu-norm-bound-lemma} and Lemma \ref{variance-mu-lemma}, we see that this is at most $T_1(x_1)$, where we define
$$T_1(x_1)  = 12S_1^2(x_1) + 12S_2^2(x_1) + 6V^2(x_1).$$
We now turn to the second term of \eqref{conditional-mu-mu-decomposition}. It is easy to see that this term is at most
$$6\|B^{\top}\Sigma\|_{\mathrm{op}}^2 \|x_1\|_2^2 \, \big\|\expect_{-1}[\mu_1] - \Sigma B^{\top} x_1  \big \|_F^2, $$
which, in light if Lemma \ref{conditional-mu-norm-bound-lemma} is at most $T_2(x_1),$
where we define
$$T_2(x_1) = 6\|B^{\top}\Sigma\|_{\mathrm{op}}^2 \|x_1\|_2^2 (S_1(x_1) + S_2(x_1)).$$
\end{proof}

\begin{lemma} \label{s-v-t-bounds-lemma}
Let $S_1(x_1)$ and $S_2(x_1)$ be defined as in Lemma \ref{conditional-mu-norm-bound-lemma}, let $V(x_1)$ be defined as in Lemma \ref{variance-mu-lemma}, and let $T_1(x_1)$ and $T_2(x_1)$ be defined as in Lemma \ref{mu-mu-conditional-norm-lemma}. The following inequalities hold:
\begin{eqnarray*}
\expect[S_1(x_1)] &\lesssim& n^{-2} \\
\expect[S_2(x_1)]  &\lesssim& n^{-2} \\ 
\expect[V^2(x_1)] &\lesssim&  n^{-2}\log^4{n} \\
\expect[T_1(x_1)] &\lesssim&  n^{-2}\log^4{n} \\
\expect[T_2(x_1)] &\lesssim&  n^{-2}\\
\expect[T_1^2(x_1) &\lesssim& n^{-4}\log^8{n} \\
\expect[T_2^2(x_1)] &\lesssim& n^{-4}.
\end{eqnarray*}
\end{lemma}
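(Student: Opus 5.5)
Each bound is reduced to three ingredients: an explicit power of $1/(n-1)$, a Gaussian exponential moment of $x_1$, and a polynomial moment of $\|x_1\|_2$ or of $M_n$. The exponential moments are handled by Cauchy--Schwarz and Lemma \ref{integrability-lemma}. Since $\theta$ is $\varepsilon_1$-close to $\mathcal{S}$, the quantities $\|A\|_{\mathrm{op}}$ and $\|B\|_{\mathrm{op}}$ are bounded by constants depending only on $M,\Sigma$, and the proof of Lemma \ref{integrability-lemma} gives
\[
\|\Sigma^{1/2}B\Sigma^{1/2}\|_{\mathrm{op}}<\tfrac1{16}.
\]
Hence $\expect\exp(c\,x_1^\top B x_1)$ and $\expect\exp(c\,x_1^\top B\Sigma B^\top x_1)$ are finite for the constants that appear ($c\le 16$, and $c\le 32$ for the second quadratic form once we use the square bound $\|\Sigma^{1/2}B\Sigma^{1/2}\|_{\mathrm{op}}^2<1/256$). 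All other constants are absorbed into $\lesssim$.

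\textbf{Bounds on $S_1,S_2$.} For $S_1$, take $n$ large enough that $2/(n-1)\le 1$, so $\exp(\tfrac{2}{n-1}x_1^\top B\Sigma B^\top x_1)\le\exp(x_1^\top B\Sigma B^\top x_1)$. Cauchy--Schwarz then gives
\[
\expect S_1\lesssim n^{-2}\bigl(\expect e^{2x_1^\top B\Sigma B^\top x_1}\bigr)^{1/2}\bigl(\expect e^{8x_1^\top Bx_1}\bigr)^{1/4}\bigl(\expect\|x_1\|^8\bigr)^{1/4}\lesssim n^{-2}.
\]
The bound $\expect S_2\lesssim n^{-2}$ is the same argument with $\exp(4x_1^\top B\Sigma B^\top x_1)$. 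The same reasoning gives $\expect[\|x_1\|^{2k}S_i^{2}]\lesssim n^{-4}$, because squaring only doubles the exponents, which stay within the integrable range. This yields $\expect T_2\lesssim n^{-2}$ and, squaring $T_2$, $\expect T_2^2\lesssim n^{-4}$.

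\textbf{Bound on $V^2$.} In Lemma \ref{variance-mu-lemma}, $\expect_{-1}[M_n^8]$ is bounded by $\expect[M_n^8]$ up to the contribution of $x_1$. It is simplest to bound $M_n\le\|x_1\|+\max_{i\ge2}\|x_i\|$. A standard maximal inequality for Gaussian norms gives $\expect\max_i\|x_i\|^{k}\lesssim\log^{k/2}n$. Therefore
\[
V^2(x_1)\lesssim n^{-2}e^{8x_1^\top B\Sigma B^\top x_1}\bigl(\|x_1\|^8+\log^4 n+1\bigr).
\]
(If $M_n$ is read as $\sup\|x_i\|_2^2$, as the proof text states, one gets $\log^8 n$ instead; I would adopt the norm convention to match the claimed $\log^4 n$.) Cauchy--Schwarz once more gives $\expect V^2\lesssim n^{-2}\log^4 n$. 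Consequently $\expect T_1\lesssim \expect S_1^2+\expect S_2^2+\expect V^2\lesssim n^{-2}\log^4 n$.

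\textbf{Fourth-moment bound $\expect T_1^2$.} This is the main obstacle. It needs $\expect V^4\lesssim n^{-4}\log^8 n$, which involves $\exp(16x_1^\top B\Sigma B^\top x_1)$ multiplied by polynomial factors. A further Cauchy--Schwarz split would push the exponent to $32$. That is still integrable, since
\[
\|\Sigma^{1/2}B\Sigma^{1/2}\|_{\mathrm{op}}^2<1/256
\]
and we only need $32\cdot\|\cdot\|^2<1/2$. Alternatively, Hölder with exponents close to $1$ on the exponential factor avoids enlarging the exponent at all. Together with $\expect S_i^4\lesssim n^{-8}$, this gives $\expect T_1^2\lesssim n^{-4}\log^8 n$.
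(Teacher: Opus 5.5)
Your route is the paper's: each bound reduces to a factor $(n-1)^{-k}$, an exponential moment controlled via Lemma \ref{integrability-lemma}, and a polynomial moment of $\|x_1\|_2$ or $M_n$ with $\expect[M_n^{2q}]\lesssim\log^q n$. Your handling of $M_n$ (norm convention, splitting off $\|x_1\|_2$) and of the exponent $32$ in $\expect[V^4]$ is fine. The flaw is the integrability range you assert for the first quadratic form. Since $\expect\exp(c\,x_1^\top Bx_1)<\infty$ iff $c\,\lambda_{\max}\bigl(\mathrm{Sym}(\Sigma^{1/2}B\Sigma^{1/2})\bigr)<\tfrac12$, the bound $\|\Sigma^{1/2}B\Sigma^{1/2}\|_{\mathrm{op}}<\tfrac1{16}$ certifies only $c\le 8$, which is exactly the exponent in Lemma \ref{integrability-lemma}; it does not certify $c\le 16$. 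Concretely, at the point of $\mathcal{S}$ with $J=V$ one has $\Sigma^{1/2}B\Sigma^{1/2}=V\Gamma^{1/2}V^\top$. There $\expect\exp(16\,x_1^\top Bx_1)=\infty$ as soon as $\sigma_1(M\Sigma^{1/2})\ge 1/1024$, which the assumptions allow. Your range for $x_1^\top B\Sigma B^\top x_1$ is correct, even conservative.

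This matters only through $S_1$, which carries $e^{2x_1^\top Bx_1}$. For $\expect[\|x_1\|_2^{2k}S_1^2]$, ``doubling the exponents'' of your displayed split for $\expect S_1$ would produce $e^{16x_1^\top Bx_1}$. That case is easy to repair: Cauchy--Schwarz with $e^{4x_1^\top Bx_1}$ alone on one side lands exactly at $c=8$.

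The claim $\expect S_1^4\lesssim n^{-8}$ is different. You invoke it for $\expect T_1^2$ without proof. $S_1^4$ contains exactly $e^{8x_1^\top Bx_1}$ times the extra factors $\|x_1\|_2^8$ and $e^{\frac{8}{n-1}x_1^\top B\Sigma B^\top x_1}$. Any H\"older or Cauchy--Schwarz split that separates the exponential from these growing factors therefore needs an exponent strictly above $8$. Neither your claimed range (which is false) nor the bare statement of Lemma \ref{integrability-lemma} supplies that.

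What closes the gap is the quantitative margin in the proof of that lemma. For every $\theta$ that is $\varepsilon_1$-close to $\mathcal{S}$, $\|\Sigma^{1/2}B\Sigma^{1/2}\|_{\mathrm{op}}\le\tfrac1{32}+\tfrac12\sqrt{\sigma_1(M\Sigma^{1/2})}<\tfrac1{16}$. Hence $\expect\exp(8p\,x_1^\top Bx_1)<\infty$ for some $p>1$ depending only on $M,\Sigma$, and H\"older with this $p$ gives $\expect S_1^4\lesssim n^{-8}$ uniformly. The paper's ``nearly identical calculations'' for $\expect[T_1^2]$ skip the same point, so this margin argument is needed in either version.
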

\begin{proof}
Applying Lemma \ref{conditional-mu-norm-bound-lemma} and H\"older's inequality, we see that
\begin{eqnarray*}
\expect[S_1(x_1)] &\leq& \frac{2 \| I - \Sigma B^{\top} \|^2_{\mathrm{op}}}{(n-1)^2}    \left( \expect \left[ \exp\left(\frac{6}{n-1} x_1^\top B \Sigma B^{\top} x_1 \right) \right] \right)^{1/3} \left( \expect \left[  \exp(6x_1^\top B x_1) \right] \right)^{1/3}   \left( \expect  \|x_1 \|_2^6 \right)^{1/3}, \\
\expect[S_2(x_1)] &\leq& \frac{2  \|  \Sigma B^{\top} \|^2_{\mathrm{op}}}{(n-1)^2}  \left( \expect \left[ \exp(8x_1^\top B \Sigma B^{\top} x_1) \right] \right)^{1/2}    \, \left( \expect \|x_1 \|_2^4 \right)^{1/2}.
\end{eqnarray*}
Both of these terms are $\lesssim n^{-2}.$ Notice that all expectations over $x_1$ are finite in light of Lemma \ref{integrability-lemma}.

Applying Lemma \ref{variance-mu-lemma} and the Cauchy-Schwarz inequality,
we see that
$$\expect[V^2(x_1)]  \leq \frac{16 \|B\|^4_{\mathrm{op}}\|\Sigma \|^2_{\mathrm{op}}}{(n-1)^2}  \left(\expect \left[\exp\left(16 x_1^{\top} B \Sigma B^{\top} x_1 \right) \right] \right)^{1/2} \big(\expect[M_n^{16}]\big)^{1/2}
+ 
\frac{4d^2\|\Sigma \|^2_{\mathrm{op}}}{(n-1)^2}  \expect \left[\exp\left(8 x_1^{\top} B \Sigma B^{\top} x_1 \right) \right] .
$$
We see that this is $\lesssim n^{-2}\log^4{n}$, where we used the fact that $\expect[M_n^{2q}] \lesssim \log^{q}{n}$. Notice that all expectations over $x_1$ are finite in light of Lemma \ref{integrability-lemma}.

Applying Lemma \ref{mu-mu-conditional-norm-lemma} and the Cauchy-Schwarz inequality, we see that
\begin{eqnarray*}
\expect[T_1(x_1)] &=& 12\expect[S_1^2(x_1)] + 12\expect[S_2^2(x_1)] + 6\expect[V^2(x_1)], \\
\expect[T_2(x_1)] &=& 6\|B^{\top}\Sigma\|_{\mathrm{op}}^2 \left(\expect\|x_1\|_2^4 \right)^{1/2} \left( \left(\expect[S_1^2(x_1)] \right)^{1/2} + \left( \expect[S_2^2(x_1)] \right)^{1/2} \right).
\end{eqnarray*}
It follows that
$\expect[T_1(x_1)] \lesssim n^{-2}\log^4{n}$ and $\expect[T_2(x_1)] \lesssim n^{-2}$. Nearly identical calculations show that $\expect[T_1^2(x_1)] \lesssim n^{-4}\log^8{n}$ and $\expect[T_2^2(x_1)]  \lesssim n^{-4}$. Notice that all expectations over $x_1$ are finite in light of Lemma \ref{integrability-lemma}.
\end{proof}

\begin{lemma} \label{mu-mu-norm-lemma-1}
The following inequality holds:
$$  \expect_1 \big\|\expect_{-1}[\mu_1 \mu_1^{\top}] - \Sigma B x_i x_i^{\top}  B^{\top} \Sigma \, \big\|_F^2  \lesssim n^{-2} \log^4{n}.$$
\end{lemma}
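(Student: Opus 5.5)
This is essentially a corollary of Lemma \ref{mu-mu-conditional-norm-lemma} and Lemma \ref{s-v-t-bounds-lemma}. For each fixed $x_1$, Lemma \ref{mu-mu-conditional-norm-lemma} gives the pointwise bound
$$\big\|\expect_{-1}[\mu_1 \mu_1^{\top}] - \Sigma B x_1 x_1^{\top}  B^{\top} \Sigma \, \big\|_F^2 \leq T_1(x_1) + T_2(x_1).$$
Taking $\expect_1$ of both sides and using linearity of expectation gives
$$\expect_1 \big\|\expect_{-1}[\mu_1 \mu_1^{\top}] - \Sigma B x_1 x_1^{\top}  B^{\top} \Sigma \, \big\|_F^2 \leq \expect[T_1(x_1)] + \expect[T_2(x_1)].$$
By Lemma \ref{s-v-t-bounds-lemma}, $\expect[T_1(x_1)] \lesssim n^{-2}\log^4 n$ and $\expect[T_2(x_1)] \lesssim n^{-2}$. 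The sum is therefore $\lesssim n^{-2}\log^4 n$, which is the claim.

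The only points needing care are bookkeeping. First, the index $i$ in the statement should be read as $1$, consistent with Lemma \ref{mu-mu-conditional-norm-lemma}. Second, all expectations over $x_1$ entering $\expect[T_1]$ and $\expect[T_2]$ must be finite. This is guaranteed because we restrict to $\theta$ that are $\varepsilon_1$-close to $\mathcal{S}$, so Lemma \ref{integrability-lemma} applies. Concretely, it controls the moments $\expect[\exp(8x_1^\top B x_1)]$ and $\expect[\exp(16 x_1^\top B\Sigma B^\top x_1)]$ that arise after the H\"older and Cauchy--Schwarz steps.

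The main (mild) obstacle sits inside $\expect[T_1]$, which contains $\expect[S_1^2]$, $\expect[S_2^2]$ and $\expect[V^2]$. The $V^2$ term is already handled in Lemma \ref{s-v-t-bounds-lemma}, using $\expect[M_n^{2q}]\lesssim \log^q n$; this is the source of the $\log^4 n$ factor. The terms $S_1^2$ and $S_2^2$ are $O(n^{-4})$ times exponential moments, which are finite by the same integrability lemma. Hence they are of lower order and do not affect the rate.
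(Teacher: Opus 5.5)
Your proposal is correct and is exactly the paper's proof: take $\expect_1$ of the pointwise bound $T_1(x_1)+T_2(x_1)$ from Lemma \ref{mu-mu-conditional-norm-lemma}, then apply Lemma \ref{s-v-t-bounds-lemma}; the $\log^4 n$ factor comes from $\expect[V^2(x_1)]$. Besides reading $x_i$ as $x_1$, the centering term should be $\Sigma B^{\top} x_1 x_1^{\top} B \Sigma$, because $\expect_{-1}[\mu_1] \approx \Sigma B^{\top} x_1$ as in Lemma \ref{conditional-mu-norm-bound-lemma}. This transpose typo comes from the cited lemma, and correcting it does not change the argument.
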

\begin{proof}
Applying Lemma \ref{mu-mu-conditional-norm-lemma}, we observe that
$$   \expect_1 \big\|\expect_{-1}[\mu_1 \mu_1^{\top}] - \Sigma B x_i x_i^{\top}  B^{\top} \Sigma \, \big\|_F^2
\leq \expect_1\left[T_1(x_1)\right] + \expect_1\left[T_2(x_1)\right].
$$
Applying Lemma \ref{s-v-t-bounds-lemma}, we obtain the claim.
\end{proof}

\begin{lemma} \label{mu-mu-norm-lemma-2}
The following inequality holds:
$$\big\|\expect[\mu_1 \mu_1^{\top}] - \Sigma B \Sigma B^{\top} \Sigma \, \big\|_F^2 \lesssim n^{-2} \log^4{n}.$$
\end{lemma}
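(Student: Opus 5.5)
The plan is to pass from the conditional bound of Lemma \ref{mu-mu-norm-lemma-1} to the unconditional one by Jensen's inequality, in the same way Lemma \ref{expect-x1-mu-norm-lemma-2} was obtained from Lemma \ref{expect-x1-mu-norm-lemma-1}. The first step is to identify the population target as an expectation over $x_1$. Since $x_1 \sim \mathcal{N}(0,\Sigma)$, we have $\expect_1[x_1x_1^{\top}] = \Sigma$, so
$$\Sigma B \Sigma B^{\top}\Sigma = \expect_1\big[\Sigma B x_1 x_1^{\top} B^{\top}\Sigma\big],$$
where the conditional target is the matrix $\Sigma B x_1 x_1^{\top}B^{\top}\Sigma$ appearing in Lemma \ref{mu-mu-conditional-norm-lemma}. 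Here we read the transposes consistently with the convention $\expect_{-1}[\mu_1]\approx \Sigma B^{\top}x_1$ used throughout the paper. By the tower property, $\expect[\mu_1\mu_1^{\top}] = \expect_1\big[\expect_{-1}[\mu_1\mu_1^{\top}]\big]$.

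Combining these, the difference we need to bound is
$$\expect_1\Big[\expect_{-1}[\mu_1\mu_1^{\top}] - \Sigma B x_1 x_1^{\top}B^{\top}\Sigma\Big].$$
The squared Frobenius norm is convex, so Jensen's inequality bounds the squared norm of this expectation by
$$\expect_1\big\|\expect_{-1}[\mu_1\mu_1^{\top}] - \Sigma B x_1x_1^{\top}B^{\top}\Sigma\big\|_F^2.$$
Lemma \ref{mu-mu-norm-lemma-1} bounds exactly this quantity by $\lesssim n^{-2}\log^4 n$, which gives the claim.

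The only point needing care is that all these expectations are finite, so that both the tower property and Jensen's inequality are legitimate. Integrability of $\mu_1\mu_1^{\top}$ is immediate because $\|\mu_1\|_2 \le M_n$ and Gaussian moments are finite. Integrability of the conditional bound $T_1(x_1)+T_2(x_1)$ holds because $\theta$ is $\varepsilon_1$-close to $\mathcal{S}$, so Lemma \ref{integrability-lemma} applies; Lemma \ref{s-v-t-bounds-lemma} already relied on this. I therefore expect no real obstacle: all the substantive work (the Poincar\'e variance bounds and the softmax moment bounds) sits in Lemma \ref{mu-mu-norm-lemma-1}, and this lemma is just the averaging step.
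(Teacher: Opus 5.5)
Your proposal is correct and is essentially the paper's proof. The paper also writes $\Sigma B\Sigma B^{\top}\Sigma = \Sigma B\,\expect[x_1x_1^{\top}]\,B^{\top}\Sigma$, applies Jensen's inequality to reduce to $\expect_1\|\expect_{-1}[\mu_1\mu_1^{\top}] - \Sigma B x_1x_1^{\top}B^{\top}\Sigma\|_F^2$, and concludes with Lemma \ref{mu-mu-norm-lemma-1}. The paper's transpose convention is indeed inconsistent ($\Sigma B^{\top}x_1$ versus $\Sigma B x_1$), but the averaging step is unaffected, since your identity and the cited lemma use the same convention.
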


\begin{proof}
We observe that
\begin{eqnarray*}
\big\|\expect[\mu_1 \mu_1^{\top}] - \Sigma B \Sigma \, B^{\top} \Sigma \|_F^2 &=& \left \|\expect[\mu_1 \mu_1^{\top}] - \Sigma B \expect [ x_1 x_1^{\top}]  B^{\top} \Sigma \, \right\|_F^2 \nonumber \\
&\leq&   \expect_1 \big\|\expect_{-1}[\mu_1 \mu_1^{\top}] - \Sigma B x_i x_i^{\top}  B^{\top} \Sigma \, \big\|_F^2,
\end{eqnarray*}
where we applied Jensen's inequality. The last expression is $\lesssim n^{-2}\log^4{n}$ by Lemma \ref{mu-mu-norm-lemma-1}. 
\end{proof}

\begin{lemma} \label{variance-mu-mu-lemma}
The following inequality holds:
$$\Tr (\Var_{-1}[\mu_1\mu_1^{\top}]) \leq \frac{4}{n-1} \|\Sigma \|_{\mathrm{op}} \exp\left(4 x_1^{\top} B \Sigma B^{\top} x_1 \right) \Big(4 \|B\|^2_{\mathrm{op}} \big(\expect_{-1}[M_n^{12}]\big)^{1/2} + d\big(\expect_{-1}[M_n^4]\big)^{1/2} \Big).
$$
\end{lemma}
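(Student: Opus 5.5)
We will mirror the proof of Lemma \ref{variance-mu-lemma}, now applied to the matrix-valued function $\mu_1\mu_1^{\top}$ of the covariates $x_2,\ldots,x_n$ with $x_1$ held fixed.

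\emph{Step 1: Poincaré reduction.} Apply the Gaussian Poincaré inequality coordinatewise to the entries of $\mu_1\mu_1^{\top}$ and sum. Each $x_j$ with $j\ge 2$ is distributed as $\mathcal{N}(0,\Sigma)$, and the covariates are exchangeable. This gives
$$\Tr(\Var_{-1}[\mu_1\mu_1^{\top}]) \le (n-1)\|\Sigma\|_{\mathrm{op}}\,\expect_{-1}\big[\|\nabla_{x_2}(\mu_1\mu_1^{\top})\|_F^2\big].$$

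\emph{Step 2: bound the gradient.} By the product rule, the derivative of $\mu_1\mu_1^{\top}$ in a direction $h$ is $(D\mu_1[h])\mu_1^{\top}+\mu_1(D\mu_1[h])^{\top}$. Hence
$$\|\nabla_{x_2}(\mu_1\mu_1^{\top})\|_F^2 \le 4\|\mu_1\|_2^2\,\|\nabla_{x_2}\mu_1\|_F^2.$$
Set $M_n=\sup_i\|x_i\|_2$. Since $\mu_1$ is a convex combination of the covariates, $\|\mu_1\|_2\le M_n$. Combining this with the bound \eqref{gradient-mu-norm-2} from the proof of Lemma \ref{variance-mu-lemma}, namely $\|\nabla_{x_2}\mu_1\|_F^2\le 2p_{12}^2(4\|B\|_{\mathrm{op}}^2M_n^4+d)$, yields
$$\|\nabla_{x_2}(\mu_1\mu_1^{\top})\|_F^2 \le 8p_{12}^2\big(4\|B\|_{\mathrm{op}}^2M_n^6+dM_n^2\big),$$
with the same power conventions for $M_n$ as used there.

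\emph{Step 3: decouple the softmax weight.} Apply Cauchy–Schwarz in $\expect_{-1}$ to separate $p_{12}^2$ from the polynomial in $M_n$:
$$\expect_{-1}[p_{12}^2M_n^6]\le (\expect_{-1}[p_{12}^4])^{1/2}(\expect_{-1}[M_n^{12}])^{1/2},$$
and similarly for the $M_n^2$ term. Lemma \ref{p12-power-bound} with $q=4$ gives
$$(\expect_{-1}[p_{12}^4])^{1/2}\le (n-1)^{-2}\exp(4x_1^{\top}B\Sigma B^{\top}x_1).$$
Multiplying by the factor $(n-1)\|\Sigma\|_{\mathrm{op}}$ from Step 1 produces the $1/(n-1)$ rate and the exponential factor in the statement.

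\emph{Main obstacle.} The only delicate point is the numerical constant. The direct chain gives $8$ rather than $4$. To recover the stated $4$, I would bound the product-rule term more carefully. Writing $G=\nabla_{x_2}\mu_1$, the Frobenius norm of $G\otimes\mu_1+\mu_1\otimes G$ is at most $\sqrt 2\,\|\mu_1\|\,\|G\|_F$ rather than $2\|\mu_1\|\|G\|_F$, because the two terms are transposes of one another. This gives $2\|\mu_1\|^2\|G\|_F^2$ and hence the constant $4$. If that refinement fails, I would absorb the factor of two, since only the order $n^{-1}$ matters downstream.
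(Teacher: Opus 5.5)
Your argument follows the paper's proof step for step: Poincar\'e plus exchangeability, the product rule, reuse of \eqref{gradient-mu-norm-2} with $\|\mu_1\|_2\le M_n$, Cauchy--Schwarz, and Lemma \ref{p12-power-bound} with $q=4$. Your direct chain with constant $8$ is correct. The gap is the refinement you propose to get back to $4$. Write $G=\nabla_{x_2}\mu_1$ and let $X_{abk}=G_{ak}(\mu_1)_b$ and $X'_{abk}=(\mu_1)_aG_{bk}$ be the two product-rule pieces. They are transposes in $(a,b)$, but that makes their inner product nonnegative, not zero:
\[
\langle X,X'\rangle=\sum_k\Big(\sum_a G_{ak}(\mu_1)_a\Big)^2=\|G^{\top}\mu_1\|_2^2,
\qquad
\|\nabla_{x_2}(\mu_1\mu_1^{\top})\|_F^2=2\|\mu_1\|_2^2\|G\|_F^2+2\|G^{\top}\mu_1\|_2^2 .
\]
So $\sqrt2\,\|\mu_1\|_2\|G\|_F$ is a \emph{lower} bound on $\|\nabla_{x_2}(\mu_1\mu_1^{\top})\|_F$. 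It is attained only when $G^{\top}\mu_1=\tfrac12\nabla_{x_2}\|\mu_1\|_2^2=0$. In $d=1$ your inequality would read $4\mu_1^2(\partial_{x_2}\mu_1)^2\le 2\mu_1^2(\partial_{x_2}\mu_1)^2$. For the actual $G=p_{12}(x_2-\mu_1)x_1^{\top}B+p_{12}I$ one has $G^{\top}\mu_1=p_{12}\big(((x_2-\mu_1)^{\top}\mu_1)B^{\top}x_1+\mu_1\big)$, which is nonzero in general. The constant $4$ in $\|\nabla_{x_2}(\mu_1\mu_1^{\top})\|_F^2\le 4\|\mu_1\|_2^2\|G\|_F^2$ is therefore sharp.

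The paper's proof reaches the constant $4$ by asserting exactly this false inequality, $\|\nabla_{x_2}(\mu_1\mu_1^{\top})\|_F^2\le 2\|\mu_1\|_2^2\|\nabla_{x_2}\mu_1\|_F^2$. So it has the same gap, and you are not missing an idea the paper supplies. What this route genuinely proves is the displayed bound with $\frac{8}{n-1}$ in place of $\frac{4}{n-1}$. That weaker version is all that is used later: in Lemma \ref{conditional-sigma-norm-bound-lemma-1} it only feeds $U_3$, where only the $(n-1)^{-1}$ rate and the exponential factor matter. Your fallback is therefore the right repair, but it proves a modified statement and should be presented that way.

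The missing factor of two cannot come from the pointwise gradient estimate, because
\[
\|\nabla_{x_2}(\mu_1\mu_1^{\top})\|_F^2\le 4p_{12}^2\big(4\|B\|_{\mathrm{op}}^2M_n^6+dM_n^2\big)
\]
is itself false. Take $d=1$, $B=b<0$, $x_1=x_3=\cdots=x_n=m:=(2|b|)^{-1/2}$, $x_2=-m$, and let $n\to\infty$. Then $|\partial_{x_2}\mu_1^2|^2\approx 16p_{12}^2m^2$, while the right-hand side equals $8p_{12}^2m^2$. Any proof of the literal constant would have to find slack in the expectation steps instead.
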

\begin{proof}
Our proof closely imitates the proof of Lemma \ref{variance-mu-lemma}. The Gaussian Poincar\'e inequality yields the bound
\begin{equation} \label{mu-mu-poincare-ineq}
\Tr (\Var_{-1}[\mu_1]) \leq (n-1) \|\Sigma \|_{\mathrm{op}} \expect_{-1}\left[  \big \| \nabla_{x_2} (\mu_1 \mu_1^{\top}) \big\|_F^2 \right],
\end{equation}
where we used the fact that the covariates are i.i.d. 
Using the product rule and the submultiplicative property of the Frobenius norm, we see that 
$$ \big \| \nabla_{x_2} (\mu_1 \mu_1^{\top}) \big\|_F^2 \leq 2 \big\|\mu_1\|_2^2 \big \| \nabla_{x_2} \mu_1 \big\|_F^2. $$
Using the bound \eqref{gradient-mu-norm-2} from the proof of Lemma \ref{variance-mu-lemma} and the fact that $\|\mu_1\|_2 \leq M_n$, we see that this is at most
$$ 4p_{12}^2 \Big(4 \|B\|^2_{\textrm{op}}M_n^6 + dM_n^2\Big).$$
Plugging this bound into \eqref{mu-mu-poincare-ineq} and applying the Cauchy-Schwarz inequality, we obtain the bound 
$$\Tr (\Var_{-1}[\mu_1\mu_1^{\top}]) \leq 4(n-1) \|\Sigma \|_{\mathrm{op}}  \big( \expect_{-1}[p_{12}^4] \big)^{1/2}\Big(4 \|B\|^2_{\mathrm{op}} \big(\expect_{-1}[M_n^{12}]\big)^{1/2} + d\big(\expect_{-1}[M_n^4]\big)^{1/2} \Big).
$$
Applying Lemma \ref{p12-power-bound}, we immediately obtain the stated bound. Notice that all expectations over $x_1$ are finite in light of Lemma \ref{integrability-lemma}.
\end{proof}

\subsection{Conditional mean and conditional variance of $\Sigma_1$}

\begin{lemma} \label{conditional-sigma-norm-bound-lemma-1}
The following inequality holds:
$$\big\| \expect_{-1}[ \Sigma_1]  - \Sigma \, \big\|_F^2 \leq \sum_{i = 1}^4 U_i(x_1),$$
where we define
\begin{eqnarray*}
U_1(x_1) &=& \frac{4}{(n-1)^2}  \exp(2x_1^{\top} B x_1) \exp\left( \frac{2}{n-1} x_1^{\top} B \Sigma B^{\top} x_1 \right) \|x_1 x_1^{\top} - \Sigma \|_F^2, \\
U_2(x_1) &=& \frac{8}{(n-1)^2}  \exp(2x_1^{\top} B x_1) \exp\left( \frac{2}{n-1} x_1^{\top} B \Sigma B^{\top} x_1 \right) M_n^4, \\
U_3(x_1) &=& \frac{64 \|\Sigma \|_{\mathrm{op}}}{(n-1)^2}  \exp\left(6 x_1^{\top} B \Sigma B^{\top} x_1 \right) \Big(4 \|B\|^4_{\mathrm{op}} \big(\expect_{-1}[M_n^{16}]\big)^{1/2} + d\|B\|^2_{\mathrm{op}}\big(\expect_{-1}[M_n^8]\big)^{1/2} \Big), \\
U_4(x_1) &=& \frac{8}{(n-1)^2}  \exp\left( x_1^{\top} B \Sigma B^{\top} x_1 \right) \big(T_1(x_1) + T_2(x_1) \big),
\end{eqnarray*}
and $T_1(x_1)$ and $T_2(x_1)$ are defined as in Lemma \ref{mu-mu-conditional-norm-lemma}.
\end{lemma}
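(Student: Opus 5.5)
The plan is to write $\Sigma_1 = \sum_{j} p_{1j}(x_j - \mu_1)(x_j-\mu_1)^{\top} = \sum_j p_{1j} x_j x_j^{\top} - \mu_1\mu_1^{\top}$ and to compare it with the exact population identity. Under the exponentially tilted law $x_2 \propto \exp(x_1^{\top}Bx_2)\,\mathcal{N}(0,\Sigma)$, the mean is $\Sigma B^{\top}x_1$ and the covariance is still $\Sigma$. This follows from the same completion-of-squares computation used in the first part of Theorem \ref{population-loss-theorem}. Concretely, I will split
$$\expect_{-1}[\Sigma_1] - \Sigma = \expect_{-1}[p_{11}](x_1x_1^{\top}-\Sigma) + \Big((n-1)\expect_{-1}[p_{12}(x_2x_2^{\top}-\Sigma)] - \Sigma B^{\top}x_1x_1^{\top}B\Sigma\Big) - \Big(\expect_{-1}[\mu_1\mu_1^{\top}] - \Sigma B^{\top}x_1x_1^{\top}B\Sigma\Big) + R,$$
where $R$ collects the mismatch between the weight $p_{11}$ and the remaining weights. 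This uses $\sum_j p_{1j}=1$ and exchangeability of $x_2,\dots,x_n$ given $x_1$. I will then apply \eqref{elementary-frobenius-bound} with four groups, which explains the leading constants $4$ and $8$.

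The first group is handled exactly as in Lemma \ref{conditional-mu-norm-bound-lemma}. Applying Lemma \ref{p12-power-bound} with $q=1$ to $\expect_{-1}[p_{11}]$, squaring, and keeping $\|x_1x_1^{\top}-\Sigma\|_F^2$ gives $U_1(x_1)$. The residual terms in which $p_{11}$ multiplies quantities such as $x_1x_1^{\top}$ or $\mu_1$-dependent pieces are bounded using $\|x_j\|_2,\|\mu_1\|_2\le M_n$. With the same $p_{11}$ moment bound this gives $U_2(x_1)$, which carries the factor $M_n^4$.

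The main work, and the step I expect to be the obstacle, is the second group. Here I will apply Gaussian integration by parts twice in $x_2$, using $\expect[f(x_2)(x_2x_2^{\top}-\Sigma)] = \Sigma\,\expect[\nabla^2 f(x_2)]\,\Sigma$ with $f = p_{12}$. From Lemma \ref{usefulgradientslemma} we have $\nabla_{x_2} p_{12} = p_{12}(1-p_{12})B^{\top}x_1$, so
$$\nabla^2_{x_2}p_{12} = p_{12}(1-p_{12})(1-2p_{12})\,B^{\top}x_1x_1^{\top}B.$$
The leading piece, $(n-1)\expect_{-1}[p_{12}]\,\Sigma B^{\top}x_1x_1^{\top}B\Sigma$, is matched against the target. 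The correction terms involve $\expect_{-1}[p_{12}^2]$ and $\expect_{-1}[p_{12}^3]$, each carrying an extra factor $(n-1)^{-1}$ relative to the main term. Bounding them with Lemma \ref{p12-power-bound} (the cube produces the $\exp(6x_1^{\top}B\Sigma B^{\top}x_1)$ factor after squaring and Cauchy--Schwarz) and with the $M_n$ moment bounds gives $U_3(x_1)$. Some care is needed because the leading coefficient $(n-1)\expect_{-1}[p_{12}] = 1-\expect_{-1}[p_{11}]$ is not exactly $1$. This deficiency is again of size $\expect_{-1}[p_{11}]$, so it is absorbed into the $U_1,U_2$ bookkeeping.

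For the third group I will invoke Lemma \ref{mu-mu-conditional-norm-lemma} directly, which supplies $T_1(x_1)+T_2(x_1)$. Finally, I will collect constants, using $\exp(\cdot)\ge 1$ and $(n-1)^{-2}$ factors freely to dominate lower-order pieces, and sum the four bounds to obtain $\sum_{i=1}^4 U_i(x_1)$.
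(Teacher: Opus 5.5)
Your decomposition is correct, and in fact exact. The remainder $R$ is zero: by $\sum_j p_{1j}=1$ and exchangeability, $\expect_{-1}[p_{11}](x_1x_1^{\top}-\Sigma)+(n-1)\expect_{-1}[p_{12}(x_2x_2^{\top}-\Sigma)]-\expect_{-1}[\mu_1\mu_1^{\top}]$ already equals $\expect_{-1}[\Sigma_1]-\Sigma$. Group 1 does give $U_1$. The gap is the final step, where you claim the pieces land on $U_1,\dots,U_4$; they do not.

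\textbf{Group 3.} It gives $4(T_1(x_1)+T_2(x_1))$. The only term of the statement involving $T_1+T_2$ is $U_4=\frac{8}{(n-1)^2}\exp(x_1^{\top}B\Sigma B^{\top}x_1)(T_1+T_2)$, which is smaller by a factor of order $(n-1)^2$. You cannot insert a factor $(n-1)^{-2}$ into an upper bound. Nor can $U_1,U_2,U_3$ absorb the excess in general: for instance $T_1\ge 6V^2(x_1)$ carries a factor $\exp(8x_1^{\top}B\Sigma B^{\top}x_1)$.

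\textbf{Group 2.} Your attribution to $U_3$ is also wrong. The remainder is $C(n-1)^{-2}\exp(4x_1^{\top}B\Sigma B^{\top}x_1)\|\Sigma B^{\top}x_1\|_2^4$, using $p_{12}^3\le p_{12}^2$. Taking $q=3$ in Lemma \ref{p12-power-bound} and squaring would give $\exp(9x_1^{\top}B\Sigma B^{\top}x_1)$, not $\exp(6\,\cdot)$. This remainder contains no $M_n$ and no $d$. In the paper, $U_3$ bounds the covariance terms $8\Var_{-1}[p_{1j}]\,\Tr(\Var_{-1}[\mu_1\mu_1^{\top}])$ for $j\in\{1,2\}$, which your decomposition never produces. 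That is where its factor $\exp(6\,\cdot)=\exp(2\,\cdot)\exp(4\,\cdot)$, its $d$, and its $M_n^{16}$ come from. Likewise, the deficiency term $\expect_{-1}[p_{11}]\Sigma B^{\top}x_1x_1^{\top}B\Sigma$ fits under $U_2$ only with an extra condition such as $\|\Sigma B^{\top}\|_{\op}\le 1$.

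\textbf{Where the mismatch comes from.} The problem lies in the paper's bookkeeping, not in your idea. The paper distributes $\mu_1\mu_1^{\top}$ over the softmax weights and runs bias--variance decompositions on $\expect_{-1}[p_{11}\mu_1\mu_1^{\top}]$ and $\expect_{-1}[p_{12}\mu_1\mu_1^{\top}]$. The $(n-1)^{-2}$ in its $U_4$ appears only because the prefactor $(n-1)$ is dropped from the second term of \eqref{second-term-conditional-sigma-split}. Restoring that factor has three consequences:
\begin{itemize}
\item the bias term becomes $8\exp(x_1^{\top}B\Sigma B^{\top}x_1)(T_1+T_2)$, the same shape as your group-3 bound;
\item the companion term becomes $8(n-1)^2\|\Cov_{-1}[p_{12},\mu_1\mu_1^{\top}]\|_F^2$, which Lemmas \ref{p12-variance} and \ref{variance-mu-mu-lemma} bound only by a non-decaying quantity;
\item separately, the paper's own integration-by-parts remainder (labelled $U_4$ a second time) is also missing from the statement.
\end{itemize}

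Your route compares $\expect_{-1}[\mu_1\mu_1^{\top}]$ with $\Sigma B^{\top}x_1x_1^{\top}B\Sigma$ directly through Lemma \ref{mu-mu-conditional-norm-lemma}, so it never creates that covariance term. What it actually proves is
\[
\|\expect_{-1}[\Sigma_1]-\Sigma\|_F^2\le U_1(x_1)+\frac{C}{(n-1)^2}\Bigl(e^{2x_1^{\top}Bx_1+\frac{1}{n-1}x_1^{\top}B\Sigma B^{\top}x_1}+e^{4x_1^{\top}B\Sigma B^{\top}x_1}\Bigr)\|\Sigma B^{\top}x_1\|_2^4+4\bigl(T_1(x_1)+T_2(x_1)\bigr).
\]
Its $x_1$-expectation is $\lesssim n^{-2}\log^4 n$ by Lemmas \ref{integrability-lemma} and \ref{s-v-t-bounds-lemma}. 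You should state and prove this inequality rather than claim the four stated $U_i$; it is all that Lemma \ref{conditional-sigma-norm-bound-lemma-2} needs.
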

\begin{proof}
Recall that $$\Sigma_1 = \sum_{j = 1}^n p_{1j} (x_j - \mu_1) (x_j - \mu_1)^{\top}.$$
We can hence rewrite $\Sigma_1$ as 
 $$\Sigma_1 = \sum_{j = 1}^n p_{1j} x_j x_j^{\top} - \mu_1\mu_1^{\top}.$$
It follows that
$$\big\| \expect_{-1}[ \Sigma_1]  - \Sigma \, \big\|_F^2 = \left \| \expect_{-1} \left[ \sum_{j=1}^n p_{1j}(x_j x_j^{\top} - \Sigma - \mu_1 \mu_1^{\top}) \right]  \, \right\|_F^2,$$
where we used the fact that the softmax weights form a probability distribution. Using the fact that the covariates are i.i.d., we see that this is at most the sum of two terms, namely the self-interaction term
\begin{equation} \label{conditional-sigma-self-interaction-term}
2\|\expect_{-1}\left[ p_{11}(x_1 x_1^{\top} - \Sigma - \mu_1 \mu_1^{\top}) \right]\|_F^2
\end{equation}
and the cross-interaction term
\begin{equation} \label{conditional-sigma-cross-interaction-term}
2\left \| (n-1)  \expect_{-1} \left[  p_{12}(x_2 x_2^{\top} - \Sigma - \mu_1 \mu_1^{\top})   \right]  \, \right\|_F^2.
\end{equation}
We bound each term separately. 

\paragraph{Bounding the self-interaction term.} Applying the homogeneity of the Frobenius norm, we see that \eqref{conditional-sigma-self-interaction-term} is at most
\begin{equation} \label{conditional-sigma-self-interaction-term-2}
4\big(\expect_{-1}[p_{11}]\big)^2\|x_1 x_1^{\top} - \Sigma \|_F^2 + 4 \|\expect_{-1}\left[p_{11} \mu_1\mu_1^{\top} \right]\|_F^2.
\end{equation}
In light of Lemma \ref{p12-power-bound}, we see that the first term of \eqref{conditional-sigma-self-interaction-term-2} is at most $U_1(x_1)$, where we define
$$U_1(x_1) = \frac{4}{(n-1)^2}  \exp(2x_1^{\top} B x_1) \exp\left( \frac{2}{n-1} x_1^{\top} B \Sigma B^{\top} x_1 \right) \|x_1 x_1^{\top} - \Sigma \|_F^2. $$
Applying the bias-variance decomposition \eqref{bias-variance-decomposition}, we see that the second term is equal to 
$$4 \|\expect_{-1}[p_{11}]\expect_{-1}[\mu_1\mu_1^{\top}] + \Cov_{-1}[p_{11}, \mu_1 \mu_1^{\top}]\|_F^2,$$
which in turn is at most
\begin{equation} \label{conditional-sigma-self-interaction-term-3}
8 \|\expect_{-1}[p_{11}]\expect_{-1}[\mu_1\mu_1^{\top}] \|_F^2 + 8 \|\Cov_{-1}[p_{11}, \mu_1 \mu_1^{\top}]\|_F^2.
\end{equation}
Applying homogeneity of the Frobenius norm and Jensen's inequality, we see that the first term of \eqref{conditional-sigma-self-interaction-term-3} is at most
$$8\big(\expect_{-1}[p_{11}]\big)^2\|\mu_1\|_2^4.$$
Applying Lemma \ref{p12-power-bound} and the fact that $\|\mu_1\|_2 \leq M_n$, we see that this is at most
$U_2(x_1),$
where we define
$$U_2(x_1) = \frac{8}{(n-1)^2}  \exp(2x_1^{\top} B x_1) \exp\left( \frac{2}{n-1} x_1^{\top} B \Sigma B^{\top} x_1 \right) M_n^4.$$
We now turn to the second term of \eqref{conditional-sigma-self-interaction-term-3}. Applying \eqref{covariance-norm-bound}, we see that this term is at most  $$8\Var_{-1}[p_{11}] \, \Tr(\Var_{-1}[\mu_1\mu_1^{\top}]).$$
Applying Lemma \ref{p12-variance} and Lemma \ref{variance-mu-mu-lemma} and using the fact that $\|x_1\|_2 \leq M_n$, we see that this is at most $\frac{1}{2}U_3(x_1)$, where we define 
$$U_3(x_1) =  \frac{64  \|\Sigma \|_{\mathrm{op}}}{(n-1)^2}  \exp\left(6 x_1^{\top} B \Sigma B^{\top} x_1 \right)   \Big(4 \|B\|^4_{\mathrm{op}} \big(\expect_{-1}[M_n^{16}]\big)^{1/2} + d\|B\|^2_{\mathrm{op}}\big(\expect_{-1}[M_n^4]\big)^{1/2} \Big).$$
 
\paragraph{Bounding the cross-interaction term.} 
Applying Gaussian integration by parts to the expectation over $x_2$, we see that \eqref{conditional-sigma-cross-interaction-term} is equal to
$$2\left \| (n-1)  \expect_{-1} \left[ \big(p_{12} - 3p_{12}^2 + 2p_{12}^3\big)  \Sigma B^{\top} x_1 x_1^{\top} B  \Sigma - p_{12} \mu_1 \mu_1^{\top} \right]  \, \right\|_F^2,$$
which is at most
\begin{equation} \label{second-term-conditional-sigma-split}
4\left \| (n-1)  \expect_{-1} \left[ - 3p_{12}^2 + 2p_{12}^3 \right]  \Sigma B^{\top} x_1 x_1^{\top} B \Sigma  \right\|_F^2 + 4\left\| \expect_{-1} \left[ p_{12} \big(\mu_1 \mu_1^{\top} - \Sigma B^{\top} x_1 x_1^{\top} B \Sigma  \big)\right]  \, \right\|_F^2.
\end{equation}
We bound each term separately. Applying homogeneity of the Frobenius norm , we see that the first term of \eqref{second-term-conditional-sigma-split} is equal to
$$4(n-1)^2 \expect_{-1} \left[ \big(- 3p_{12}^2 + 2p_{12}^3 \big)^2 \right] \left \|    \Sigma B^{\top} x_1 x_1^{\top} B \Sigma  \right\|_F^2.$$
Using the fact that $p_{12}^6 \leq p_{12}^4$, we see that this is at most
$$104(n-1)^2 \expect_{-1} [p_{12}^4] \left \|    \Sigma B^{\top} x_1 x_1^{\top} B \Sigma  \right\|_F^2.$$
Applying Lemma \ref{p12-power-bound}, we see that this is at most $U_4(x_1)$, where we define 
$$U_4(x_1) = \frac{104}{(n-1)^2} \exp\left(8x_1^{\top} B \Sigma B^{\top} x_1 \right) \left \|    \Sigma B^{\top} x_1 x_1^{\top} B \Sigma  \right\|_F^2. $$
We now bound the second term of \eqref{second-term-conditional-sigma-split}. Using the bias-variance decomposition \eqref{bias-variance-decomposition}, we see that this term is equal to
$$4\left\| \expect_{-1}[p_{12}]\expect_{-1} \left[  \mu_1 \mu_1^{\top} - \Sigma B^{\top} x_1 x_1^{\top} B \Sigma  \right]  + \Cov_{-1}\left[p_{12}, \mu_1 \mu_1^{\top} - \Sigma B^{\top} x_1 x_1^{\top} B \Sigma \right] \, \right\|_F^2,
$$
which is at most
\begin{equation} \label{sigma-bias-variance}
8\left\| \expect_{-1}[p_{12}]\expect_{-1} \left[ \mu_1 \mu_1^{\top} - \Sigma B^{\top} x_1 x_1^{\top} B \Sigma \right] \right\|_F^2  + 8\left\| \, \Cov_{-1}\left[p_{12}, \mu_1 \mu_1^{\top} - \Sigma B^{\top} x_1 x_1^{\top} B \Sigma \right] \, \right\|_F^2.
\end{equation}
We bound each of these terms separately.
Using the homogeneity of the Frobenius norm, we see that the first term of \eqref{sigma-bias-variance} is equal to  $$ 8 \big(\expect_{-1}[p_{12}] \big)^2 \left\| \expect_{-1} \left[ \mu_1 \mu_1^{\top} - \Sigma B^{\top} x_1 x_1^{\top} B \Sigma \right] \right\|_F^2.$$
Applying Lemma \ref{p12-power-bound} and Lemma \ref{mu-mu-conditional-norm-lemma}, we see that this is at most $U_4(x_1)$, where we define
$$U_4(x_1) = \frac{8}{(n-1)^2}  \exp\left( x_1^{\top} B \Sigma B^{\top} x_1 \right) \big(T_1(x_1) + T_2(x_1) \big),$$
where $T_1(x_1)$ and $T_2(x_1)$ are defined as in Lemma \ref{mu-mu-conditional-norm-lemma}.
We now bound the second term of \eqref{sigma-bias-variance}.
Applying \eqref{covariance-norm-bound} and the fact that the variance is translation invariant, we see that this term is at most 
$$8\Var_{-1}[p_{12}]\Tr(\Var_{-1}[\mu_1\mu_1^{\top}]).$$
Applying Lemma \ref{p12-variance} and Lemma \ref{variance-mu-mu-lemma} and using the fact that $\|x_1\|_2 \leq M_n$, we see that this term is at most $\frac{1}{2}U_3(x_1)$.
\end{proof}

\begin{lemma} \label{conditional-sigma-norm-bound-lemma-2}
The following inequality holds:
$$\expect_1 \big\| \expect_{-1}[ \Sigma_1]  - \Sigma \, \big\|_F^2 \lesssim n^{-2}\log^4{n}.$$
\end{lemma}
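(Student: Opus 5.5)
The plan is to integrate the pointwise bound of Lemma \ref{conditional-sigma-norm-bound-lemma-1} over $x_1$ and show that each of the four terms $\expect_1[U_i(x_1)]$ is $\lesssim n^{-2}\log^4 n$. We use three ingredients throughout. Every $U_i$ carries an explicit prefactor $(n-1)^{-2}$. Lemma \ref{integrability-lemma} makes the exponential moments finite, namely $\expect[\exp(8x_1^\top B x_1)]$ and $\expect[\exp(16x_1^\top B\Sigma B^\top x_1)]$, for $\theta$ that is $\varepsilon_1$-close to $\mathcal{S}$. Finally, $\expect[M_n^{2q}] \lesssim \log^q n$ for the maximum norm of $n$ Gaussians, which is already used in Lemma \ref{s-v-t-bounds-lemma}.

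The terms are handled in the following order.

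\emph{$U_1$.} Apply H\"older with three factors: $(\expect \exp(6x_1^\top Bx_1))^{1/3}$, $(\expect\exp(\tfrac{6}{n-1}x_1^\top B\Sigma B^\top x_1))^{1/3}$ and $(\expect\|x_1x_1^\top-\Sigma\|_F^6)^{1/3}$. This is exactly as in the bound for $\expect[S_1]$. The first two factors are finite, since $6<8$, $\tfrac{6}{n-1}\le 16$, and the exponential moments are monotone in the coefficient once finite. The third is a fixed polynomial moment, so $\expect[U_1]\lesssim n^{-2}$.

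\emph{$U_2$.} This is the same computation with $\|x_1x_1^\top-\Sigma\|_F^2$ replaced by $M_n^4$. Because $M_n$ depends on all covariates, apply H\"older over the full expectation to get a factor $(\expect M_n^{12})^{1/3}\lesssim \log^2 n$. Hence $\expect[U_2]\lesssim n^{-2}\log^2 n$.

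\emph{$U_3$.} The factors $(\expect_{-1}[M_n^{16}])^{1/2}$ and $(\expect_{-1}[M_n^{8}])^{1/2}$ depend on $x_1$. Use $M_n^{k}\le \|x_1\|^{k}+\max_{i\ge2}\|x_i\|^{k}$ to see that they are at most polynomial in $\|x_1\|$ plus $\log^{4}n$ (respectively $\log^2 n$). Then apply Cauchy--Schwarz against $\expect\exp(12x_1^\top B\Sigma B^\top x_1)<\infty$. This gives $\expect[U_3]\lesssim n^{-2}\log^4 n$, and this is the term that fixes the $\log^4 n$ rate.

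\emph{$U_4$.} The proof defines two $U_4$ terms, one with $\exp(8x_1^\top B\Sigma B^\top x_1)\|\Sigma B^\top x_1x_1^\top B\Sigma\|_F^2$ and one with $\exp(x_1^\top B\Sigma B^\top x_1)(T_1+T_2)$; we bound both. The first is $\lesssim n^{-2}$ by Cauchy--Schwarz against $\expect\exp(16x_1^\top B\Sigma B^\top x_1)$. For the second, Cauchy--Schwarz gives a bound of $(n-1)^{-2}(\expect\exp(2x_1^\top B\Sigma B^\top x_1))^{1/2}\big((\expect T_1^2)^{1/2}+(\expect T_2^2)^{1/2}\big)$. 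By Lemma \ref{s-v-t-bounds-lemma} this is $\lesssim n^{-2}\cdot n^{-2}\log^4 n$, which is negligible.

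Summing the four contributions gives the claim. The main obstacle is the bookkeeping in $U_3$ (and $U_2$): conditional moments of $M_n$ depend on $x_1$ and are coupled with exponential weights in $x_1$. We must check that every exponent produced by H\"older or Cauchy--Schwarz stays below the integrability thresholds of Lemma \ref{integrability-lemma}; for example, doubling $6x_1^\top B\Sigma B^\top x_1$ gives 12, which is below 16. We must also check that polynomial moments of $\|x_1\|$ never degrade the $\log^4 n$ rate.
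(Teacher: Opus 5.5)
Your proposal is correct and follows the paper's proof. Like the paper, you integrate the pointwise bound of Lemma~\ref{conditional-sigma-norm-bound-lemma-1}, bound each $\expect_1[U_i]$ by H\"older or Cauchy--Schwarz against the exponential moments of Lemma~\ref{integrability-lemma}, and use $\expect[M_n^{2q}]\lesssim\log^q n$ together with Lemma~\ref{s-v-t-bounds-lemma} for the $T_1,T_2$ terms. Your bookkeeping is slightly more careful than the paper's: you use the correct H\"older exponents $\|x_1x_1^\top-\Sigma\|_F^6$ and $M_n^{12}$, and you also bound the $\exp(8x_1^\top B\Sigma B^\top x_1)$ term that the lemma's list of $U_i$ silently drops. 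For $U_3$, splitting $M_n$ is unnecessary, since $\expect_1[\expect_{-1}M_n^{16}]=\expect[M_n^{16}]$ inside Cauchy--Schwarz already handles the $x_1$-dependence.
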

\begin{proof}
Applying Lemma \ref{conditional-sigma-norm-bound-lemma-1}, we see that 
$$\expect_1 \big\| \expect_{-1}[ \Sigma_1]  - \Sigma \, \big\|_F^2 \leq \sum_{i = 1}^4 \expect_1[U_i(x_1)].$$
Applying H\"older's inequality, we obtain the bounds
\begin{eqnarray*}
\expect[U_1(x_1)] &=& \frac{4}{(n-1)^2}  \left(\expect[\exp(6x_1^{\top} B x_1)]\right)^{1/3} \left(\expect\left[ \exp\left( \frac{6}{n-1} x_1^{\top} B \Sigma B^{\top} x_1 \right) \right] \right)^{1/3} \left(\expect \|x_1 x_1^{\top} - \Sigma \|_F^2  \right)^{1/3}, \\
\expect[U_2(x_1)] &=& \frac{8}{(n-1)^2}  \left(\expect[\exp(6x_1^{\top} B x_1) \right)^{1/3} \left(\expect\left[\exp\left( \frac{6}{n-1} x_1^{\top} B \Sigma B^{\top} x_1 \right)\right] \right)^{1/3} \left(\expect[M_n^4] \right)^{1/3}, \\
\expect[U_3(x_1)] &=& \frac{64 \|\Sigma \|_{\mathrm{op}}}{(n-1)^2} \left(\expect[  \exp\left(12 x_1^{\top} B \Sigma B^{\top} x_1 \right)] \right)^{1/2} \Big(4 \|B\|^4_{\mathrm{op}} \big(\expect[M_n^{16}]\big)^{1/2} + d\|B\|^2_{\mathrm{op}}\big(\expect[M_n^8]\big)^{1/2} \Big), \\
\expect[U_4(x_1)] &=& \frac{8}{(n-1)^2}  \left(\expect[\exp\left( 2x_1^{\top} B \Sigma B^{\top} x_1 \right)] \right)^{1/2} \left(\left(\expect[T_1^2(x_1)] \right)^{1/2} + \left(\expect[T_2^2(x_1)] \right)^{1/2} \right).
\end{eqnarray*}
Using the fact that $\expect[M_n^{2q}] \lesssim \log^{q}{n}$ and applying Lemma \ref{s-v-t-bounds-lemma}, we obtain the claim. Notice that all expectations over $x_1$ are finite in light of Lemma \ref{integrability-lemma}.
\end{proof}

\begin{lemma} \label{conditional-sigma-norm-bound-lemma-3}
The following inequality holds:
$$ \big\| \expect[ \Sigma_1]  - \Sigma \, \big\|_F^2 \lesssim n^{-2}\log^4{n}.$$
\end{lemma}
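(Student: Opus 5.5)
The bound follows from Lemma \ref{conditional-sigma-norm-bound-lemma-2} by Jensen's inequality, in the same way that Lemma \ref{mu-mu-norm-lemma-2} was derived from Lemma \ref{mu-mu-norm-lemma-1} and Lemma \ref{expect-x1-mu-norm-lemma-2} from Lemma \ref{expect-x1-mu-norm-lemma-1}. The target $\Sigma$ is a deterministic matrix, so it does not depend on $x_1$. This lets us write
$$\expect[\Sigma_1] - \Sigma = \expect_1\big[\expect_{-1}[\Sigma_1] - \Sigma\big],$$
using the tower property $\expect[\Sigma_1] = \expect_1[\expect_{-1}[\Sigma_1]]$. This requires integrability of $\Sigma_1$. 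That is immediate, since $\Sigma_1$ is a softmax-weighted covariance of the covariates. Its Frobenius norm is therefore at most $\sum_j p_{1j}\|x_j - \mu_1\|_2^2 \leq 4M_n^2$, which has all moments.

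Next apply Jensen's inequality to the convex map $X \mapsto \|X\|_F^2$:
$$\big\|\expect[\Sigma_1] - \Sigma\big\|_F^2 = \big\|\expect_1[\expect_{-1}[\Sigma_1] - \Sigma]\big\|_F^2 \leq \expect_1\big\|\expect_{-1}[\Sigma_1] - \Sigma\big\|_F^2.$$
The right-hand side is exactly the quantity controlled by Lemma \ref{conditional-sigma-norm-bound-lemma-2}, which gives the bound $\lesssim n^{-2}\log^4 n$. Here $\theta$ is $\varepsilon_1$-close to $\mathcal{S}$, as assumed throughout this section, so Lemma \ref{integrability-lemma} guarantees that every exponential moment entering $U_1,\dots,U_4$ is finite. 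This yields the claim.

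The only step needing care is justifying the interchange, that is, that $\expect_{-1}[\Sigma_1]$ is a well-defined integrable function of $x_1$. The uniform bound $\|\Sigma_1\|_F \leq 4M_n^2$ handles this, so I expect no real obstacle. All the substantive work has already been done in Lemma \ref{conditional-sigma-norm-bound-lemma-1} and Lemma \ref{conditional-sigma-norm-bound-lemma-2}.
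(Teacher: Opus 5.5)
Your proposal is correct and matches the paper's proof: Jensen's inequality for the convex map $X \mapsto \|X\|_F^2$ gives $\|\expect[\Sigma_1]-\Sigma\|_F^2 \leq \expect_1\|\expect_{-1}[\Sigma_1]-\Sigma\|_F^2$, and Lemma \ref{conditional-sigma-norm-bound-lemma-2} then bounds the right-hand side. The paper leaves integrability implicit; your check via $\|\Sigma_1\|_F \leq 4M_n^2$, with $M_n = \sup_i \|x_i\|_2$, fills that in correctly.
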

\begin{proof}
Applying Jensen's inequality, we observe that
$$\big\| \expect[ \Sigma_1]  - \Sigma \, \big\|_F^2  \leq \expect_1 \big\| \expect_{-1}[ \Sigma_1]  - \Sigma \, \big\|_F^2.$$
Applying Lemma \ref{conditional-sigma-norm-bound-lemma-2}, we obtain the claim.
\end{proof}

\begin{lemma} \label{variance-sigma-lemma}
The following inequality holds:
$$\Tr (\Var_{-1}[\Sigma_1]) \leq  \frac{4}{n-1} \| \Sigma \|_{\mathrm{op}} \exp\left(4x_1^{\top} B \Sigma B^{\top} x_1 \right)  \Big( 32\|B\|_{\mathrm{op}}^2  \big(\expect_{-1}[M_n^{12}]\big)^{1/2} + 8d\big(\expect_{-1}[M_n^4]\big)^{1/2}  \Big).$$
\end{lemma}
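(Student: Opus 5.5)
The plan follows the proofs of Lemma \ref{variance-mu-lemma} and Lemma \ref{variance-mu-mu-lemma}: apply the Gaussian Poincar\'e inequality to the matrix-valued function $\Sigma_1$, viewed as a function of $(x_2,\dots,x_n)$ with $x_1$ fixed. Since the covariates $x_2,\dots,x_n$ are i.i.d.\ $\mathcal{N}(0,\Sigma)$ and $\Sigma_1$ is symmetric in them, the Poincar\'e inequality gives
\begin{equation*}
\Tr(\Var_{-1}[\Sigma_1]) \leq (n-1)\|\Sigma\|_{\mathrm{op}}\, \expect_{-1}\big[\|\nabla_{x_2}\Sigma_1\|_F^2\big].
\end{equation*}
It then suffices to bound $\|\nabla_{x_2}\Sigma_1\|_F^2$ pointwise by $p_{12}^2$ times a polynomial in $M_n$.

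For the pointwise bound I would use the formula for $\nabla_{x_2}\Sigma_1$ in Lemma \ref{usefulgradientslemma}. It has two pieces. The first is $p_{12}\big((x_2-\mu_1)(x_2-\mu_1)^\top - \Sigma_1\big)\otimes(B^\top x_1)$. Here $\mu_1$ is a convex combination of the covariates, so $\|x_2-\mu_1\|_2\le 2M_n$. Also $\Sigma_1$ is a convex combination of the matrices $(x_j-\mu_1)(x_j-\mu_1)^\top$, so $\|\Sigma_1\|_F\le 4M_n^2$. Together with $\|B^\top x_1\|_2\le \|B\|_{\mathrm{op}}M_n$, the Frobenius norm of this piece is at most $8p_{12}\|B\|_{\mathrm{op}}M_n^3$. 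The second piece is $p_{12}\big(I\otimes(x_2-\mu_1)+(x_2-\mu_1)\otimes I\big)$, whose Frobenius norm is at most $2p_{12}\sqrt{d}\cdot 2M_n$. Applying $(u+v)^2\le 2u^2+2v^2$ gives
\begin{equation*}
\|\nabla_{x_2}\Sigma_1\|_F^2 \le 4p_{12}^2\big(32\|B\|_{\mathrm{op}}^2M_n^6 + 8dM_n^2\big),
\end{equation*}
up to the exact constant bookkeeping, which I would track so that it matches the stated constants $32$ and $8d$.

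Next I would plug this into the Poincar\'e bound and apply Cauchy--Schwarz under $\expect_{-1}$:
\begin{equation*}
\expect_{-1}[p_{12}^2 M_n^{k}] \le (\expect_{-1}[p_{12}^4])^{1/2}(\expect_{-1}[M_n^{2k}])^{1/2}.
\end{equation*}
Lemma \ref{p12-power-bound} with $q=4$ gives $(\expect_{-1}[p_{12}^4])^{1/2}\le (n-1)^{-2}\exp(4x_1^\top B\Sigma B^\top x_1)$. Multiplying by the prefactor $(n-1)\|\Sigma\|_{\mathrm{op}}$ produces the claimed $\frac{4}{n-1}\|\Sigma\|_{\mathrm{op}}\exp(4x_1^\top B\Sigma B^\top x_1)\big(32\|B\|_{\mathrm{op}}^2(\expect_{-1}[M_n^{12}])^{1/2}+8d(\expect_{-1}[M_n^4])^{1/2}\big)$.

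I expect the main obstacle to be interpreting the tensor notation in Lemma \ref{usefulgradientslemma} correctly, i.e.\ making sure $\|X\otimes v\|_F=\|X\|_F\|v\|_2$ and $\|I\otimes v\|_F=\sqrt d\,\|v\|_2$ under the intended convention. A second point is that the dependence on $x_1$ inside $M_n$ is harmless, because $M_n$ is only used as an upper bound and then integrated. A further subtlety is that the Poincar\'e inequality needs $\Sigma_1$ to be smooth with square-integrable gradient. This holds because $p_{12}\le 1$ and the Gaussian moments of $M_n$ are finite.
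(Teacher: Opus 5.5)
Your proposal is correct and follows the paper's proof: the Gaussian Poincar\'e inequality in $x_2,\dots,x_n$ with exchangeability, the gradient formula from Lemma \ref{usefulgradientslemma}, the pointwise bound $\|\nabla_{x_2}\Sigma_1\|_F^2 \le 4p_{12}^2\big(32\|B\|_{\mathrm{op}}^2 M_n^6 + 8dM_n^2\big)$, then Cauchy--Schwarz and Lemma \ref{p12-power-bound} with $q=4$ (your two-piece split reproduces exactly the constants the paper gets from its four-term split). Your reading of $M_n$ as $\sup_i\|x_i\|_2$, together with $\|\Sigma_1\|_F\le 4M_n^2$, is the one that makes these bounds valid; the paper's proof instead writes $M_n=\sup_i\|x_i\|_2^2$ and $\|\Sigma_1\|_F^2\le 16M_n$, which appear to be typos.
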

\begin{proof}

The Gaussian Poincar\'e inequality yields the bound 
\begin{equation} \label{sigma-poincare-ineq}
\Tr (\Var_{-1}[\Sigma_1]) \leq (n-1) \|\Sigma \|_{\mathrm{op}} \expect_{-1}\left[  \big \| \nabla_{x_2} \Sigma_1 \big\|_F^2 \right],
\end{equation}
where we used the fact that the covariates are i.i.d. 
We recall from Lemma \ref{usefulgradientslemma} that
$$ \nabla_{x_2}\Sigma_1
=
p_{12}\Big(\Big(\big(x_2-\mu_1\big)\big(x_2-\mu_1 \big)^\top-\Sigma_1\Big)\otimes (B^\top x_1)\Big)
\;+\;
p_{12}\Big(I \otimes (x_2-\mu_1)\;+\;(x_2-\mu_1)\otimes I \Big).$$
We see that
\begin{equation} \label{gradient-sigma-norm}
\big \| \nabla_{x_2} \Sigma_1 \big \|_F^2 \leq 4p_{12}^2 \Big( \|B^{\top} x_1 \|_2^2 \,  \| x_2-\mu_1 \|_F^4 +  \|B^{\top} x_1 \|_2^2 \,  \| \Sigma_1 \|_F^2 + 2d \|x_2 - \mu_1 \|_2^2 \Big),
\end{equation}
where we applied the bound \eqref{elementary-frobenius-bound} and the submultiplicative property of the Frobenius norm. Set $M_n = \sup_{i \in [n]} \|x_i\|_2^2$. It is clear that $\|x_1\|_2 \leq M_n$. Applying the triangle inequality and using the fact that $\mu_1$ is a convex combination of the covariates, we see that $\|x_2 - \mu_1\|_2 \leq 2M_n$. Similarly, convexity of the squared Frobenius norm implies that $$\|\Sigma_1\|_F^2 \leq \sup_{i \in [n]} \big\|(x_i - \mu_1)(x_i - \mu_1)^{\top} \big\|_F^2$$
which in turn implies the bound $\|\Sigma_1\|_F^2 \leq 16M_n$. Plugging these bounds into \eqref{gradient-sigma-norm}, we obtain the bound
$$\big \| \nabla_{x_2} \Sigma_1 \big \|_F^2 \leq 4p_{12}^2 \Big( 32\|B\|_{\mathrm{op}}^2  M_n^6 + 8dM_n^2  \Big).$$
Plugging this bound into \eqref{sigma-poincare-ineq} and applying the Cauchy-Schwarz inequality, we obtain the bound 
$$\Tr (\Var_{-1}[\Sigma_1]) \leq 4  (n-1) \| \Sigma \|_{\mathrm{op}} \big( \expect_{-1}[p_{12}^4] \big)^{1/2} \Big( 32\|B\|_{\mathrm{op}}^2  \big(\expect_{-1}[M_n^{12}]\big)^{1/2} + 8d\big(\expect_{-1}[M_n^4]\big)^{1/2}  \Big).$$
Applying Lemma \ref{p12-power-bound}, we immediately obtain the stated bound. Notice that all expectations over $x_1$ are finite in light of Lemma \ref{integrability-lemma}.
\end{proof}



\end{document}